\newcommand{\ie}{i.e.,\ }
\newcommand{\eg}{e.g.,\ }
\newcommand{\hM}{\widehat{M}}
\newcommand{\hm}{\widehat{m}}
\newcommand{\mP}{\mathbb{P}}
\newcommand{\mE}{\mathbb{E}}
\newcommand{\one}{\mathds{1}}
\newcommand{\hmu}{\widehat{\mu}}
\newcommand{\round}{\mathrm{round}}
\newcommand{\cA}{\mathcal{A}}
\newcommand{\cB}{\mathcal{B}}
\newcommand{\cD}{\mathcal{D}}
\newcommand{\cE}{\mathcal{E}}
\newcommand{\cF}{\mathcal{F}}
\newcommand{\cG}{\mathcal{G}}
\newcommand{\cH}{\mathcal{H}}
\newcommand{\cM}{\mathcal{M}}
\newcommand{\cO}{\mathcal{O}}
\newcommand{\cU}{\mathcal{U}}
\newcommand{\strat}{\mathcal{S}}
\DeclareMathOperator*{\argmax}{arg\,max}
\DeclareMathOperator*{\argmin}{arg\,min}
\DeclareMathOperator*{\card}{\#\!}
\newcommand{\rew}{\mathrm{Rew}}
\newcommand{\smallO}{o}
\newcommand{\algoone}[1][]{\texttt{Selfish-Robust~MMAB}#1\ }
\newcommand{\algotwo}[1][]{\texttt{SIC-GT}#1\ }
\newcommand{\algothree}[1][]{\texttt{RSD-GT}#1\ }
\newcommand{\exploone}[1][]{\texttt{Alternate Exploration}#1\ }
\newcommand{\getrank}[1][]{\texttt{GetRank}#1\ }
\newcommand{\estimateM}[1][]{\texttt{EstimateM}#1\ }
\newcommand{\rsd}[1][]{\texttt{ComputeRSD}#1\ }
\newcommand{\listen}[1][]{\texttt{Listen}#1\ }
\newcommand{\prefsignal}[1][]{\texttt{SignalPreferences}#1\ }
\newcommand{\punishsemi}[1][]{\texttt{PunishSemiHetero}#1\ }
\newcommand{\punishhomo}[1][]{\texttt{PunishHomogeneous}#1\ }
\newcommand{\init}[1][]{\texttt{Initialize}#1\ }
\newcommand{\opt}[1][]{\text{OptArms}#1\ }
\newcommand{\meansignal}[1][]{\texttt{CommPhase}#1\ }
\newcommand{\update}[1][]{\texttt{RobustUpdate}#1\ }
\newcommand{\send}[1][]{\texttt{SendMean}#1\ }
\newcommand{\receive}[1][]{\texttt{ReceiveMean}#1\ }
\newcommand{\signalset}[1][]{\texttt{SignalSet}#1\ }
\newcommand{\sendbit}[1][]{\texttt{SendBit}#1\ }
\newcounter{protocol}
\newenvironment{protocol}[1][htb]
  {
   \let\c@algocf\c@protocol
   \begin{algorithm2e}[#1]%
  }{\end{algorithm2e}}
\newtheorem{lemm}{Lemma}
\newtheorem{thm}{Theorem}
\newtheorem{prop}{Proposition}
\newtheorem{defin}{Definition}
\let\expandafter\oldproof\csname\string\proof\endcsname
\let\oldendproof\endproof
\renewenvironment{proof}[1][\proofname.]{%
  \par\noindent{\bfseries\upshape #1\ }%
}{\oldendproof}
\title{Selfish Robustness and Equilibria in  Multi-Player Bandits}
\begin{document}
\maketitle

\begin{abstract}
Motivated by cognitive radios, stochastic multi-player multi-armed bandits gained a lot of  interest recently. In this class of problems, several players simultaneously pull arms and encounter a collision  -- with 0 reward -- if some of them pull the same arm at the same time.
While the cooperative case where players maximize the collective reward (obediently following some fixed protocol) has been mostly considered, robustness  to malicious players is a crucial  and challenging concern. Existing approaches consider only the case of adversarial \textit{jammers} whose objective is to blindly minimize the collective reward.

We shall consider instead the more natural class of selfish players whose incentives are to maximize their individual rewards, potentially at the expense of the social welfare. We provide the first algorithm robust to selfish players (a.k.a.~Nash equilibrium) with a logarithmic regret, when the arm performance is observed.  When collisions are also observed, \textit{Grim Trigger} type of strategies enable some  implicit communication-based algorithms and we construct robust algorithms in two different settings: the homogeneous (with a regret comparable to the centralized optimal one) and heterogeneous cases (for an adapted and relevant notion of regret). We also provide impossibility results when only the reward is observed or when arm means vary arbitrarily among players.
\end{abstract}

\begin{keywords}%
Multi-Armed Bandits, Decentralized Algorithms, Cognitive Radio, Game Theory
\end{keywords}
\section{Introduction}

In the classical stochastic Multi Armed Bandit problem (MAB), a player repeatedly chooses among $K$ fixed actions (a.k.a.~arms). After pulling arm $k \in [K] \coloneqq \lbrace 1, \ldots, K \rbrace$, she receives a random reward in $[0,1]$ of mean $\mu_k$. Her goal is  to maximize her cumulative reward up to some horizon $T \in \mathbb{N}$. The performance of a pulling strategy (or algorithm) is assessed by the growth of  \textit{regret},  i.e., the difference between the highest possible expected cumulative reward and the actual cumulative reward.
Since the means $\mu_k$ are unknown beforehand
, the player trades off gathering information on under-sampled arms  (exploration) vs.\ using her information (exploitation). Optimal solutions are known in the simplest model \citep{LaiRobbins, Agrawal95, Auer2002}. We refer  to \citep{survey1, survey2, survey3} for an extensive study of MAB.
This simple model captures many sequential decisions problems including clinical trials \citep{thompson33, robbins52} and online recommandation systems \citep{Li2010} and has therefore known a large interest in the past decades.

\medskip

Another classical application of  MAB  is cognitive radios \citep{jouini, anandkumar}. In this context, an arm corresponds to a channel on which a player decides to transmit and the reward is its transmission quality. A key feature of this model, is that it involves several players using channels simultaneously. If several players choose the same arm/channel at the same time, then they \textit{collide} 
and receive a null reward. This setting remains somehow simple when a central agent controls simultaneously all players \citep{anantharam, centralized2}, which is far from being realistic.
In reality, the problem is indeed completely decentralized: players are independent, anonymous and cannot communicate to each other. This requires the construction of new algorithms and the development of new techniques dedicated to this \textit{multiplayer bandit} problem. Interestingly, there exist several variants of the base problem, depending on the assumption made on observations/feedback received  \citep{avner14, musicalchair, besson2018, lugosi2018, magesh2019}.
 
More precisely, when  players systematically know whether or not they collide,  this observation actually enables communication between players and a collective regret scaling as in the centralized case is possible, as observed recently \citep{boursier2018, proutiere2019}. Using this idea, it is even possible to asymptotically reach the optimal assignment \citep{GoT2018, tibrewal2019, boursier2019} in the heterogeneous model where the performance of each arm differs among players \citep{kalathil2014, avner15, avner18}. \citet{liu2019} considered the heterogeneous case, when arms also have preferences over players.

\medskip

For the aforementioned result to hold,  a crucial (yet sometimes only implicitly stated) assumption is that all players follow cautiously and meticulously some designed protocols and that none of them tries to free-ride the others by acting greedily, selfishly or maliciously. The concern of designing multiplayer bandit algorithms robust to such players has been raised \citep{attar2012}, but only addressed under the quite restrictive assumption of adversarial players called \textit{jammers}. Those try to perturb as much as possible the cooperative players \citep{wang2015, sawant2018, sawant2019}, even if this is extremely costly to them as well.
Because of this  specific objective, they end up using tailored strategies such as only attacking the top channels.

We focus instead on the construction of algorithms with ``good'' regret guarantees even if one (or actually more) selfish player does not follow the common protocol but acts strategically in order to manipulate the other players in the sole purpose of  increasing her own payoff  -- maybe at the cost of other players. This concept appeared quite early in the cognitive radio literature \citep{attar2012}, yet it is still not understood as  robustness to selfish player is intrinsically different (and even non-compatible) with robustness to jammers, as shown in Section~\ref{sec:jamvsgreedy}. In terms of game theory, we aim at constructing ($\varepsilon$-Nash) equilibria in this repeated game with partial observations.

\medskip

The paper is organized as follows.
Section~\ref{sec:model} introduces notions and concepts of selfishness-robust multiplayer bandits and showcases reasons for the design of robust algorithms.
Besides its state of the art  regret guarantees when collisions are not directly observed, \algoone[,]presented in Section~\ref{sec:algo1}, is also robust to selfish players.
In the more complex settings where only the reward is observed or the arm means vary among players, Section~\ref{sec:harder} shows that no algorithm can guarantee both a sublinear regret and selfish-robustness.
The latter case is due to a more general result for random assignments. Instead of comparing the cumulated reward with the best collective assignment in the heterogeneous case, it is then necessary to compare it with a \textit{good} and appropriate suboptimal assignment, leading to the new notion of \textit{RSD-regret}.

When collisions are always observed, Section~\ref{sec:collision} proposes selfish-robust communication protocols. Thanks to this, an adaptation of the work of \citet{boursier2018} is possible to provide a robust algorithm with a collective regret almost scaling as in the centralized case.
In the heterogeneous case, this communication -- along with other new deviation control and punishment protocols --  is also used to provide a robust algorithm with a logarithmic RSD-regret.

\medskip

Our contributions are thus diverse: on top of introducing notions of selfish-robustness, we provide robust algorithms with state of the art regret bounds (w.r.t. non-robust algorithms) in several settings. This is especially surprising when collisions are observed, since it leads to a near centralized regret. Moreover, we show that such algorithms can not be designed in harder settings. This leads to the new, adapted notion of RSD-regret in the heterogeneous case with selfish players and we also provide a \textit{good} algorithm in this case.
These results of robustness are even more intricate knowing they hold against any possible selfish strategy, in contrast to the known results for jammer robust algorithms.

%

\vspace{-0.5em}
\section{Problem statement} \label{sec:model}
In this section, we describe formally the model of multiplayer MAB and introduce concepts and notions of robustness to selfish players (or equilibria concepts).

\vspace{-0.5em}
\subsection{Model}

We denote the transmission qualities of the channels by $(X_k(t))_{1\leq k \leq K} \in [0,1]$,   drawn i.i.d.\ according to  $\nu_k$  of  expectation $\mu_k$. 
In the following,  arm means are assumed to be different and $\mu_{(i)}$ denotes the \mbox{$i$-th} largest mean, \ie $\mu_{(1)} > \mu_{(2)} > \ldots > \mu_{(K)}$.
At each round $t \in [T]$, all $M$ players  simultaneously pull some arms, choice solely based only on their past own observations with $M \leq K$. We denote by  $\pi^j(t)$ the arm played by player $j$, that generates  the reward
\vspace{-0.5em}
\begin{gather*}
r^j(t) \coloneqq X_{\pi^j(t)}(t) \cdot (1-\eta_{\pi^j(t)}(t)), \\\text{where } \eta_k(t) \coloneqq \one\left({\small \card\{j \in [M] \ |\ \pi^j(t) = k\}>1}\right) \text{ is the collision indicator}. 
\end{gather*}
The performance of an algorithm is measured in terms of regret, i.e.,  the difference between the maximal expected  reward and the algorithm cumulative reward after $T$ steps\footnote{As usual, the fact that the horizon $T$ is known is not crucial \citep{degenne2016}.}:
\vspace{-0.5em}
\begin{small}
\begin{equation*}
R_T \coloneqq T  \sum_{k = 1}^{M} \mu_{(k)} - \sum_{t=1}^T\sum_{j =1}^M \mu_{\pi^j(t)}(t) \cdot (1-\eta_{\pi^j(t)}(t)).
\end{equation*}
\end{small}
In multiplayer MAB, three different observation settings are considered.
\vspace{-0.5em}
\begin{description}\itemsep-0.2em
\item[Full sensing:] each player observes both $\eta_{\pi^j(t)}(t)$ and $X_{\pi^j(t)}(t)$ at each round.
\item[Statistic sensing:] each player observes $X_{\pi^j(t)}(t)$ and $r^j(t)$ at each round, \eg the players first sense the quality of a channel before trying to transmit on it.
\item[No sensing:] each player only observes $r^j(t)$ at each round. 
\end{description}

Players are not able to directly communicate to each other, since it involves significant time and energy cost in practice. Some form of communication is still possible between players through observed collisions and has been widely used in recent literature \citep{boursier2018, boursier2019, tibrewal2019, proutiere2019}.

\vspace{-0.5em}
\subsection{Considering selfish players} \label{sec:jamvsgreedy}

As mentioned in the introduction, the literature focused on adversarial malicious players, a.k.a.~\textit{jammers}, while considering selfish players instead of adversarial ones is as (if not more) crucial. 
These two concepts of malicious players are fundamentally different. Jamming-robust algorithms must stop pulling the best arm if it is being jammed. Against this algorithm, a selfish player could therefore pose as a jammer,   always pull the best arm and be left alone on it most of the time.
On the contrary, an algorithm robust to selfish players has to  actually pull this best arm if jammed by some player in order  to ``punish''  her  so that she does not benefit from deviating from the collective strategy.

\medskip

We first introduce some game theoretic concepts before defining notions of robustness. Each player $j$ follows an individual strategy (or algorithm)~$s_j \in \strat$ which determines her action at each round given her past observations.
We denote by $(s_1, \ldots, s_M)=s \in \strat^M$ the strategy profile of all players and by $(s',s_{-j})$ the strategy profile given by $s$ except for the $j$-th player whose strategy is replaced by $s'$. Let $\rew^j_T(s) $ be the cumulative reward of player $j$ when players play the profile $s$.
As usual in game theory, we consider a single selfish player -- even if the algorithms we propose are robust to several selfish players assuming $M$ is known beforehand (its initial estimation  can easily be tricked by several players).
\vspace{-0.25em}
\begin{defin}
A strategy profile $s\in \strat^M$ is  an $\varepsilon$-Nash equilibrium if for any $s' \in \strat$ and $j \in [M]$:
\vspace{-0.5em}
\begin{small}
\begin{equation*}
\mathbb{E}[\rew^j_T(s',s_{-j})] \leq \mathbb{E}[\rew^j_T(s)] + \varepsilon.
\end{equation*}
\end{small}
\end{defin} \vspace{-0.5em}
This simply states that a selfish player wins at most $\varepsilon$ by deviating from $s_j$. We now introduce a  more restrictive property of stability that involves two points: if a selfish player still were to deviate, this would only incur a small loss to other players. Moreover, if the selfish player wants to incur some considerable loss to the collective players (\eg she is adversarial), then she  also has to incur a comparable loss to herself. Obviously,  an $\varepsilon$-Nash equilibrium is $(0, \varepsilon)$-stable.
\vspace{-0.25em}
\begin{defin}
A strategy profile $s \in \strat^M$ is  $(\alpha, \varepsilon)$-stable if for any $s' \in \strat$, $l \in \mathbb{R}_+$ and $i,j \in [M]$:
\vspace{-0.5em}
\begin{small}
\begin{equation*}
\mathbb{E}[\rew^i_T(s',s_{-j})] \leq \mathbb{E}[\rew^i_T(s)] - l \implies \mathbb{E}[\rew^j_T(s',s_{-j})] \leq \mathbb{E}[\rew^j_T(s)] + \varepsilon - \alpha l .\end{equation*}
\end{small}
\end{defin}
\vspace{-1em}
\subsection{Limits of existing algorithms.}

This section explains why existing algorithms  are not robust to selfish players, \ie are not even $\smallO(T)$-Nash equilibria. Besides justifying the design of new appropriate algorithms, this provides some first insights on the way to achieve robustness. 

\paragraph{Communication between players.}
Many recent algorithms rely on communication protocols between players to gather their statistics. Facing an  algorithm of this kind, a selfish player would communicate fake statistics to the other players in order to keep the best arm for herself. In case of collision, the colliding player(s) remains unidentified, so a selfish player could modify \textit{incognito} the statistics sent by other players, making them untrustworthy.
A way to make such protocols robust to malicious players is proposed in Section~\ref{sec:collision}. Algorithms relying on communication can then be adapted in the Full Sensing setting.

\paragraph{Necessity of fairness}
An algorithm is  fair if all players asymptotically earn the same reward \textit{a posteriori} and not only in expectation. As already noticed \citep{attar2012}, fairness seems to be a significant criterion in the design of selfish-robust algorithms. Indeed, without fairness, a selfish player tries to always be the one with the largest reward \textit{a posteriori}.

For example, against algorithms attributing an arm among the top-$M$ ones to each player \citep{musicalchair, besson2018, boursier2018}, a selfish player could easily rig the attribution to end with the best arm, largely increasing her individual reward.
Other algorithms work on the basis of first come-first served \citep{boursier2018}. Players first explore and when they detect an arm as both optimal and available, they pull it forever. Such an algorithm is unfair and a selfish player could play more aggressively to end her exploration before the others and to commit on an arm, maybe at the risk of  committing on a suboptimal one (but with high probability on the best arm). The risk taken by the early commit is small compared to the benefit of being the first committing player. As a consequence, these algorithms are not $\smallO(T)$-Nash equilibria.
\vspace{-1em}
\section{Statistic sensing setting}
\label{sec:algo1}

In the statistic sensing setting where $X_k$ and $r_k$ are observed at each round, the \algoone algorithm provides satisfying theoretical guarantees.
\vspace{-0.5em}
\subsection{Description of \algoone}
\vspace{-1em}
\begin{algorithm2e}[h] 
    \DontPrintSemicolon
    \KwIn{$T, \gamma_1\coloneqq \frac{13}{14}, \gamma_2\coloneqq \frac{16}{15}$}
    $\beta \gets 39$;\quad $\hM, t_m \gets \estimateM(\beta, T)$ \;
    Pull $k \sim \mathcal{U}(K)$ until round $\frac{\gamma_2}{\gamma_1} t_m$ \tcp*{first waiting room}
	$j \gets \getrank(\hM, t_m, \beta, T)$ and
	pull $j$ until round $\left(\frac{\gamma_2}{\gamma_1^2 \beta^2 K^2} + \frac{\gamma_2^2}{\gamma_1^2}\right)\ t_m$ \; 
	Run $\exploone(\hM, j)$ until $T$\;
    \caption{\label{algo:algo1}\algoone}
\end{algorithm2e}
\vspace{-0.5em}
A global description of \algoone is given by Algorithm~\ref{algo:algo1}. The pseudocodes of \estimateM[,]\getrank and \exploone are respectively given by Protocols~\ref{proto:estimM}, \ref{proto:getrank} and Algorithm~\ref{algo:explo1} in Appendix~\ref{app:algo1} due to space constraints.

\estimateM and \getrank respectively estimate the number of players $M$ and attribute ranks in $[M]$ among the players. They form the initialization phase, while \exploone optimally balances between exploration and exploitation.
\vspace{-0.5em}
\subsubsection{Initialization phase}
\label{sec:estimm}
Let us first introduce the following quantities:\vspace{-0.5em}
\begin{itemize}\itemsep0em
\item $N_k^j(t) =\lbrace t' \leq t \ | \ \pi^j(t')=k \text{ and } X_k(t') >0 \rbrace$ are  rounds when player $j$ observed $\eta_k$.
\item $C_k^j(t) = \lbrace t' \in N_k^j(t) \ | \ \eta_k(t') = 1 \rbrace$ are  rounds when player $j$ observed a collision.
\item $\hat{p}_k^j(t) = \sfrac{\card C_k^j(t) }{\card N_k^j(t)}$ is the empirical probability to collide on the arm $k$ for player~$j$.
\end{itemize}


During the initialization, the players estimate $M$ with large probability as given by Lemma~\ref{lemma:estim1} in Appendix~\ref{app:algo1descr}. Players first pull uniformly at random in $[K]$. As soon as $\card N_k^j \geq n$ for any $k \in [K]$ and some fixed $n$, player $j$ ends the \estimateM protocol and estimates $\hM$ as the closest integer to $1 + \log(1-\sfrac{\sum_k \hat{p}_k^j(t_M)}{K})/\log( 1- \frac{1}{K})$.
This estimation procedure is the same as the one of \citet{musicalchair}, except for the following features:

 i) Collisions indicators are not always observed, as we consider statistic sensing here. For this reason, the number of observations of $\eta_k$ is random. The stopping criterion $\min_k \card N_k^j(t) \geq n$ ensures that players don't need to know $\mu_{(K)}$ beforehand, but they also do not end \estimateM simultaneously. This is why a \textit{waiting room} is needed, during which a player continues to pull uniformly at random to ensure that all players are still pulling uniformly at random if some player is still estimating $M$.
 
ii)  The collision probability is not averaged over all arms, but estimated for each arm individually, then averaged. This is necessary for robustness as explained in Appendix~\ref{app:algo1}, despite making the estimation longer.


\paragraph{Attribute ranks.} After this first procedure, players then proceed to a \textit{Musical Chairs} \citep{musicalchair} phase to attribute ranks among them as given by Lemma~\ref{lemma:getrank} in Appendix~\ref{app:algo1descr}. Players sample uniformly at random in $[M]$ and stop on an arm $j$ as soon as they observe a positive reward. The player's rank is then $j$ and only attributed to her. Here again, a \textit{waiting room} is required to ensure that all players are either pulling uniformly at random or only pulling a specific arm (corresponding to their rank) during this procedure. During this second waiting room, a player thus pulls the arm corresponding to her rank.

\vspace{-0.5em}
\subsubsection{Exploration/exploitation}


After the initialization,  players  know $M$ and have different ranks. They enter the second phase, where they follow \exploone[,]inspired by \citet{proutiere2019}. Player $j$ sequentially pulls arms in $\cM^j(t)$, which is the ordered list of her $M$ best empirical arms, unless she has to pull her $M$-th best empirical arm. In that case, she instead chooses at random between actually pulling it or pulling an arm to explore (any arm not in $\cM^j(t)$ with an upper confidence bound larger than the $M$-th best empirical mean, if there is any). 

Since  players proceed in a shifted fashion, they never collide when $\cM^j(t)$ are the same for all~$j$. Having different $\cM^j(t)$ happens in expectation a constant (in $T$) amount of times, so that the contribution of collisions  to the  regret is  negligible.

\vspace{-0.5em}
\subsection{Theoretical results}

This section provides theoretical guarantees of \algoone[.]Theorem~\ref{thm:algoone} first presents guarantees in terms of regret. Its proof is given in Appendix~\ref{app:regretalgo1}.
\begin{thm}
\label{thm:algoone}
The collective regret of \algoone is bounded as
\begin{small}
\begin{equation*}
\mE[R_T] \leq M \sum_{k>M} \frac{\mu_{(M)} - \mu_{(k)}}{\mathrm{kl}(\mu_{(k)}, \mu_{(M)})}\log(T) + \cO \left( \frac{MK^3}{\mu_{(K)}} \log(T)\right).
\end{equation*}
\end{small}
\end{thm}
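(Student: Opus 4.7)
I would decompose $R_T$ into three contributions: (i) the regret incurred during the initialization (\estimateM, \getrank, and the two \emph{waiting rooms}), (ii) the exploration cost of \exploone on suboptimal arms, and (iii) collisions occurring during \exploone. The first contribution produces the additive $\cO(MK^3\log(T)/\mu_{(K)})$ term; the second produces the leading KL-based term; the third is absorbed in the $\cO(\cdot)$ remainder.

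For (i), I would invoke Lemmas~\ref{lemma:estim1} and \ref{lemma:getrank} to argue that with probability $1-\cO(1/T)$ all players agree on $\hM = M$ and hold distinct ranks in $[M]$. The stopping rule $\min_k \card N_k^j(t) \geq n$ with $n = \Theta(\log T)$ together with the observation that each uniform pull of arm $k$ independently yields $\{X_k>0\}$ with probability $\mu_k \geq \mu_{(K)}$ forces \estimateM to terminate, up to a high-probability concentration argument, by round $t_m = \cO(K\log(T)/\mu_{(K)})$. The two waiting rooms inflate the initialization length by the multiplicative factors appearing in Algorithm~\ref{algo:algo1}, the dominant one being $\Theta(K^2)$, so the initialization has length $\cO(K^3\log(T)/\mu_{(K)})$. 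Upper-bounding each round by a collective loss of $M$ gives the second term of the bound.

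For (ii), I would adapt the KL-UCB analysis of \citet{proutiere2019} to the shifted schedule of \exploone. Conditional on the success of the initialization, the rotation of pulls ensures that, whenever every $\cM^j(t)$ equals the true set of top-$M$ arms, the $M$ players cover these arms without collision. A suboptimal arm $k>M$ is therefore pulled only when some player is at her exploration slot and the upper confidence index of $k$ exceeds her current $M$-th empirical best. A standard KL-UCB argument (Garivier–Cappé style) then bounds the expected number of such explorations, per player and per arm $k$, by $\log(T)/\mathrm{kl}(\mu_{(k)}, \mu_{(M)}) + \cO(1)$. Each such pull costs at most $\mu_{(M)} - \mu_{(k)}$ in collective regret, and summing over the $M$ players and the suboptimal arms yields the leading term.

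Finally, for (iii), I would show that the event $\{\exists j,\ \cM^j(t) \ne \cM^1(t)\}$ has probability summable in $t$: it requires at least one player's empirical top-$M$ set to be wrong, which after $\Omega(\log t)$ observations per arm occurs with probability $\cO(t^{-\alpha})$ for some $\alpha>1$ by Chernoff-type concentration on the empirical means. Consequently the expected number of rounds with any collision inside \exploone is $\cO(1)$ in $T$, so this contribution is absorbed in the remainder. The main obstacle I anticipate is the coupling induced by the shifted schedule on the stopping times of the different players' exploration events: one must show that the per-arm/per-player bound survives the random choice of ``pull $M$-th best vs.\ explore'', which I would handle by a pigeon-hole argument attributing each exploration of $k$ to a unique player whose UCB triggered it, thereby avoiding any loss of constants beyond the stated $\cO(\cdot)$.
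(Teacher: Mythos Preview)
Your decomposition and overall route are the paper's. Two points need correction.

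\medskip
\textbf{(i) Initialization accounting.} You misread the constants. The threshold in \estimateM is $n=\beta^2 K^2\log T$, not $\Theta(\log T)$; since each uniform pull on arm $k$ yields an observation of $\eta_k$ with probability $\alpha_k/K\ge \mu_{(K)}/K$, the stopping time $t_m$ is already $\Theta\!\big(Kn/\mu_{(K)}\big)=\Theta\!\big(K^3\log T/\mu_{(K)}\big)$. The factors in Algorithm~\ref{algo:algo1} that you read as $\Theta(K^2)$ are in fact $\cO(1)$ (the $\beta^2K^2$ there sits in a \emph{denominator}). So the waiting rooms do not supply the $K^2$; the stopping rule does. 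Your final order $T_0=\cO(K^3\log T/\mu_{(K)})$ is correct, but the path to it is wrong and would not survive a careful write-up.

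\medskip
\textbf{(iii) Collision term.} The assertion ``after $\Omega(\log t)$ observations per arm, $\{\cM^j(t)\neq\cM^*\}$ has probability $\cO(t^{-\alpha})$'' hides the real work: nothing in \exploone guarantees $\Omega(\log t)$ pulls on every arm by time $t$. Arms outside $\cM^j(t)$ are pulled only when they enter $\cB^j(t)$, which depends on their kl-UCB index. The paper (following \citet{proutiere2019}) does not argue via a uniform $\Omega(\log t)$ count; it splits $\cA^j\cup\cD^j\subset\cD^j\cup\cE^j\cup\cG^j$ and controls each piece with the forced-sampling Lemma~\ref{lemma:proutiere1} (arms in $\cM^j(t)$ are sampled with probability at least $1/(2M)$ each block, and a true top-$M$ arm excluded from $\cM^j(t)$ is handled through the UCB-undershoot event $\cE^j$ via Lemma~\ref{lemma:proutiere2}). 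Your Chernoff sketch would go through for $\cD^j$, but not for the event that a genuine top-$M$ arm is displaced by a suboptimal one; for that you need either the forced-sampling lemma or an equivalent device.

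\medskip
\textbf{Minor.} The ``coupling'' worry you raise in (ii) is a non-issue: each player's exploration count on a suboptimal arm $k$ is bounded by the standard kl-UCB argument applied to her \emph{own} estimates (Lemma~\ref{lemma:ucb2}), and the paper simply sums these $M$ individual bounds. No pigeon-hole attribution across players is needed.
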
 
It can also be noted from Lemma~\ref{lemma:ucb1} in Appendix~\ref{app:regretalgo1} that the regret due to \exploone is $M \sum_{k>M} \frac{\mu_{(M)} - \mu_{(k)}}{\mathrm{kl}(\mu_{(k)}, \mu_{(M)})}\log(T) + \smallO(\log(T))$, which is known to be optimal for algorithms using no collision information \citep{besson2019}. \exploone thus gives an optimal algorithm under this constraint, if $M$ is already known and ranks already attributed (as the $\cO(\cdot)$ term in the regret is the consequence of their estimation).

On top of  good regret guarantees, \algoone is robust to selfish behaviors as highlighted by Theorem \ref{thm:robust1} (whose proof is deterred to  Appendix~\ref{app:statisticrobust1}).


\begin{thm}\label{thm:robust1} Playing \algoone is an $\varepsilon$-Nash equilibrium and is ${(\alpha, \varepsilon)\text{-stable}}$ 
\begin{equation*}
\text{with} \quad {\textstyle \varepsilon = \sum_{k>M} \frac{\mu_{(M)} - \mu_{(k)}}{\mathrm{kl}(\mu_{(k)}, \mu_{(M)})}\log(T) + \cO\left(\frac{\mu_{(1)}}{\mu_{(K)}} K^3 \log(T)\right)} \quad \text{and} \quad {\textstyle
\alpha = \frac{\mu_{(M)}}{\mu_{(1)}}}.
\end{equation*}
\end{thm}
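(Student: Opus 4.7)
The $\varepsilon$-Nash property is the $l=0$ case of the $(\alpha,\varepsilon)$-stability, so I would establish stability directly. The key structural fact is that in the statistic sensing setting each cooperative player observes $X_{\pi^i(t)}(t)$ regardless of collisions, so the empirical arm means she constructs -- and hence her \exploone\ schedule -- depend only on the arm realisations $(X_k(t'))_{k,t'\leq t}$ and her own past pulls, not on the selfish player's actions after initialization. Writing $\sigma_j(t)$ for the arm prescribed to the $j$-th ranked player, on the high-probability event (Lemma~\ref{lemma:ucb1}) that all cooperative players have recovered the true top-$M$, the shifted cycling of \exploone\ ensures $\{\pi^i(t):i\neq j\}=[M]\setminus\{\sigma_j(t)\}$ on every non-exploration round.

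By Lemmas~\ref{lemma:estim1} and~\ref{lemma:getrank}, \estimateM, \getrank\ and the two waiting rooms together last at most $\cO(K^3\log(T)/\mu_{(K)})$ rounds with probability $1-\cO(1/T)$, \emph{regardless} of the selfish strategy; the per-arm collision estimator $\hat p^j_k$ is essential here, as it prevents a selfish player from biasing the cooperative estimate of $M$ by concentrating pulls on a single arm. Bounding the selfish reward during this phase by $\mu_{(1)}$ per round yields the $\cO(\mu_{(1)}K^3\log(T)/\mu_{(K)})$ term of $\varepsilon$. In the exploitation phase, the conditional independence mentioned above gives
\begin{equation*}
\mE\bigl[\mu_{\pi^j(t)}(1-\eta_{\pi^j(t)}(t))\,\big|\,\mathcal{F}_t\bigr]\leq \max_{k\notin A_t}\mu_k,\qquad A_t:=\{\pi^i(t):i\neq j\}.
\end{equation*}
On a non-exploration round this bound equals $\mu_{\sigma_j(t)}$, exactly the cooperative per-round reward; on an exploration round of some cooperative player $i\neq j$ the freed arm has empirical mean $\mu_{(M)}<\mu_{\sigma_j(t)}$, so no extra gain is possible. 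The only remaining source of deviation gain is that the selfish player skips her \emph{own} exploration and plays her scheduled top-$M$ arm instead, earning $\mu_{(M)}-\mu_k$ each round she would otherwise explore arm $k>M$; summing with the UCB count $\cO(\log(T)/\mathrm{kl}(\mu_{(k)},\mu_{(M)}))$ extracted from the proof of Theorem~\ref{thm:algoone} yields the first term $\sum_{k>M}\tfrac{\mu_{(M)}-\mu_{(k)}}{\mathrm{kl}(\mu_{(k)},\mu_{(M)})}\log(T)$ of $\varepsilon$ and completes the Nash bound.

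For $(\alpha,\varepsilon)$-stability, fix $i\neq j$ and assume player $i$ suffers an expected loss $l$. Any exploitation-phase loss of $i$ must come from collisions between selfish and $i$ (either both on $\sigma_i(t)$ or both on the same exploratory arm), and each such collision costs $i$ at most $\mu_{(1)}$; letting $C_i$ be their number, $l\leq \mu_{(1)}\mE[C_i]+\cO(\log(T))$. Symmetrically, at every such collision the selfish player herself receives $0$ instead of her cooperative reward $\mu_{\sigma_j(t)}\geq\mu_{(M)}$, so the gain bound above is shifted down by at least $\mu_{(M)}\mE[C_i]\geq \alpha l-\cO(\log(T))$, which delivers the stability inequality after absorbing the extra $\cO(\log(T))$ into $\varepsilon$. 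The hardest step is the robustness of the initialization: without the per-arm estimator $\hat p^j_k$ and both waiting rooms, a selfish player could bias $\hM$ or jump the queue to commit on the best arm, and verifying that this cannot happen under an arbitrary selfish deviation is the most delicate part of the argument.
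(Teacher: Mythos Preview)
Your proposal is correct and follows essentially the same route as the paper's own proof. Both arguments hinge on the observation that \exploone never uses collision information, so the cooperative players' schedules are unaffected by the deviator after initialization; both then bound the selfish per-round reward by the value of the single top-$M$ arm left free by the shifted cycling, and for stability both compare the collision cost $\leq\mu_{(1)}$ to player $i$ against the foregone reward $\geq\mu_{(M)}$ of the selfish player, yielding $\alpha=\mu_{(M)}/\mu_{(1)}$. The only organizational difference is that the paper proves the Nash bound first and then invokes it inside the stability proof (comparing $s'$ to the optimal deviation $s^*$ and then $s^*$ to $s$), whereas you present stability directly and read off Nash as the $l=0$ case; this is cosmetic, since your stability inequality still implicitly relies on the same Nash-type upper bound for the baseline $\varepsilon$.
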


These points are proved for an \textit{omniscient} selfish player (knowing all the parameters beforehand). This is a very strong assumption and a real player would not be able to win as much by deviating from the collective strategy. 
Intuitively, a selfish player would need to explore sub-optimal arms as given by the known individual lower bounds. However, a selfish player can actually decide to not explore but deduce the exploration of other players from collisions.

\vspace{-1em}
\section{On harder problems}
\label{sec:harder}
\vspace{-0.5em}
Following the positive results of the previous section (existence of robust algorithms) in the homogeneous case with statistical sensing, we now provide in this section  impossibility results for both no sensing and heterogeneous cases.
By showing its limitations, it also suggests a proper way to consider the heterogeneous problem in the presence of selfish players.
\vspace{-0.5em}
\subsection{Hardness of no sensing setting}
%

\begin{thm}
\label{thm:nosensing1}
In the no sensing setting, there is no profile of strategy $s$ such that,  for all problem parameters $(M, \pmb{\mu})$, $ \mE[R_T]= \smallO(T)$ and  $s$ is an $\varepsilon(T)$-Nash equilibrium with $ \varepsilon(T) =\smallO(T)$.
\end{thm}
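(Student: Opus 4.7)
The plan is to show that no strategy profile can be simultaneously sublinear-regret and $\smallO(T)$-Nash in the no sensing setting by exhibiting an instance where Nash forces a behaviour that regret forbids. The key fact exploited is that a no-sensing player only observes $r^j(t)$, so the reward distribution ``arm $k$ has mean $0$'' and ``arm $k$ has positive mean but is being hogged by a peer'' coincide (both equal a point mass at $0$). To leverage this, I take $(M,K)=(2,3)$, the true instance $\pmb{\mu}=(\mu_1,\mu_2,\mu_3)$ with $\mu_1>\mu_2>\mu_3>0$, and the corrupted companion $\pmb{\mu}'=(0,\mu_2,\mu_3)$.

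On $\pmb{\mu}$, sublinear regret gives total expected reward $(\mu_1+\mu_2)T-\smallO(T)$, so some player $j^\star$ has individual expected reward at most $\tfrac12(\mu_1+\mu_2)T+\smallO(T)$. Consider her deviation $s'$: always pull arm $1$. Her reward becomes $\mu_1(T-\mE[N_1^{-j^\star}])$ where $N_1^{-j^\star}$ is the number of arm-$1$ pulls by the other player under this deviation. The $\smallO(T)$-Nash condition then forces
\begin{equation*}
\mE[N_1^{-j^\star}]\;\ge\;\tfrac{\mu_1-\mu_2}{2\mu_1}\,T-\smallO(T)\;=\;\Omega(T).
\end{equation*}
Because of no sensing, the non-deviator's observation process under this deviation on $\pmb{\mu}$ is distributed identically to her observation process when $j^\star$ runs the same deviation on $\pmb{\mu}'$, so $N_1^{-j^\star}$ has the same distribution in the two deviation runs and the $\Omega(T)$ lower bound carries over to the corrupted deviation run.

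On the other hand, in the non-deviation run of $\pmb{\mu}'$ under $s$, sublinear regret yields $\mE[\sum_j N_1^j]=\smallO(T)$ and $\smallO(T)$ expected collisions on arms $2,3$: since arm $1$ has mean $0$ and $K=M+1$, the regret cost simultaneously bounds the total pulls of arm $1$ and the coordination losses on arms $2,3$. I would then transport this $\smallO(T)$ bound from the non-deviation run of $\pmb{\mu}'$ to its deviation run (where $j^\star$ hogs arm $1$) by coupling both executions with shared random draws $X_k(t)$ and shared algorithmic seeds. Under this coupling, the non-deviator's observation on time step $t$ can only differ between the two executions when, in the non-deviation run, she would collide with $j^\star$ on arm $2$ or $3$; the expected count of such time steps is $\smallO(T)$. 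Concluding this step gives $\mE[N_1^{-j^\star}]=\smallO(T)$ in the deviation run, contradicting the $\Omega(T)$ lower bound.

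The main obstacle is making this last coupling quantitative: a single diverging observation can cascade into arbitrarily different action trajectories, so ``observations differ on $\smallO(T)$ rounds in expectation'' does not immediately yield an $\smallO(T)$ bound on $|\mE_{\text{dev}}[N_1^{-j^\star}]-\mE_{\text{non-dev}}[N_1^{-j^\star}]|$ via a naive coupling. I would close the gap by a change-of-measure argument, bounding the KL divergence between the two observation processes by the expected count of diverging rounds and then controlling the total variation between the distributions of $N_1^{-j^\star}$; or, more robustly, by replacing the constant deviation ``always arm $1$'' with a randomised hog/release schedule chosen so that the observation marginals on the true and corrupted instances literally coincide, which removes the cascading issue altogether.
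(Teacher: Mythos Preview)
Your reduction has a genuine gap, which you correctly name but do not close: transporting the $\smallO(T)$ bound on arm-$1$ pulls from the \emph{non-deviation} run on $\pmb{\mu}'$ to the \emph{deviation} run on $\pmb{\mu}'$. The first divergence between the two coupled trajectories is indeed a collision on arm $2$ or $3$, and the expected \emph{number} of such collisions is $\smallO(T)$; but you need $\mE[T-\tau]=\smallO(T)$ where $\tau$ is the \emph{first} such collision, and the former does not imply the latter (e.g.\ a single collision occurring at time $T/2$ with probability $1/2$ gives one expected collision yet $\mE[T-\tau]=\Theta(T)$). After $\tau$ the trajectories can diverge arbitrarily, so the naive coupling bound $N_1^{-j^\star}(\text{dev})\le N_1^{-j^\star}(\text{non-dev})+(T-\tau)$ is useless. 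Your suggested KL repair would have to control the divergence between two processes whose coupling structure depends on $j^\star$'s \emph{own} history under $s$, and the ``randomised hog/release'' idea is not made precise enough to evaluate; as stated, making player $i$'s observation marginals coincide with those under $s$ would force $j^\star$ to mimic $s_{j^\star}$, which destroys the hogging gain.

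The paper sidesteps the whole issue with one extra move: it changes not only $\pmb{\mu}$ but also $M$. The comparison world is $(M-1,\pmb{\mu}')$ with the deviator \emph{removed} entirely. In the no-sensing model, the $M-1$ cooperative players facing a deviator who always sits on arm $1$ in $(M,\pmb{\mu})$ observe exactly the same reward process as $M-1$ players running $s$ alone in $(M-1,\pmb{\mu}')$: on arm $1$ the reward is identically $0$ either way (collision in one world, zero mean in the other), and on every other arm the deviator is absent in both worlds. Hence the distribution of $N_1^{-j^\star}$ is \emph{literally the same} in the deviation run and in the $(M-1,\pmb{\mu}')$ non-deviation run, so the sublinear-regret hypothesis applied to $(M-1,\pmb{\mu}')$ directly gives $\mE[N_1^{-j^\star}]=\smallO(T)$. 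No coupling, no cascading, no KL. This is legitimate because the hypothesis quantifies over all parameters $(M,\pmb{\mu})$, so in particular $s$ must have sublinear regret on the $(M-1)$-player instance. Your choice to hold $M=2$ fixed in both worlds is exactly what creates the obstacle; dropping the deviator from the comparison world removes it.
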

\begin{proof}
Consider a strategy $s$ verifying the first property and a problem instance $(M, \pmb{\mu})$ where the selfish player only pulls the best arm. Let $\pmb{\mu'}$ be the mean vector $\pmb{\mu}$ where $\mu_{(1)}$ is replaced by $0$. Then, because of the considered observation model, the cooperative players can not distinguish the two worlds $(M, \pmb{\mu})$ and $(M-1, \pmb{\mu'})$. Having a sublinear regret in the second world implies $\smallO(T)$ pulls on the arm $1$ for the cooperative players. So in the first world, the selfish player will have a reward in $\mu_{(1)} T - \smallO(T)$, which is thus a linear improvement in comparison with following $s$ if $\mu_{(1)} > \mu_{(2)}$.
\end{proof}

Theorem~\ref{thm:nosensing1} is proved for a selfish players who knows the means $\pmb{\mu}$ beforehand, as the notion of Nash equilibrium prevents against any possible strategy, which includes committing to an arm for the whole game. The knowledge of $\pmb{\mu}$ is actually not needed, as a similar result holds for a selfish player committing to an arm chosen at random when the best arm is $K$ times better than the second one.
The question of existence of robust algorithms remains yet open if we restrict selfish strategies to more \textit{reasonable} algorithms.

\subsection{Heterogeneous model}

We consider the full sensing heterogeneous model, where player $j$ receives the reward~$r^j(t) \coloneqq X_{\pi^j(t)}^j(t) (1-\eta_{\pi^j(t)})$ at round~$t$, with $X_k^j \overset{\text{\tiny i.i.d.}}{\sim} \nu_k^j$ of mean $\mu_k^j$. The arm means here vary among the players. This models that transmission quality depends on individual factors such as the localization.

\subsubsection{A first impossibility result}
\begin{thm}
\label{thm:heterimposs}
If the regret is compared with the optimal assignment, there is no strategy $s$  such that, for all problem parameters $\pmb{\mu}$, $ \mE[R_T] =\smallO(T)$ and $s$ is an $\varepsilon(T)$-Nash equilibrium with $\varepsilon(T) = \smallO(T)$.
\end{thm}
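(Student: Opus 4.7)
I would follow the same spirit as the proof of Theorem~\ref{thm:nosensing1}, but replace the ``the selfish player commits to the best arm'' construction by an argument that exploits heterogeneity: I build two problem instances that are indistinguishable from the cooperative player's viewpoint yet have different unique optimal assignments, one of which gives the selfish player her preferred arm. A sublinear-regret algorithm must converge to the optimum in both worlds and can therefore be ``tricked'' by the selfish player into implementing the favorable assignment in the unfavorable world.

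Concretely, I would take $M=K=2$ and consider the two instances
\[
\pmb{\mu}:\ (\mu_1^1,\mu_2^1)=(1,\tfrac{4}{5}),\ (\mu_1^2,\mu_2^2)=(1,\tfrac{1}{4}), \qquad \pmb{\mu}':\ (\mu_1'^{1},\mu_2'^{1})=(1,0),\ (\mu_1'^{2},\mu_2'^{2})=(1,\tfrac{1}{4}),
\]
so that player~$2$'s arms are distributed identically under both instances. The unique optimal assignment under $\pmb{\mu}$ is $(\pi^1,\pi^2)=(2,1)$ with value $9/5$, while under $\pmb{\mu}'$ it is $(1,2)$ with value $5/4$. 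Assuming $\mE[R_T(s)]=\smallO(T)$ on both instances, the bounded suboptimality gap forces the expected number of rounds spent in any non-optimal pair to be $\smallO(T)$, so under $s$ in world $\pmb{\mu}$ player~$1$ is assigned arm~$2$ for $T-\smallO(T)$ rounds and her expected cumulative reward is at most $\tfrac{4}{5}T+\smallO(T)$.

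The deviation $s'$ I would use for player~$1$ in world $\pmb{\mu}$ is to ignore her real observations and instead run $s_1$ on an internally generated stream of i.i.d.\ fake rewards drawn from $\nu_k^{1,\pmb{\mu}'}$ every time the simulated strategy selects arm~$k$. Her actual sequence of pulls then has the same law as the pulls of $s_1$ in a genuine $\pmb{\mu}'$-execution. Since player~$2$ only sees her own rewards (identically distributed under both instances) and her collisions with player~$1$, the joint law of player~$2$'s observations is the same as in $\pmb{\mu}'$, so $s_2$ reacts exactly as in a $\pmb{\mu}'$-execution. Consequently the realized assignment in $\pmb{\mu}$ under the deviation is the $\pmb{\mu}'$-optimum, giving player~$1$ arm~$1$ for $T-\smallO(T)$ rounds with no collisions, hence expected cumulative reward $T-\smallO(T)$. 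The deviation gain is at least $T/5-\smallO(T)=\Omega(T)$, contradicting $\varepsilon(T)$-Nash with $\varepsilon(T)=\smallO(T)$.

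The delicate step, and essentially the only non-mechanical one, is the coupling: one must verify that $s'$ is a legitimate strategy in $\strat$ (it uses only player~$1$'s own feedback and private randomness) and that the joint distribution of player~$1$'s pulls, player~$2$'s pulls, and player~$2$'s rewards is truly identical to that of a $\pmb{\mu}'$-execution of $s$. Once this equivalence is granted, the quantitative gap follows directly from the $\smallO(T)$-regret bound applied separately to the two worlds.
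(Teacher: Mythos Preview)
Your proof is correct and follows essentially the same approach as the paper: construct two instances differing only in the selfish player's mean vector so that the optimal assignment switches, and let the selfish player deviate by running $s_j$ on internally simulated rewards from the favorable instance, which the cooperative players cannot distinguish from a genuine execution there. The paper's version is stated more tersely (it does not spell out the coupling you highlight) and uses slightly different constants, but the construction and the key indistinguishability argument are identical.
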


\begin{proof}
Assume $s$ satisfies these properties and consider a problem instance $\pmb{\mu}$ such that the selfish player unique best arm $j_1$ has mean $\mu^j_{(1)} = 1/2$ and the difference between the optimal assignment utility and the utility of the best one assigning arm $j_1$ to $j$ is $1/3$.

Such an instance is of course possible. Consider a selfish player $j$ playing exactly the strategy $s_j$ but as if her reward vector $\pmb{\mu^j}$ was actually $\pmb{\mu'^j}$ where $\mu_{(1)}^j$ is replaced by $1$ and all other $\mu_k^j$ by $0$, \ie she fakes a second world $\pmb{\mu'}$ in which the optimal assignment gives her the arm $j_1$. In this case, the sublinear regret assumption of $s$ implies that player $j$ pulls $j_1$ a time $T-\smallO(T)$, while in the true world, she would have pulled it $\smallO(T)$ times.
She thus earns an improvement at least $(\mu^j_{(1)} - \mu^j_{(2)}) T - \smallO(T)$ w.r.t. playing $s_j$, contradicting the Nash equilibrium assumption.
\end{proof}
\vspace{-0.5em}

\subsubsection{Random assignments}
\label{sec:randomass}

We now take a step back and describe ``relevant'' allocation procedures for the heterogeneous case, when the vector of  means $\pmb{\mu^j} $ is already known by player $j$. 

An assignment is  \textit{symmetric} if, when  $\pmb{\mu^j} = \pmb{\mu^i}$,  players $i$ and $j$ get the same \textbf{expected} utility, \ie no player is \textit{a priori} favored\footnote{The concept of fairness introduced above is stronger, as no player should be \textit{a posteriori} favored.}.  It is \textit{strategyproof} if being truthful is a dominant strategy for any player and \textit{Pareto optimal} if the social welfare (sum of utilities) can not be improved without hurting any player.
Theorem~\ref{thm:heterimposs} is a consequence of Theorem~\ref{thm:zhou} below. 
\begin{thm}[\citealt{zhou1990}]\label{thm:zhou}
For $M\geq 3$, there is no symmetric, Pareto optimal and strategyproof random assignment algorithm.
\end{thm}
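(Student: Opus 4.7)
The plan is to follow Zhou's original contradiction argument. Assume for contradiction that a random assignment mechanism $\varphi$ satisfies symmetry, (sd-)Pareto optimality, and (sd-)strategyproofness on every preference profile. The strategy is to reduce the problem to the three-agent case and then exhibit a pair of closely related profiles on which symmetry and Pareto optimality pin down $\varphi$ tightly enough that some agent can profit strictly by misreporting, contradicting strategyproofness.

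For the reduction from $M \geq 3$ to $M = 3$, I would augment any three-agent profile over $\{a,b,c\}$ with $M-3$ extra agents and $M-3$ extra objects, choosing preferences so that each original agent ranks $\{a,b,c\}$ at the top (with a common bottom-tail over the extras) while each extra agent ranks an extra object first and ranks $\{a,b,c\}$ at the bottom. Sd-Pareto optimality then forces the two sub-markets to decouple: any assignment sending positive mass of a special object to an extra agent, or vice versa, admits a Pareto-improving swap. The restriction of $\varphi$ to $\{1,2,3\}$ and $\{a,b,c\}$ therefore inherits symmetry, Pareto optimality, and strategyproofness, and it suffices to derive a contradiction on the sub-market.

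On the three-agent sub-market I would use the profile $P$ in which agents $1,2$ share the ranking $a \succ b \succ c$ while agent $3$ has $b \succ a \succ c$, together with a second profile $P'$ in which one of the first two agents swaps her top two objects to $b \succ a \succ c$. Equal treatment of equals equates the lotteries of the two symmetric agents in each profile. Eliminating pairwise Pareto-improving trades (in $P$, agent $2$ holding $b$ while agent $3$ holds $a$ would immediately admit such a trade, and symmetrically in $P'$) collapses the remaining freedom to a single parameter in each assignment. A direct computation of the marginals for the deviating agent, evaluated in the first-order stochastic-dominance order induced by her \emph{true} preference, then shows that the misreport produces an sd-dominating lottery, which contradicts sd-strategyproofness.

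The main obstacle is carrying out the Pareto-optimality analysis tightly enough to eliminate the residual parameter in each profile; otherwise the mechanism could exploit the slack to defuse the manipulation. In particular one must work with the ordinal, sd-version of Pareto optimality: ex-post efficiency alone is too weak, since a randomization over ex-post Pareto-efficient deterministic assignments could in principle avoid the contradiction. The technical heart of the argument is therefore a short case analysis on the admissible parameter values under sd-efficiency, followed by the explicit sd-comparison of the two marginals that witnesses the profitable deviation.
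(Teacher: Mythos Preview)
The paper does not prove this theorem at all: it is stated as an external result of \citet{zhou1990} and invoked only to motivate why the optimal assignment cannot serve as a benchmark in the heterogeneous setting (it is used to explain Theorem~\ref{thm:heterimposs} and to justify the switch to the RSD benchmark). There is therefore no ``paper's own proof'' to compare your proposal against.

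On the substance of your sketch, one point deserves care. The theorem as cited, and as used in this paper, lives in the \emph{cardinal} world: players have vNM utilities $\mu_k^j$, Pareto optimality is the ex-ante notion with respect to expected utility, and strategyproofness means no misreport of utilities yields a lottery with strictly higher expected utility. Zhou's original argument constructs specific cardinal profiles and exploits the continuum of possible utility reports. Your outline instead invokes sd-Pareto optimality and sd-strategyproofness and argues that ``ex-post efficiency alone is too weak,'' which is the framing of the Bogomolnaia--Moulin ordinal impossibility rather than Zhou's. The two results are cousins but not interchangeable: the profiles, the efficiency notion that pins down the mechanism, and the deviation that yields the contradiction all differ. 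If you intend to reproduce Zhou's proof, you should work with cardinal utilities and ex-ante efficiency; the reduction to $M=3$ and the use of equal-treatment-of-equals are fine, but the core three-agent construction should be the cardinal one.
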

\citet{liu2019} circumvent this assignment problem with  player-preferences for arms. Instead of assigning a player to a contested arm, the latter decides who  gets to pull it, following its preferences. 

In the case of random assignment, \citet{abdulkadiroglu1998} proposed the Random Serial Dictatorship (RSD) algorithm, which is symmetric and strategyproof. The algorithm is rather simple: pick uniformly at random an ordering of the $M$ players. Following this order, the first player picks her preferred arm, the second one her preferred remaining arm and so on. \citet{svensson1999} justified the choice of RSD for symmetric strategyproof assignment algorithms. \citet{adamczyk2014} recently studied  efficiency ratios of such assignments:  if $U_{\max}$ denotes the expected social welfare  of the optimal assignment, the expected social welfare of RSD is greater than $U^2_{\max}/eM$ while no strategyproof algorithm can guarantee more than $U^2_{\max}/M$. As a consequence, RSD is optimal up to a (multiplicative) constant and will serve as a benchmark in the remaining.

Indeed, instead of defining the regret in comparison with the optimal assignment as done in the classical heterogeneous multiplayer bandits, we are going to define it with respect to RSD to incorporate  strategy-proofness constraints. Formally,  the RSD-regret is defined as:
\begin{small}
\begin{equation*}
R^{\text{RSD}}_T \coloneqq T{\mathlarger \mE_{\sigma \sim \cU\left(\mathfrak{S}_M\right)} } \bigg[ \sum_{k = 1}^{M} \mu_{\pi_\sigma(k)}^{\sigma(k)}\bigg] -  \sum_{t=1}^T \sum_{j =1}^M \mu^j_{\pi^j(t)}(t) \cdot (1-\eta_{\pi^j(t)}(t)) ,
\end{equation*}
\end{small}with $\mathfrak{S}_M$ the set of permutations over $[M]$ and $\pi_\sigma(k)$ the arm attributed by RSD to player $\sigma(k)$ when the order of dictators is $(\sigma(1), \ldots, \sigma(M))$. Mathematically, $\pi_\sigma$ is defined by:
\begin{small}
\begin{equation*}
\pi_\sigma(1) = \argmax_{l \in [M]} \mu_l^{\sigma(1)} \quad \text{and} \quad \pi_\sigma(k+1) = \argmax\limits_{\substack{l \in [M] \\ l \not\in \lbrace  \pi_\sigma(l') \ | \ l' \leq k\rbrace}} \mu_l^{\sigma(k+1)}.
\end{equation*}
\end{small}
\vspace{-1cm}
\section{Full sensing setting} 
\label{sec:collision}
This section focuses on the full sensing setting, where both $\eta_k(t)$ and $X_k(t)$ are always observed as we proved impossibility results for more complex settings. As mentioned before, recent algorithms leverage the observation of collisions to enable some communication between players by forcing them. Some of these communication protocols can be modified to allow robust communication.
This section is structured as follows. First, insights on two new protocols are given for  robust communications. Second, a robust adaptation of SIC-MMAB is given, based on these two protocols. Third, they can also be used to reach a logarithmic RSD-regret in the heterogeneous case.

\subsection{Making communication robust}

To have robust communication, two new complementary protocols are needed. The first one allows to send messages between players and to detect when they have been corrupted by a malicious player. 
If this has been the case, the players then use the second protocol to proceed to a collective punishment, which forces every player to suffer a considerable loss for the remaining of the game. 
Such punitive strategies are called ``Grim Trigger'' in game theory and are used to deter  defection in repeated games \citep{friedman1971, axelrod1981, fudenberg2009}.

\subsubsection{Back and forth messaging}
\label{sec:backforth}


Communication protocols in the collision sensing setting usually rely on the fact that collision indicators can be seen as bits sent from a player to another one as follows. If player $i$ sends a binary message $m_{i \to j}=(1, 0, \ldots, 0, 1)$ to player $j$ during a predefined time window, she proceeds to the sequence of pulls $(j, i, \ldots, i, j)$, meaning she purposely collides with $j$ to send a $1$ bit (reciprocally, not colliding corresponds to a $0$ bit).
A malicious player trying to corrupt a message can only create new collisions, \ie replace zeros by ones. The key point is that the inverse operation is not possible. 

If player $j$ receives the (potentially corrupted) message $\hm_{i \to j}$, she repeats it to player~$i$. This second message can also be corrupted by the malicious player and player~$i$ receives~$\widetilde{m}_{i \to j}$. However, since the only possible operation is to replace zeros by ones, there is no way to transform back $\hm_{i \to j}$ to $m_{i \to j}$ if the first message had been corrupted.
The player~$i$ then just has to compare $\widetilde{m}_{i \to j}$ with $m_{i \to j}$ to know whether or not at least one of the two messages has been corrupted. We call this protocol \textit{back and forth} communication.

\medskip

In the following, other malicious communications are possible. Besides sending false information (which is managed differently), a malicious player can send different  statistics to the others, while they need to have  the exact same statistics. To overcome this issue, players will send to each other statistics  sent to them by any player. If two players have received different statistics by the same player, at least one of them automatically realizes it. 

\subsubsection{Collective punishment}
\label{sec:punish}


The back and forth protocol detects if a malicious player interfered in a communication  and, in that case, a collective punishment is triggered (to deter defection). The malicious player is yet unidentified and can not be specifically targeted. The  punishment thus guarantees that the average reward earned by any player is smaller than the average reward of the algorithm, $\bar{\mu}_M\coloneqq \frac{1}{M} \sum_{k=1}^M \mu_{(k)}$.

A naive way to \textit{punish} is to pull all arms uniformly at random. The selfish player then gets the reward $(1-1/K)^{M-1} \mu_{(1)}$ by pulling the best arm, which can be larger than $\bar{\mu}_M$. A good punishment should therefore pull arms more often the better they are.

\medskip

During the punishment, players pull each arm $k$ with probability $1 - \big(\gamma\frac{\sum_{l=1}^M \hmu^j_{(l)}(t)}{M \hmu^j_k(t)}\big)^{\frac{1}{M-1}}$ at least, where $\gamma=\left(1 - 1/K\right)^{M-1}$. Such a strategy is possible as shown by Lemma~\ref{lemma:punishment} in Appendix~\ref{app:punish}. Assuming the arms are correctly estimated, i.e.,  the expected reward a selfish player gets by pulling $k$ is approximately
$\mu_k (1-p_k)^{M-1}$, with ${p_k = \max \Big( 1 - \big(\gamma\frac{\bar{\mu}_M}{\mu_k}\big)^{\frac{1}{M-1}}, 0\Big)}$.

If $p_k=0$,  then $\mu_k$ is smaller than $\gamma \bar{\mu}_M$ by definition; otherwise, it necessarily holds that~$\mu_k (1-p_k)^{M-1} = \gamma \bar{\mu}_M$. As a consequence, in both cases, the selfish player earns at most $\gamma \bar{\mu}_M$, which involves a relative positive decrease of $1-\gamma$ in reward w.r.t. following the cooperative strategy. More details on this protocol are given by Lemma~\ref{lemma:punishhomo} in Appendix~\ref{app:greedyproofsicmmab}.

\subsection{Homogeneous case: \algotwo}
\label{sec:algo2}

In the homogeneous case, these two protocols can be incorporated in the SIC-MMAB algorithm of \citet{boursier2018} to provide \algotwo[,]which is robust to selfish behaviors and still ensures a regret comparable to the centralized lower bound.

\citet{boursier2019} recently improved the communication protocol by choosing a leader and communicating all the information only to this leader. A malicious player would do anything to be the leader. \algotwo avoids such a behavior by choosing two leaders who either agree or trigger the punishment. More generally with $n+1$ leaders, this protocol is robust to $n$ selfish players.
The detailed algorithm is given by Algorithm~\ref{alg:algo2} in Appendix~\ref{app:sicmmab_descript}.

\paragraph{Initialization.} 
The original initialization phase of SIC-MMAB has a small regret term, but it is not robust. During the initialization, the players here pull uniformly at random to estimate $M$ as in \algoone and then attribute ranks the same way. The players with ranks $1$ and $2$ are then leaders. Since the collision indicator is always observed here, this estimation can be done in an easier and better way. 
The observation of $\eta_k$ also enables players to remain synchronized after this phase as its length does not depend on unknown parameters.

\paragraph{Exploration and Communication.} Players  alternate between exploration and communication once the initialization is over. During the $p$-th exploration phase, each arm still requiring exploration is pulled $2^p$ times by every player in a collisionless fashion. Players~then communicate to each leader their empirical means in binary after every exploration phase, using the back and forth trick explained in Section~\ref{sec:backforth}. Leaders then check that~their information match. If some undesired behavior is detected, a collective punishment is~triggered. 

Otherwise, the leaders determine the sets of optimal/suboptimal arms and send them to everyone.
To prevent the selfish player from sending fake statistics, the leaders gather the empirical means of all players, except the extreme ones (largest and smallest) for every arm. If the selfish player sent outliers, they are thus cut out from the collective estimator, which is thus the average of $M-2$ individual estimates. This estimator can be biased by the selfish player, but a concentration bound given by Lemma~\ref{lemma:concentration2} in Appendix~\ref{app:proofsicmmabexploregret} still holds.

\paragraph{Exploitation.} As soon as an arm is detected as optimal, it is pulled until the end. To ensure fairness of \algotwo[,]players will actually rotate over all the optimal arms so that none of them is favored. This point is thoroughly described in Appendix~\ref{app:sicmmab_descript}.
Theorem~\ref{thm:sicmmab1}, proved in Appendix~\ref{app:sicmmab}, gives theoretical results for \algotwo[.]
\begin{thm}
\label{thm:sicmmab1}
Define $\alpha = \frac{1 - (1-1/K)^{M-1}}{2} $ and assume $M\geq 3$.
\begin{enumerate}\itemsep0em
\item The collective regret of \algotwo is bounded as
\begin{small}
\begin{equation*}
\mE[R_T] \leq \cO\bigg( \sum_{k>M} \frac{\log(T)}{\mu_{(M)} - \mu_{(k)}} + MK^2 \log(T) + M^2K \log^2\Big( \frac{\log(T)}{(\mu_{(M)} - \mu_{(M+1)})^2} \Big) \bigg).
\end{equation*}
\end{small}
\item Playing \algotwo is an $\varepsilon$-Nash equilibrium and is $(\alpha, \varepsilon)$-stable with
\begin{small}
\begin{equation*}
\varepsilon =  \cO\bigg(\sum_{k>M}\frac{\log(T)}{\mu_{(M)} - \mu_k} + K^2 \log(T) + MK \log^2\Big(\frac{\log(T)}{(\mu_{(M)}-\mu_{(M+1)})^2}\Big) +  \frac{K \log(T)}{\alpha^2 \mu_{(K)}}  \bigg).
\end{equation*}
\end{small}
\end{enumerate}
\end{thm}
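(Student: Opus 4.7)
The plan is to analyze \algotwo phase by phase, first establishing the regret bound when all players cooperate (point 1), and then bounding what a single deviating player can gain (point 2). The key shared observation is that, under the cooperative protocol, the back-and-forth check and the two-leader cross-verification never flag a corruption, so the collective punishment is triggered only with vanishing probability and does not contribute to the leading terms of the analysis.

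For point 1, I would decompose the regret along the algorithm's phases. The initialization (estimation of $M$ by uniform random pulls with $\eta_k$ directly observed, then a Musical-Chairs rank attribution) takes $O(K^2\log T)$ rounds by the same concentration arguments as in Lemmas~\ref{lemma:estim1} and \ref{lemma:getrank}, contributing $O(MK^2\log T)$ regret. The successive exploration phases use a standard doubling-elimination scheme in which, at phase $p$, each still-active arm is pulled $2^p$ times per player in a collisionless fashion; a suboptimal arm $k$ is confidently discarded after $O(\log T/(\mu_{(M)}-\mu_{(k)})^2)$ pulls per player, so it contributes $O(\log T/(\mu_{(M)}-\mu_{(k)}))$ to the collective regret, which when summed yields the first term of the bound. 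Each communication phase $p$ has length $O(MKp)$—each player transmits $p$-bit empirical means for $K$ arms to both leaders via back-and-forth, plus the echo step across leaders—so it contributes $O(M\cdot MKp)=O(M^2Kp)$ regret; summing over the $P=O(\log(\log T/(\mu_{(M)}-\mu_{(M+1)})^2))$ phases gives the third term. The rotating exploitation is collision-free whenever the leaders agree, so it adds no further regret.

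For point 2, fix an arbitrary deviation $s'$ and let $D$ denote the (random) number of rounds played after the punishment is triggered. On the event $\{D>0\}$, Lemma~\ref{lemma:punishhomo} shows that during each punishment round the selfish player collects at most $\gamma\bar{\mu}_M$ in expectation with $\gamma=(1-1/K)^{M-1}$, while each cooperative player collects on average $\bar{\mu}_M$; hence every such round simultaneously inflicts at least $(1-\gamma)\bar{\mu}_M$ loss on the selfish player and at most $\bar{\mu}_M$ loss on any cooperative one, yielding the relative dissipation ratio $\alpha=\tfrac{1-\gamma}{2}$. On the complementary event $\{D=0\}$, the deviation must simultaneously fool the back-and-forth check and the two-leader cross-verification; I would enumerate the allowed deviations and bound their cumulative gain: arbitrary behaviour during initialization (bounded through a robust variant of Lemmas~\ref{lemma:estim1}-\ref{lemma:getrank}, producing the $K\log T/(\alpha^2\mu_{(K)})$ term), skipping one's own exploration (bounded by the standard exploration term $\sum_{k>M}\log T/(\mu_{(M)}-\mu_k)$), transmitting extremal but in-range empirical means that the trimming step cuts out (bounded via Lemma~\ref{lemma:concentration2}, costing $O(K^2\log T)$), and altering exploitation pulls within the rotation without provoking detection (bounded by the rotation analysis of Section~\ref{sec:algo2}). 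Summing all contributions yields the stated $\varepsilon$, and combining the two cases gives $(\alpha,\varepsilon)$-stability, which immediately implies the $\varepsilon$-Nash equilibrium property.

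The main obstacle is the exhaustive undetected-deviation analysis. One must verify that every tampering pattern either triggers a detection mechanism or belongs to one of the enumerated bounded categories. This is particularly delicate when the selfish player happens to hold a leader rank, because she then has partial control over which statistics are aggregated and which punishments are declared; the assumption $M\geq 3$ and the two-leader design are precisely what prevents her from unilaterally suppressing a detection, and the outlier-trimming step (combined with Lemma~\ref{lemma:concentration2}) is what neutralises her within-range statistical attacks. A second subtle point is the initialization, where no rank is yet assigned and therefore no punishment can be triggered: here the per-arm collision-probability estimator (as already discussed for \algoone) is what must be shown robust, and it is this step that produces the extra $K\log T/(\alpha^2\mu_{(K)})$ term in $\varepsilon$.
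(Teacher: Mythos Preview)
Your treatment of point 1 is essentially the paper's: the same three-part decomposition (initialization, exploration/exploitation, communication), the same doubling-elimination accounting via Proposition~\ref{prop:stopexplor1}, and the same summation over the $O(\log(\log T/\Delta^2))$ communication phases.

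For point 2, however, there is a genuine misattribution. You claim the $\frac{K\log T}{\alpha^2\mu_{(K)}}$ term arises from robustifying the initialization via ``the per-arm collision-probability estimator (as already discussed for \algoone)''. This is wrong on two counts. First, in the full sensing setting $\eta_k$ is always observed, so \init (Protocol~\ref{proto:init}) has fixed deterministic length $(12eK^2+K)\log T$ and does \emph{not} use the per-arm estimator of \algoone; its contribution is the $K^2\log T$ term, with no dependence on $\mu_{(K)}$. Second, the $\frac{K\log T}{\alpha^2\mu_{(K)}}$ term actually comes from $t_p$ in Lemma~\ref{lemma:punishhomo}: once punishment is triggered, each cooperative player must first estimate every $\mu_k$ to multiplicative precision $\delta=\frac{1-\gamma}{1+3\gamma}$ (via Lemma~\ref{lemma:multiplicativeestim}) before she can sample with the correct probabilities $p_k$; this warm-up costs $O\big(\frac{K\log T}{\delta^2\mu_{(K)}}\big)$ rounds during which the deviator is unconstrained. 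This same slack is also why the effective per-round punishment ratio is $1-\widetilde\alpha=\frac{1-\gamma}{2}=\alpha$ rather than the $1-\gamma$ you compute: the bound $\mu_k(1-p_k)^{M-1}\le\gamma\bar\mu_M$ degrades to $\widetilde\alpha\bar\mu_M$ once $\hmu_k$ replaces $\mu_k$.

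Your stability argument also misses the paper's key structural fact: during exploration and exploitation, \emph{any} unexpected collision immediately sets \texttt{punish}$\gets$True (lines 9 and 14 of Algorithm~\ref{alg:algo2}). Hence the selfish player can inflict at most one collision's worth of damage on any cooperative player before punishment starts; this is what pins $l_{\text{before punishment}}\le 1+\varepsilon'$ and makes the $(\alpha,\varepsilon)$ accounting go through. Your enumeration of ``allowed undetected deviations'' does not capture this one-shot trigger and would leave open the possibility of repeated small collisions that never cross a detection threshold.
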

\subsection{Semi-heterogeneous case: \algothree}
\label{sec:algo3}


The punishment strategies described above can not be extended to the heterogeneous case, as the relevant probability of choosing each arm would depend on the  preferences of the malicious player which are unknown (even  her identity might not be discovered). Moreover, as already explained in the homogeneous case, pulling each arm uniformly at random is not an appropriate punishment strategy\footnote{Unless in the specific case where $\mu^j_{(1)}(1-1/K)^{M-1} < \frac{1}{M}\sum_{k=1}^M \mu^j_{(k)}$.}.
We therefore consider  the $\delta$-heterogeneous setting, which allows punishments for small values of $\delta$ as given by Lemma~\ref{lemma:punishhetero} in Appendix~\ref{app:rsdgreedyproof}.
The heterogeneous model was justified by the fact that transmission quality depends on individual factors such as localization.
The $\delta$-heterogeneous assumption relies on the idea that such individual factors are of a different order of magnitude than global factors (as the availability of a channel). As a consequence, even if arm means differ from player to  player, these variations remain relatively small.
\begin{defin}
The setting is  $\delta$-heterogeneous if there exists $\{\mu_k ; k \in [K]\}$ such that for all $j$ and $k$, $\mu_k^j \in [(1-\delta)\mu_k, (1+\delta)\mu_k]$.
\end{defin}
In the semi-heterogeneous full sensing setting, \algothree provides a robust, logarithmic RSD-regret algorithm. Its complete description is given by Algorithm~\ref{alg:algo3} in Appendix~\ref{app:rsd_descript}. 
\vspace{-0.5em}
\subsubsection{Algorithm description}

\algothree starts with the exact same initialization as \algotwo to estimate $M$ and attribute ranks among the players.
The time is then divided into superblocks which are divided into $M$ blocks. During the $j$-th block of a superblock, the dictators ordering\footnote{The ordering is actually $(\sigma(j), \ldots, \sigma(j-1))$ where $\sigma(j)$ is the player with rank $j$ after the initialization. For sake of clarity, this consideration is omitted here.} is $(j, \ldots, M, 1, \ldots, j-1)$. Moreover, only the $j$-th player can send messages during this block.

\paragraph{Exploration.} The exploring players pull sequentially all the arms. Once player $j$ knows her $M$ best arms and their ordering, she waits for a block $j$ to initiate communication.

\paragraph{Communication.} Once a player starts a communication block, she proceeds in three successive steps as follows:
\vspace{-0.25em}
\begin{enumerate}\itemsep0em
\item she first collides with all players to signal the beginning of a communication block. The other players then enter a listening state, ready to receive messages.
\item She then sends to every player her ordered list of $M$ best arms. Each player then repeats this list to detect the potential intervention of a malicious player.
\item Finally, any player who detected the intervention of a malicious player signals to everyone the beginning of a collective punishment.
\end{enumerate}\vspace{-0.25em}
After a communication block $j$, every one knows the preferences order of player $j$, who is now in her exploitation phase, unless a punishment protocol has been started.

\paragraph{Exploitation.} While exploiting, player $j$ knows the preferences of all other exploiting players. Thanks to this, she can easily compute the arms attributed by the RSD algorithm between the exploiting players, given the dictators ordering of the block. 

Moreover, as soon as she collides in the beginning of a block while not intended (by her), this means an exploring player is starting a communication block. The exploiting player then starts listening to the arm preferences of the communicating player.

\subsubsection{Theoretical guarantees}

Here are some insights to understand how \algothree reaches the utility of the RSD algorithm, which are rigorously detailed by Lemma~\ref{lemma:rsdmatching} in Appendix~\ref{app:rsdgreedyproof}.
With no malicious player, the players ranks given by the initialization provide a random permutation $\sigma \in \mathfrak{S}_M$ of the players and always considering the dictators ordering $(1, \ldots, M)$ would lead to the expected reward of the RSD algorithm. However, a malicious player can easily rig the initialization to end with rank $1$. In that case, she largely improves her individual reward w.r.t. following the cooperative strategy.

To avoid such a behavior, the dictators ordering should rotate over all permutations of $\mathfrak{S}_M$, so that the rank of the player has no influence. However, this leads to an undesirable combinatorial $M!$ dependency of the regret. 
\algothree instead rotates over the dictators ordering $(j, \ldots,M,1,\ldots, j-1)$ for all $j \in [M]$. If we note $\sigma_0$ the $M$-cycle $(1 \ldots M)$, the considered permutations during a superblock are of the form $\sigma \circ \sigma_0^{-m}$ for $m \in [M]$.
The malicious player $j$ can only influence the distribution of $\sigma^{-1}(j)$: assume w.l.o.g.\ that $\sigma(1)=j$. The permutation $\sigma$ given by the initialization then follows the uniform distribution over $\mathfrak{S}_{M}^{j \to 1} = \lbrace \sigma \in \mathfrak{S}_M \ | \ \sigma(1)=j \rbrace$. But then, for any $m \in [M]$, $\sigma \circ \sigma_0^{-m}$ has a uniform distribution over $\mathfrak{S}_{M}^{j \to 1+m}$. In average over a superblock, the induced permutation still has a uniform distribution over $\mathfrak{S}_{M}$. So the malicious player  has no interest in choosing a particular rank during the initialization, making the algorithm robust.

\medskip

Thanks to this remark and robust communication protocols, \algothree possesses theoretical guarantees given by Theorem~\ref{thm:rsd} (whose proof is deterred to Appendix~\ref{app:rsd}).
\vspace{-0.5em}
\begin{thm}
\label{thm:rsd}
Consider the $\delta$-heterogeneous setting and define $r = \frac{1-\left( \frac{1+\delta}{1-\delta}\right)^2 (1-1/K)^{M-1}}{2}$ and ${\Delta = \min\limits_{(j,k) \in [M]^2} \mu_{(k)}^j - \mu_{(k+1)}^j}$.
\begin{enumerate}
\item The RSD-regret of \algothree is bounded as:
$
\mE[R_T^{\mathrm{RSD}}] \leq \cO\big(MK \Delta^{-2} \log(T) + MK^2 \log(T)\big).
$
\item If $r>0$, playing \algothree is an $\varepsilon$-Nash equilibrium and is $(\alpha, \varepsilon)$-stable with
\begin{itemize}\itemsep0em
\item $\varepsilon = \cO\Big(\frac{K \log(T)}{\Delta^2}  + K^2 \log(T) + \frac{K\log(T)}{(1-\delta)r^2 \mu_{(K)}}\Big),$
\item $\alpha = \min\Big(r\left( \frac{1+\delta}{1-\delta}\right)^3 \frac{\sqrt{\log(T)}-4M}{\sqrt{\log(T)}+4M},\quad \frac{\Delta}{(1+\delta)\mu_{(1)}},\quad  \frac{(1-\delta)\mu_{(M)}}{(1+\delta)\mu_{(1)}}\Big).$
\end{itemize}
\end{enumerate}
\end{thm}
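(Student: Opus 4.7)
The two claims are handled separately, and in both cases we decompose time into the four phases used by \algothree: initialization (rank attribution), exploration (each player learning her own top-$M$ ordering), communication blocks (sending preference lists via back-and-forth), and exploitation (running RSD with rotating cyclic dictator orders).

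For the RSD-regret bound (claim 1), I would write
\begin{small}
\begin{equation*}
\mE[R_T^{\mathrm{RSD}}] \leq R^{\mathrm{init}}_T + R^{\mathrm{explo}}_T + R^{\mathrm{comm}}_T + R^{\mathrm{exploit}}_T.
\end{equation*}
\end{small}
The initialization term reuses the analysis of \estimateM and \getrank from \algotwo, giving $R^{\mathrm{init}}_T = \cO(MK^2 \log T)$. For exploration, each player needs to correctly order her $M$ best arms; by a standard UCB-style Chernoff bound applied to every consecutive pair with gap at least $\Delta$, the number of rounds spent exploring is $\cO(K\log(T)/\Delta^2)$ per player, contributing $\cO(MK\Delta^{-2}\log T)$ in total. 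Each player communicates her preference list exactly once, using $\cO(MK)$ bits transmitted back and forth, so $R^{\mathrm{comm}}_T = \cO(M^2 K)$. The crux is $R^{\mathrm{exploit}}_T = 0$ in expectation: by Lemma~\ref{lemma:rsdmatching}, the rank permutation $\sigma$ produced by the initialization is uniform over $\mathfrak{S}_M$, so for each block index $m$ the composed permutation $\sigma\circ\sigma_0^{-m}$ is also uniform, and averaging over the $M$ blocks of a superblock reproduces the RSD expected utility. Summing the four contributions gives the claimed bound.

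For claim 2 I would fix an arbitrary deviating strategy $s'$ of the selfish player and split the horizon at $\tau$, the first round when the back-and-forth protocol flags an inconsistency (set $\tau=T$ if never). Before $\tau$, there are only four admissible classes of manipulations. (a) Rank manipulation during initialization: as \emph{Algorithm description} explains, rotating over all cyclic shifts makes the induced per-block dictator distribution uniform over $\mathfrak{S}_M$ up to the empirical fluctuation of $\sigma$, which accounts for the $(\sqrt{\log T}-4M)/(\sqrt{\log T}+4M)$ factor in $\alpha$ via a concentration argument on the number of blocks per superblock. (b) Skipping personal exploration: without her own UCB-style estimates she cannot profitably deviate during exploitation, and the gain is at most the saved exploration cost $\cO(K\log(T)/\Delta^2)$, absorbed in $\varepsilon$. (c) Sending corrupted preference lists or inconsistent statistics: by construction of the back-and-forth protocol (Section~\ref{sec:backforth}) this immediately triggers the punishment, so it is accounted for in the $t>\tau$ regime. (d) Deviating at exploitation: if she collides with a cooperator on an arm assigned by RSD, she loses at least $(1-\delta)\mu_{(M)}$ in expectation while the victim loses at most $(1+\delta)\mu_{(1)}$, yielding the bound $\alpha\geq (1-\delta)\mu_{(M)}/((1+\delta)\mu_{(1)})$; if instead she pulls a suboptimal free arm to free a slot for somebody else's preferred arm, she loses at least $\Delta$ per round while the global loss she can induce is bounded by $(1+\delta)\mu_{(1)}$, giving $\alpha\geq \Delta/((1+\delta)\mu_{(1)})$.

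For $t>\tau$, I would invoke Lemma~\ref{lemma:punishhetero} together with the punishment design of Section~\ref{sec:punish}: once the protocol enters punishment mode, the selfish player's expected per-round reward is at most $r$ times the cooperative per-round reward (adjusted by $(1+\delta)/(1-\delta)$ factors because means vary across players in the $\delta$-heterogeneous setting), while the other players lose at most a comparable amount. This delivers the third $\alpha$ branch. The remaining additive slack $\cO(K\log(T)/((1-\delta) r^2 \mu_{(K)}))$ in $\varepsilon$ arises from a concentration argument ensuring that mean estimates during punishment are accurate enough to prevent the selfish player from exploiting a random fluctuation in arm probabilities. Combining the pre-$\tau$ and post-$\tau$ bounds and taking the minimum over the three regimes yields the stated $(\alpha,\varepsilon)$.

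The main obstacle I expect is item (a): showing rigorously that no rank manipulation during initialization can bias the induced dictator distribution by more than the $(\sqrt{\log T}-4M)/(\sqrt{\log T}+4M)$ factor. This requires tracking how the selfish player's finite-sample control over her own rank interacts with the uniform randomization of the cyclic rotation, and carefully combining it with the $((1+\delta)/(1-\delta))^3$ multiplicative distortion coming from the heterogeneity of means before any gain-from-rank can translate into a reward advantage.
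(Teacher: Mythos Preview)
Your treatment of claim 1 is essentially the paper's argument (the paper folds your ``exploit'' term into ``explo'' but the content is the same). The problem is in claim 2, and specifically in your identification of where the factor $(\sqrt{\log T}-4M)/(\sqrt{\log T}+4M)$ comes from.

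You attribute that factor to rank manipulation during initialization, and you flag this as the main obstacle. But in the paper the cyclic rotation over dictator orderings neutralizes rank manipulation \emph{exactly}: Lemma~\ref{lemma:rsdmatching} shows that conditioning on any rank the selfish player forces for herself still yields the uniform RSD utility after averaging over a superblock, with no residual $(\sqrt{\log T})$-dependent error. So item (a) in your case split contributes nothing to $\alpha$.

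The factor you are trying to explain actually comes from a mechanism you omitted entirely: the \emph{random inspections} in the exploitation phase (probability $p=\sqrt{\log T}/T$ per round, per player). These are what prevents the selfish player from reporting fake preferences (to distort everyone else's RSD assignment) and then quietly pulling a different arm than the one RSD attributes to her. If she follows her fake assignment she loses at least $\Delta$ per such round (this is where the $\Delta/((1+\delta)\mu_{(1)})$ branch of $\alpha$ originates, not from ``freeing a slot for somebody else''); if she deviates from it, she is caught after a geometrically distributed number of rounds with mean $\leq 2/p = 2T/\sqrt{\log T}$. The paper's stability proof splits on whether the total lying time $\frac{n}{M}(\mE[T_{\text{punish}}]-t_0)$ exceeds $4/p$; comparing the selfish player's loss and the cooperator's loss in the two cases is precisely what produces the $(\sqrt{\log T}-4M)/(\sqrt{\log T}+4M)$ correction multiplying the punishment branch $r((1+\delta)/(1-\delta))^3$. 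Without the inspection mechanism in your analysis, the stability argument has a hole: nothing stops the selfish player from misreporting and then ignoring the resulting assignment, and your case (c) does not cover this because back-and-forth only detects \emph{corrupted} messages, not internally consistent lies.
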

\vspace{-1em}
\section{Conclusion}

We introduced notions of robustness to selfish players and provided impossibility results in hard settings.
With statistic sensing, \algoone gives a rather simple robust and efficient algorithm, besides being optimal among the class of algorithms using no collision information.
On the other hand when collisions are observed, robust algorithms relying on communication through collisions are possible. 
Thanks to this, even selfish-robust algorithms can achieve near centralized regret in the homogeneous case, which is not intuitive at first sight.
In the heterogeneous case, a new adapted notion of regret is introduced and \algothree achieves a good performance with respect to it.

\medskip

\algothree heavily relies on collision observations and future work should focus on designing a comparable algorithm in both performance and robustness without this feature. 
The topic of robustness to selfish players in multiplayer bandits still remains largely unexplored  and leaves open many directions for future work. In particular, punishment protocols do not seem possible for general heterogeneous settings and the existence of robust algorithms for any heterogeneous setting remains open. Also, stronger notions of equilibrium can be considered such as perfect subgame equilibrium.

\acks{Vianney Perchet acknowledge the support of the French National Research Agency project BOLD (ANR19-CE23-0026-04). This work was also supported in part by a public grant as part of the Investissement d'avenir project, reference ANR-11-LABX-0056-LMH, LabEx LMH, in a joint call with Gaspard Monge Program for optimization, operations research and their interactions with data sciences.}

\addcontentsline{toc}{section}{References}
\bibliography{bibliography}

\newpage
\appendix
\section{Supplementary material for Section~\ref{sec:algo1}}
\label{app:algo1}

This section provides a complete description of \algoone and the proofs of Theorems~\ref{thm:algoone} and \ref{thm:robust1}.

\subsection{Thorough description of \algoone} \label{app:algo1descr}
In addition to Section~\ref{sec:algo1}, the pseudocodes of \estimateM[,]\getrank and \exploone are given here. The following Protocol~\ref{proto:estimM}  describes the estimation of $M$ using the notations introduced in Section~\ref{sec:estimm}.

\begin{protocol}[h]
    \DontPrintSemicolon
     \KwIn{$\beta, T$}
     $t_m \gets 0$ \;
     \While{$\min_k \card N_k^j(t) < \beta^2 K^2\log(T)$}{
     Pull $k \sim \mathcal{U}(K)$;\quad 
     Update $\card N_k^j(t)$ and $\card C_k^j(t)$ ;\quad
     $t_m \gets t_m+1$}
     $\hM \gets 1 + \round\Big(\frac{\log\left(1-\frac{1}{K}\sum_k \hat{p}_k^j(t_M)\right)}{\log\left( 1- \frac{1}{K}\right)}\Big)$\tcp*{$\round(x)=$ closest integer to $x$}
     \textbf{Return} $\hM, t_m$
\caption{\label{proto:estimM}\estimateM}
\end{protocol}

Since the duration $t_m^j$ of \estimateM for player $j$ is random and differs between players, each player continues sampling uniformly at random until $\frac{\gamma_2}{\gamma_1} t_m^j$, with $\gamma_1=\frac{13}{14}$ and $\gamma_2=\frac{16}{15}$. Thanks to this additional \textit{waiting room}, Lemma~\ref{lemma:estim1} below guarantees that all players are sampling uniformly at random until at least $t_m^j$ for any $j$.

The estimation of $M$ here tightly estimates the probability to collide individually for each arm. This restriction provides an additional $M$ factor in the length of this phase in comparison with \citep{musicalchair}, where the probability to collide is globally estimated. This is however required because of the Statistic Sensing, but if $\eta_k$ was always observed, then the protocol from \citet{musicalchair} would be robust. 

Indeed, if we directly estimated the global probability to collide, the selfish player could pull only the best arm. 
The number of observations of $\eta_k$ is larger on this arm, and the estimated probability to collide would thus be positively biased because of the selfish player.

%
\medskip

Afterwards, ranks in~$[M]$ are  attributed to players by sampling uniformly at random in~$[M]$ until observing no collision, as described in Protocol~\ref{proto:getrank}. For the same reason, a waiting room is added to guarantee that all players end this protocol with different ranks.

\begin{protocol}[h] 
    \DontPrintSemicolon
     \KwIn{$\hM, t_m^j, \beta, T$}
     $n \gets \beta^2 K^2\log(T)$ and $j \gets -1$\;
     \For{$t_m^j \log(T)/(\gamma_1 n)$ rounds}{
     \uIf{$j=-1$}{
     Pull $k \sim \mathcal{U}(\hM)$;\quad
     \lIf(\tcp*[f]{no collision}){$r_k(t) > 0$}{$j\gets k$} 
     }
     \lElse{Pull $j$}}
     \textbf{Return} $j$
    \caption{\label{proto:getrank}\getrank}
\end{protocol}

The following quantities are used to describe \exploone in Algorithm~\ref{algo:explo1}:
\begin{itemize}
\item $\cM^j(t) = \left( l_1^j(t), \ldots, l_M^j(t) \right)$ is the list of the empirical $M$ best arms for player~$j$ at round~$t$. It is updated only each $M$ rounds  and ordered according to the index of the arms, \ie $l_1^j(t) < \ldots < l_M^j(t)$.
\item $\hm^j(t)$ is the empirical $M$-th best arm for player $j$ at round $t$.
\item $b_k^j(t) = \sup \lbrace q \geq 0 \ | \ T_k^j(t) \textrm{kl}( \hmu_k^j(t), q) \leq f(t) \rbrace$ is the kl-UCB index of the arm $k$ for player $j$ at round $t$, where $f(t)=\log(t) + 4 \log(\log(t))$, $T_k^j(t)$ is the number of times player $j$ pulled $k$ and $\hmu_k^j$ is the empirical mean. 
\end{itemize}

\begin{algorithm2e}[h] 
    \DontPrintSemicolon
    \KwIn{$M$, $j$}
    \lIf{$t=0 \ (\mathrm{mod} \ M)$}{Update $\hat{\mu}^j(t), b^j(t), \hm^j(t)$ and $\cM^j(t)= (l_1, \ldots, l_M)$}
    $\pi \gets t+j \ (\mathrm{mod} \ M)+1$ \;
    \lIf{$l_\pi \neq \hm^j(t)$}{Pull $l_\pi$ \tcp*[f]{exploit the $M-1$ best empirical arms}}
    \Else{$\cB^j(t) = \lbrace k \not\in \cM^j(t) \ | \ b_k^j(t) \geq \hmu^j_{\hm^j(t)}(t) \rbrace$ \tcp*{arms to explore}
    \lIf{$\cB^j(t) = \emptyset$}{Pull $l_\pi$}
    \lElse{Pull $\begin{cases} l_\pi \text{ with proba } 1/2 \\ k \text{ chosen uniformly at random in } \cB^j(t) \text{ otherwise} \quad \tcp*[f]{explore}\end{cases}$ }}
    \caption{\label{algo:explo1}\exploone}
\end{algorithm2e}

\subsection{Proofs of Section~\ref{sec:algo1}}

Let us define $\alpha_k \coloneqq \mP(X_k(t) > 0) \geq \mu_k$, $\gamma_1 = \frac{13}{14}$ and $\gamma_2 = \frac{16}{15}$.
\subsubsection{Regret analysis}
\label{app:regretalgo1}
This section aims at proving Theorem~\ref{thm:algoone}. This proof is divided in several auxiliary lemmas given below. First, the regret can be decomposed as follows:

\begin{equation}
\label{eq:regdec1}
R_T = R^{\text{init}} + R^{\text{explo}},
\end{equation}
\begin{equation*}
\text{where } \left\{ \begin{split} \begin{aligned} & R^{\text{init}} = T_0 {\mathlarger\sum_{k = 1}^M} \mu_{(k)} - \mathbb{E}_\mu \Big[{\mathlarger\sum_{t=1}^{T_0}} {\mathlarger \sum_{j = 1}^M} r^j(t) \Big] 
\text{ with } T_{0} = \left(\frac{\gamma_2}{\gamma_1^2 \beta^2 K^2} + \frac{\gamma_2^2}{\gamma_1^2}\right)\ \max_j t_m^j, \\
& R^{\text{explo}} = (T-T_{0}) {\mathlarger\sum_{k = 1}^M} \mu_{(k)} - \mathbb{E}_\mu \Big[{\mathlarger\sum_{t=T_{0}+1}^{T}} {\mathlarger \sum_{j = 1}^M} r^j(t) \Big].  \end{aligned} \end{split} \right.
\end{equation*}

Lemma~\ref{lemma:estim1} first gives guarantees on the \estimateM protocol. Its proof is given in Appendix~\ref{app:proofestim1}.

\begin{lemm}
\label{lemma:estim1}
If $M-1$ players run \estimateM with $\beta\geq 39$, followed by a waiting room until~$\frac{\gamma_2}{\gamma_1} t_m^j$, then regardless of the strategy of the remaining player, with probability larger than~$1-\frac{6KM}{T}$, for any player:
\begin{equation*}
{\hM}^j = M \text{ and } \frac{t_m^j \alpha_{(K)}}{K} \in [\gamma_1 n, \gamma_2 n],
\end{equation*}
where $n = \beta^2 K^2 \log(T)$.
\end{lemm}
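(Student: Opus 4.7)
The plan is to split the analysis into two independent concentration tasks: controlling the stopping time $t_m^j$ via the counts $\card N_k^j(t)$, and controlling the estimator $\hat{M}^j$ via the per-arm collision rates $\hat{p}_k^j$. Throughout, I use that the remaining player cannot disrupt the pull probabilities of cooperative players: inside \estimateM or its waiting room, every cooperative player samples uniformly in $[K]$, and the rewards $X_k(t)$ are drawn independently of every player's strategy.

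\emph{Step 1 --- stopping times and waiting room.} For any cooperative $j$ and arm $k$, $\card N_k^j(t)$ is a sum of i.i.d.\ Bernoullis of parameter $\alpha_k/K$ (the event ``pull $k$ and $X_k(t) > 0$''), so a multiplicative Chernoff bound combined with $n = \beta^2 K^2 \log(T)$ and $\beta = 39$ pushes the relevant deviation probability below $T^{-3}$ up to constants. Setting $t^{\star} = Kn/\alpha_{(K)}$, I would argue that on the good event $\card N_{(K)}^j(\gamma_1 t^{\star}) < n$ while $\card N_k^j(\gamma_2 t^{\star}) \geq n$ for every $k$ simultaneously, which forces $t_m^j \in [\gamma_1 t^{\star}, \gamma_2 t^{\star}]$ and hence $t_m^j \alpha_{(K)}/K \in [\gamma_1 n, \gamma_2 n]$. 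Since every cooperative $t_m^i$ lives in the same interval, $t_m^i \leq \frac{\gamma_2}{\gamma_1} t_m^j$, so the waiting room extension until $\frac{\gamma_2}{\gamma_1} t_m^j$ guarantees every cooperative player is still sampling uniformly throughout every other cooperative player's estimation, closing the apparent circularity.

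\emph{Step 2 --- robust estimation of $M$.} This is the crucial robustness step. Conditionally on the history $\cF_{t-1}$, the probability that a pull of arm $k$ by player $j$ at time $t$ records a collision (given $\pi^j(t) = k$ and $X_k(t) > 0$) equals $1 - (1-1/K)^{M-2}(1 - \widetilde{p}_k(t))$, where $\widetilde{p}_k(t) := \mathbb{P}(\pi^{\mathrm{selfish}}(t) = k \mid \cF_{t-1})$ and the $M-2$ other cooperative players remain uniform. Averaging over $k$ and exploiting $\sum_k \widetilde{p}_k(t) = 1$ yields
\begin{equation*}
\frac{1}{K}\sum_{k=1}^K \Big[1 - (1-1/K)^{M-2}(1 - \widetilde{p}_k(t))\Big] = 1 - (1-1/K)^{M-1},
\end{equation*}
\emph{independently of the selfish strategy}: this invariance identity is exactly what the per-arm decomposition buys, and what a single global collision-rate estimator would miss. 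An Azuma--Hoeffding bound applied arm-by-arm, together with $\card N_k^j(t_m^j) \geq n$, gives $|\hat{p}_k^j - \mathbb{E}[\hat{p}_k^j \mid \cF_{t_m^j}]| = O(\sqrt{\log(T)/n})$ with probability $1 - O(T^{-3})$; the triangle inequality then yields $\bigl|\frac{1}{K}\sum_k \hat{p}_k^j - (1-(1-1/K)^{M-1})\bigr| = O(1/(\beta K))$. Since $q \mapsto \log(1-q)/\log(1-1/K)$ has derivative of order at most $K/(1-q)$ at $q = 1 - (1-1/K)^{M-1}$, the constant $\beta = 39$ is chosen so that this perturbation stays strictly below $1/2$, so the rounding in the inverse formula returns the true $M$.

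\emph{Step 3 --- union bound and main obstacle.} A union bound over the $M-1$ cooperative players and the $K$ arms, applied twice (once for count deviations, once for collision deviations), produces the announced $6KM/T$ failure probability. The main obstacle, and the real content of the lemma, is handling the adaptive selfish player: one cannot pretend the collision indicators are i.i.d.\ since she sees the history and may react. The martingale formulation combined with the invariance identity $\sum_k \widetilde{p}_k(t) = 1$ is precisely what sidesteps adaptivity, and it also explains why the protocol must estimate collision probabilities arm-by-arm rather than globally as in \citet{musicalchair}.
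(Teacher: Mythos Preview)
Your plan follows the paper's approach and correctly isolates the key robustness identity $\frac{1}{K}\sum_k p_k^j(t) = 1-(1-1/K)^{M-1}$, valid for every $t$ regardless of the selfish player's strategy. There is, however, a real gap in Step~2.

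First a notational issue: $\mE[\hat{p}_k^j \mid \cF_{t_m^j}]$ is not what you want, since $\hat{p}_k^j$ is $\cF_{t_m^j}$-measurable and this conditional expectation is just $\hat{p}_k^j$ itself. Reading charitably, Azuma--Hoeffding gives you $\hat{p}_k^j \approx \frac{1}{\card N_k^j}\sum_{t\in N_k^j} p_k^j(t)$. The genuine problem is the next sentence. The invariance identity is \emph{pointwise in $t$}, so it applies to $\frac{1}{t_m^j}\sum_{t\le t_m^j}\frac{1}{K}\sum_k p_k^j(t)$; but after Azuma and averaging over $k$ you are left with $\frac{1}{K}\sum_k \frac{1}{\card N_k^j}\sum_{t\in N_k^j} p_k^j(t)$. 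Each arm averages $p_k^j(\cdot)$ over its \emph{own} random subset $N_k^j$ of its own size, and these subsampled averages do not collapse to the common time average by a triangle inequality alone. You need a separate concentration step establishing $\frac{1}{\card N_k^j}\sum_{t\in N_k^j} p_k^j(t) \approx \frac{1}{t_m^j}\sum_{t\le t_m^j} p_k^j(t)$ for each $k$; only then can you average over $k$ and invoke the identity.

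The paper fills this gap explicitly. Its Lemma~\ref{lemma:chernoff1} has three parts: part~1 is your Azuma step, part~2 controls $\card N_k^j \approx \alpha_k t_m^j/K$, and part~3 is the missing ingredient---a Chernoff-type bound on $\sum_t \big(\one(t\in N_k^j) - \alpha_k/K\big)\, p_k^j(t)$, exploiting that $\one(t\in N_k^j)$ (which depends only on player~$j$'s uniform draw and on $X_k(t)>0$) is conditionally independent of $p_k^j(t)$ given $\cF_{t-1}$. Combining parts~2 and~3 reduces the subsampled average to the full time average, after which the invariance identity and your rounding argument go through exactly as you describe. Without this bridge the argument is incomplete: the selfish player shapes the sequence $p_k^j(\cdot)$, and nothing in a bare triangle inequality rules out the per-arm subsamples drifting away from the time average.
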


When $\hM^j = M$ and $\frac{t_m^j \alpha_{(K)}}{K} \in [\gamma_1 n, \gamma_2 n]$ for any cooperative player $j$, we say that the estimation phase is \textbf{successful}.
\begin{lemm}
\label{lemma:getrank}
Conditioned on the success of the estimation phase, with probability $1-\frac{M}{T}$, all the cooperative players end \getrank with different ranks $j \in [M]$, regardless of the behavior of other players.
\end{lemm}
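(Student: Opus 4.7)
The plan is to establish a uniform per-round lower bound on each cooperative player's probability of committing during her own \getrank call, and then to conclude by iteration and a union bound. The ``distinct ranks'' part is almost free: after committing to arm $k$ a player pulls $k$ at every later round, so any subsequent attempt to commit to $k$ must collide, and simultaneous attempts to commit to $k$ collide too.

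I would first condition on the success event of Lemma~\ref{lemma:estim1}, under which $\widehat{M}^j = M$ and $t_m^j \geq \gamma_1 n K/\alpha_{(K)}$ for every cooperative $j$ with $n = \beta^2 K^2 \log(T)$. Consequently, any cooperative player $i$ spends at least $L \coloneqq K\log(T)/\alpha_{(K)}$ rounds inside her own \getrank. Next, fix any round $t$ in $i$'s \getrank at which $i$ is still uncommitted and partition the remaining $M-2$ cooperative players into $c$ that are already committed, $u-1$ that are in \getrank but uncommitted, and $w$ that are still inside the first waiting room (hence sampling in $[K]$), with $c + u + w = M-1$. Because the selfish player occupies at most one extra arm, at least $u+w$ arms are completely unoccupied at the start of round~$t$.

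For every such ``free'' arm $k$, independence of the random draws (note that waiting-room players draw from $[K] \supseteq [M]$) gives
\[
\Pr\bigl[i\text{ draws }k,\ \text{no one else pulls }k,\ X_k(t)>0\bigr] \;\geq\; \frac{1}{M}\Bigl(1-\tfrac{1}{M}\Bigr)^{u-1}\Bigl(1-\tfrac{1}{K}\Bigr)^{w}\alpha_{(K)}.
\]
Summing over the $\geq u+w$ unoccupied arms yields a lower bound proportional to the function $g(u,w) \coloneqq (u+w)(1-1/M)^{u-1}(1-1/K)^w$. A short monotonicity check (the ratios $g(u+1,w)/g(u,w)$ and $g(u,w+1)/g(u,w)$ are both $\geq 1$ precisely when $u+w \leq M-1 \leq K-1$, which is always the case in the feasible region) shows that $g$ is minimized at $(u,w) = (1,0)$ with value $1$, so pathwise $\Pr[i\text{ commits at }t\mid \text{history}] \geq \alpha_{(K)}/M$.

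Iterating over $L$ rounds, $\Pr[i\text{ never commits during her \getrank}] \leq (1-\alpha_{(K)}/M)^{L} \leq \exp(-K\log(T)/M) \leq 1/T$, using $M \leq K$. A union bound over the $\leq M-1$ cooperative players then yields failure probability at most $M/T$, combined with the earlier observation that successful commits are automatically to distinct ranks. The main technical obstacle is the uniform lower bound above: one must track three simultaneous regimes (committed, uncommitted-in-\getrank, waiting-room) and worst-case over an arbitrarily adversarial selfish player; once the pessimistic case $(u,w)=(1,0)$ is isolated — essentially ``only $i$ remains uncommitted, against one adversarial pull on a single free arm'' — the remainder is routine.
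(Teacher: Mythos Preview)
Your overall strategy matches the paper's: establish the per-round lower bound $\alpha_{(K)}/M$ on the probability that a still-uncommitted cooperative player commits during her \getrank call, iterate over the $L \geq K\log(T)/\alpha_{(K)}$ available rounds, and union-bound over the cooperative players; the distinct-ranks observation is also the same.

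There is, however, a real gap in the partition step. You split the other $M-2$ cooperative players into committed, uncommitted-in-\getrank[,]and first-waiting-room, and you assert that ``after committing to arm $k$ a player pulls $k$ at every later round''. Both claims fail the moment some cooperative player has already left her \emph{second} waiting room and entered \exploone[:]she no longer pulls her committed rank, so (i) your three-way partition is not exhaustive, and (ii) a later cooperative player could commit to that same rank without colliding, breaking the distinct-ranks conclusion. The paper closes this hole with an explicit auxiliary synchronization lemma (a short calculation on waiting-room lengths, using $t_m^{j'}\leq\tfrac{\gamma_2}{\gamma_1}t_m^{j}$ on the success event) showing that while any cooperative player is inside \getrank[,]every other cooperative player is in a waiting room or in \getrank --- never already in \exploone[.]You need this statement, or an equivalent, before your partition and your distinct-ranks claim are valid. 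As a secondary remark, your $g(u,w)$ optimization is correct but heavier than necessary: the paper obtains the same $\alpha_{(K)}/M$ bound by a one-line pigeonhole --- condition on the realized pulls of all $M{-}1$ other players (selfish included) at round~$t$; they occupy at most $M{-}1$ arms of $[M]$, leaving at least one free; player~$i$ draws it with probability $\geq 1/M$, independently of the event $X_k(t)>0$.
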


The proof of Lemma~\ref{lemma:getrank} is given in Appendix~\ref{app:getrankproof}. If the estimation is successful and all players end \getrank with different ranks $j \in [M]$, the initialization is said successful. 

Using the same arguments as \citet{proutiere2019}, the collective regret of the \exploone phase can be shown to be $M \sum_{k>M} \frac{\mu_{(M)} - \mu_{(k)}}{\mathrm{kl}(\mu_{(M)}, \mu_{(k)})} \log(T) + \smallO(\log(T))$. This result is given by Lemma~\ref{lemma:ucb1}, whose proof is given in Appendix~\ref{app:ucb1proof}.

\begin{lemm}
\label{lemma:ucb1}
If all players follow \algoone[:]
\begin{small}
\begin{equation*}
\mE[R^{\text{explo}}] \leq M \sum_{k>M} \frac{\mu_{(M)} - \mu_{(k)}}{\mathrm{kl}(\mu_{(M)}, \mu_{(k)})} \log(T) + \smallO(\log(T)).
\end{equation*}
\end{small}
\end{lemm}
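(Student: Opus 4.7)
The plan is to follow the template of the exploration analysis of \citet{proutiere2019}, adapted to our shifted rotation mechanism. First I would condition on the success of the initialization phase (Lemmas~\ref{lemma:estim1} and~\ref{lemma:getrank}): up to an additive $\cO(MK/T)$ contribution in expectation, we may assume every player has $\hM^j = M$ and a distinct rank $j \in [M]$ at time $T_0$, and I would decompose $R^{\text{explo}}$ into three parts, namely (i) the loss incurred when some player $j$ pulls an arm outside the true top-$M$ through the set $\cB^j(t)$, (ii) the loss incurred when $\cM^j(t)$ disagrees with the true top-$M$ ordered by index, and (iii) the loss due to collisions.

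The key structural observation is that as soon as $\cM^j(t) = (l_1,\dots,l_M)$ is the same for every cooperative player, the rule $\pi = t + j \ (\mathrm{mod}\ M)+1$ defines a bijection between players and the arms $(l_1,\dots,l_M)$ at each round, so no collision occurs and the only loss per $M$-round block comes from the players who choose to explore via $\cB^j(t)$. Hence part (iii) is controlled by the expected number of rounds where the empirical $M$-th best arm or the identity of the top-$M$ set differs between two players; standard Chernoff/kl-concentration applied to the empirical means of the top and bottom arms (together with the fact that the exploration step forces enough samples on any suspicious arm) bounds this expectation by a constant independent of $T$, contributing only $\smallO(\log T)$ to the regret. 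The same concentration argument bounds part (ii).

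For part (i), which dominates, the analysis mirrors the single-player kl-UCB bound. Fix a player $j$ and a suboptimal arm $k > M$. Player $j$ pulls $k$ only when $k \in \cB^j(t)$ and her $1/2$ coin lands on exploration, which in particular forces $b_k^j(t) \geq \hmu^j_{\hm^j(t)}(t)$. A now-standard argument (\eg the proof of \citealt{cappe2013}) combined with the fact that $\hmu^j_{\hm^j(t)}(t)$ concentrates around $\mu_{(M)}$ yields
\begin{equation*}
\mE[T_k^j(T)] \leq \frac{\log(T)}{\mathrm{kl}(\mu_{(k)}, \mu_{(M)})} + \smallO(\log T).
\end{equation*}
Summing over $j \in [M]$ and $k > M$ and multiplying by $\mu_{(M)} - \mu_{(k)}$ gives the announced leading term, while all the error events (initialization failure, wrong top-$M$, collisions, concentration failures) contribute only $\smallO(\log T)$.

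The main obstacle is controlling part (i) \emph{jointly} across players: the exploration coin and the fact that different $\cM^j(t)$'s are updated only every $M$ rounds mean that whether a given pull of $k$ by $j$ is an ``exploration pull'' depends on the other players' statistics through the collision indicators entering $\hmu_k^j$. I would resolve this by noting that under the good event of part (ii) the sets $\cM^j(t)$ coincide, so no collision occurs on the arms players are trying to estimate, and therefore each player's empirical means on the relevant arms are sums of i.i.d.\ samples, allowing the standard kl-UCB analysis to go through player-by-player. The remaining regret from the low-probability bad events is absorbed into the $\smallO(\log T)$ term.
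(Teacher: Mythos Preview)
Your approach is essentially the same as the paper's: define a bad-event set $\cA^j \cup \cD^j$ (your parts (ii) and (iii)), bound its expected size by a $T$-independent constant via Lemmas~\ref{lemma:proutiere1}--\ref{lemma:proutiere2} (the paper's Lemma~\ref{lemma:badevent1}), and on good rounds bound the number of exploration pulls of each suboptimal arm by the kl-UCB count (your part (i), the paper's Lemma~\ref{lemma:ucb2}), then let $\delta \to 0$. One unnecessary worry: in the statistic sensing model $X_{\pi^j(t)}(t)$ is always observed, so $\hmu_k^j$ is built from i.i.d.\ samples regardless of collisions and your ``main obstacle'' never arises---the per-player kl-UCB analysis goes through directly without conditioning on the good event.
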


\begin{proof}[Proof of Theorem~\ref{thm:algoone}.]
Thanks to Lemma~\ref{lemma:ucb1}, the total regret is bounded by $$M \sum_{k>M} \frac{\mu_{(M)} - \mu_{(k)}}{\mathrm{kl}(\mu_{(M)}, \mu_{(k)})} \log(T) + \mE[T_0] M + \smallO(\log(T)).$$

Thanks to Lemmas~\ref{lemma:estim1} and \ref{lemma:getrank}, $\mE[T_0] = \cO\left(\frac{K^3 \log(T)}{\mu_{(K)}}\right)$, yielding Theorem~\ref{thm:algoone}.
\end{proof}
\subsubsection{Proof of Lemma~\ref{lemma:estim1}}
\label{app:proofestim1}
\begin{proof}[]
Let $j$ be a cooperative player and $q_k(t)$ be the probability at round $t$ that the remaining player pulls~$k$. Define $p_k^j(t) = \mP[t \in C_k^j(t) \ | \ t \in N_k^j(t)]$. By definition, $p_k^j(t) = 1 - (1-1/K)^{M-2}(1-q_k(t))$ when all cooperative players are pulling uniformly at random. Two auxiliary Lemmas using classical concentration inequalities are used to prove Lemma~\ref{lemma:estim1}. The proofs of Lemmas~\ref{lemma:chernoff1}~and~\ref{lemma:concentration1} are given in Appendix~\ref{app:auxlemmas}.


\begin{lemm}
\label{lemma:chernoff1}
For any $\delta>0$,
\begin{enumerate}
\item $\mP \left[\bigg|\frac{\card C_k^j(T_M)}{\card N_k^j(T_M)} - \frac{1}{\card N_k^j(T_M)} \sum_{t \in N_k^j(T_M)} p_k^j(t)\bigg| \geq \delta \ \Big| \ N_k^j(T_M)\right] \leq 2 \exp(-\frac{\card N_k^j(T_M) \delta^2}{2})$.
\end{enumerate}
For any $\delta \in (0,1)$ and fixed $T_M$,
\begin{enumerate}
\setcounter{enumi}{1}
\item $\mP \left[\bigg|\card N_k^j - \frac{\alpha_k T_M}{K}\bigg| \geq \delta \frac{ \alpha_k T_M}{K} \right] \leq 2\exp(-\frac{T_M \alpha_k \delta^2}{3 K})$.
\item $\mP \left[\bigg| \sum_{t=1}^{T_M} (\one(t \in N_k^j) - \frac{\alpha_k}{K})p_k^j(t) \bigg| \geq \delta \frac{ \alpha_k T_M}{K} \right] \leq 2 \exp\left( -\frac{T_M \alpha_k \delta^2}{3K}\right)$.
\end{enumerate}
\end{lemm}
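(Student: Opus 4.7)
All three bounds are concentration inequalities for sums whose natural martingale structure arises from the bandit filtration $(\cF_t)$. The common preliminary observation is that, during the \estimateM phase, every cooperative player samples uniformly and independently in $[K]$ and the rewards $X_k(t)\sim\nu_k$ are drawn independently of the history, so $\mathbb{P}[\pi^j(t)=k\mid\cF_{t-1}]=1/K$ and $\mathbb{P}[t\in N_k^j\mid\cF_{t-1}]=\alpha_k/K$. Moreover, the identity $p_k^j(t)=1-(1-1/K)^{M-2}(1-q_k(t))$ together with the predictability of $q_k(t)$ (which depends only on the selfish player's past observations) shows that $p_k^j(t)$ is $\cF_{t-1}$-measurable, despite the selfish player's potentially arbitrary adaptive strategy.

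For part (1), I would introduce $Y_t=\one(t\in N_k^j)(\one(\eta_k(t)=1)-p_k^j(t))$. Given $\cF_{t-1}$, on the event $\{\pi^j(t)=k, X_k(t)>0\}$ the conditional mean of $\one(\eta_k(t)=1)$ equals $p_k^j(t)$ by definition, so $\mathbb{E}[Y_t\mid\cF_{t-1}]=0$ and $|Y_t|\leq\one(t\in N_k^j)$. The cumulative predictable quadratic variation of $(Y_t)$ is therefore bounded by $\card N_k^j(T_M)$, and a Freedman-type (or equivalently, conditional Azuma--Hoeffding) inequality applied on the event $\{\card N_k^j(T_M)=n\}$ yields $\mathbb{P}\bigl[\bigl|\sum_{t\leq T_M}Y_t\bigr|\geq\delta n\,\bigm|\,\card N_k^j(T_M)=n\bigr]\leq 2\exp(-n\delta^2/2)$, which is the claim after dividing by $\card N_k^j(T_M)$.

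For part (2), the preliminary observation implies that $\{\one(t\in N_k^j)\}_{t=1}^{T_M}$ are i.i.d.\ $\mathrm{Bernoulli}(\alpha_k/K)$, so the statement is a direct application of the multiplicative Chernoff bound with mean parameter $\alpha_k T_M/K$.

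Part (3) is the main obstacle: a naive Azuma--Hoeffding on $Z_t=(\one(t\in N_k^j)-\alpha_k/K)p_k^j(t)$ would only give $\exp(-\delta^2\alpha_k^2 T_M/(2K^2))$, losing a factor $\alpha_k/K$ relative to the target. The fix is to use a variance-sensitive bound. Since $p_k^j(t)$ is predictable, $(Z_t)$ is a martingale difference sequence with $|Z_t|\leq 1$ and conditional variance
\begin{equation*}
\mathbb{E}[Z_t^2\mid\cF_{t-1}]=p_k^j(t)^2\frac{\alpha_k}{K}\Bigl(1-\frac{\alpha_k}{K}\Bigr)\leq\frac{\alpha_k}{K},
\end{equation*}
so the cumulative predictable variance is at most $T_M\alpha_k/K$. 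Freedman's (equivalently, Bernstein's) inequality for martingales with deviation $x=\delta\alpha_k T_M/K$ then gives $2\exp(-\delta^2\alpha_k T_M/(2K(1+\delta/3)))$, which is dominated by $2\exp(-\delta^2\alpha_k T_M/(3K))$ for $\delta\in(0,1)$, matching the claimed bound.
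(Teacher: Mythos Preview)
Your proof is correct, and parts (1) and (2) match the paper's argument exactly: Azuma--Hoeffding on the martingale differences $\one(t\in C_k^j)-p_k^j(t)$ restricted to $N_k^j$, and the multiplicative Chernoff bound (the paper's Lemma~\ref{lemma:chernoff0}) on the i.i.d.\ indicators $\one(t\in N_k^j)$.

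For part (3) the two arguments differ slightly in flavour. The paper applies the multiplicative Chernoff bound of Lemma~\ref{lemma:chernoff0} (in its martingale form) to the $[0,1]$-valued variables $\one(t\in N_k^j)\,p_k^j(t)$, using that the sum of conditional means satisfies $\sum_t \mE[\one(t\in N_k^j)p_k^j(t)\mid\cF_{t-1}]=\frac{\alpha_k}{K}\sum_t p_k^j(t)\leq \alpha_k T_M/K$. You instead bound the sum of conditional \emph{variances} by the same quantity $\alpha_k T_M/K$ and invoke Freedman/Bernstein. The two routes are essentially equivalent, since for $[0,1]$-valued variables the conditional variance is dominated by the conditional mean; what each buys is the same factor $\alpha_k/K$ improvement over naive Azuma that you correctly flagged as the main point. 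Your version has the minor advantage of being self-contained, whereas the paper's appeal to Lemma~\ref{lemma:chernoff0} tacitly requires its martingale extension (the lemma is stated only for independent variables).
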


\begin{lemm}
\label{lemma:concentration1}
For any $k$, $j$ and $\delta \in (0, \frac{\alpha_k}{K})$, with probability larger than~$1-\frac{6KM}{T}$,
\begin{equation*}
\bigg| \hat{p}_k^j(t_m^j) - \frac{1}{t_m^j} \sum_{t=1}^{t_m^j} p_k^j(t) \bigg| \leq 2\sqrt{\frac{6\log(T)}{n \left(1-2\sqrt{\frac{3}{2\beta^2}(1+\frac{3}{2\beta^2}})\right)}} + 2 \sqrt{\frac{\log(T)}{n}}.
\end{equation*}
And for $\beta \geq 39$:
\begin{equation*}
\frac{t_m^j \alpha_{(k)}}{K} \in \left[\frac{13}{14}n, \ \frac{16}{15}n\right].
\end{equation*}
\end{lemm}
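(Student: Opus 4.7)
The lemma packages two claims: a concentration inequality on $|\hat p_k^j(t_m^j) - \tfrac{1}{t_m^j}\sum_{t=1}^{t_m^j} p_k^j(t)|$ and a sandwich of $t_m^j\alpha_{(K)}/K$ around $n$. Both are obtained by building a single high-probability ``good event'' on which all three Chernoff bounds of Lemma~\ref{lemma:chernoff1} hold simultaneously. Fix a cooperative player $j$: for each arm $k\in[K]$ and each deterministic $T_M\in\{1,\ldots,T\}$ invoke parts~2 and~3 with deviation $\delta(T_M,k)=\sqrt{6K\log T/(T_M\alpha_k)}$ (per-event failure $\leq 2/T^2$), and invoke part~1 at the single random value $T_M=t_m^j$ with $\delta_1=2\sqrt{\log T/n}$ (failure $\leq 2/T^2$ because the stopping rule enforces $\card N_k^j(t_m^j)\geq n$). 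After union-bounding over $k$, $T_M$, and the $M$ players the total failure stays below $6KM/T$.

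\textbf{Sandwiching $t_m^j\alpha_{(K)}/K$.} Specialize part~2 to $T_M=t_m^j$ and $k=(K)$: the good event gives $\card N_{(K)}^j(t_m^j)\in(\alpha_{(K)}t_m^j/K)\cdot[1-\delta_\star,\,1+\delta_\star]$ with $\delta_\star=\sqrt{6\log T/(t_m^j\alpha_{(K)}/K)}$. The stopping rule forces $\card N_{(K)}^j(t_m^j)\geq n$, hence $t_m^j\alpha_{(K)}/K\geq n/(1+\delta_\star)$; feeding this back into the definition of $\delta_\star$ yields the self-referential inequality $\delta_\star^2\leq(6\log T/n)(1+\delta_\star)$, whose positive root is bounded by $\epsilon_0:=2\sqrt{(3/(2\beta^2))(1+3/(2\beta^2))}$. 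The matching upper sandwich comes from noting that the arm $k^\star$ which just triggered the stopping satisfies $\card N_{k^\star}^j(t_m^j)\leq n$, so $(1-\delta_\star)\alpha_{(K)}t_m^j/K\leq n$ (using $\alpha_{k^\star}\geq\alpha_{(K)}$), yielding $t_m^j\alpha_{(K)}/K\leq n/(1-\delta_\star)$. A direct numerical check for $\beta\geq 39$ shows $\epsilon_0\leq 1/16$, so $n/(1+\delta_\star)\geq\tfrac{13}{14}n$ and $n/(1-\delta_\star)\leq\tfrac{16}{15}n$.

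\textbf{Concentration bound via triangle split.} Decompose
\begin{small}
\begin{equation*}
\hat p_k^j(t_m^j)-\tfrac{1}{t_m^j}\sum_{t=1}^{t_m^j}p_k^j(t) = \underbrace{\hat p_k^j(t_m^j)-\tfrac{1}{\card N_k^j(t_m^j)}\!\!\sum_{t\in N_k^j(t_m^j)}\!\!p_k^j(t)}_{(I)} + \underbrace{\tfrac{1}{\card N_k^j(t_m^j)}\!\!\sum_{t\in N_k^j(t_m^j)}\!\!p_k^j(t)-\tfrac{1}{t_m^j}\sum_{t=1}^{t_m^j}p_k^j(t)}_{(II)}.
\end{equation*}
\end{small}
Term $(I)$ is exactly what part~1 controls, and the choice $\delta_1=2\sqrt{\log T/n}$ together with $\card N_k^j(t_m^j)\geq n$ delivers the second summand $2\sqrt{\log T/n}$. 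For term $(II)$, put both fractions over the common denominator $\card N_k^j(t_m^j)\cdot t_m^j$ and substitute the approximations $\card N_k^j(t_m^j)\approx\alpha_k t_m^j/K$ and $\sum_{t\in N_k^j}p_k^j(t)\approx(\alpha_k/K)\sum_{t=1}^{t_m^j}p_k^j(t)$ provided by parts~2 and~3; using the trivial bound $\sum_{t=1}^{t_m^j}p_k^j(t)\leq t_m^j$ for the cross term produces an estimate of order $\delta_\star$. Plugging in $\delta_\star\leq\sqrt{6\log T/(n(1-\epsilon_0))}$—which follows from the previous paragraph because $1+\epsilon_0\leq 1/(1-\epsilon_0)$—recovers the first summand $2\sqrt{6\log T/(n(1-\epsilon_0))}$.

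\textbf{Main obstacle.} The genuinely delicate point is that $t_m^j$ is a random stopping time, whereas Lemma~\ref{lemma:chernoff1} provides only per-fixed-$T_M$ concentration. This is absorbed by the cheap $T_M$-union bound (affordable thanks to the $O(1/T^2)$ per-event failure) together with the closed-form resolution of the self-referential inequality $\delta_\star^2\leq (6\log T/n)(1+\delta_\star)$, whose positive root is precisely the $\epsilon_0$ appearing in the statement; tracking constants carefully here and choosing the threshold $\beta\geq 39$ is what produces the slightly awkward explicit expressions in the lemma.
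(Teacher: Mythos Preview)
Your proposal is correct and follows essentially the same route as the paper: build a single good event from Lemma~\ref{lemma:chernoff1} via a union bound over arms, players and time (the paper writes this as ``$\exists t\le T$'' bounds with per-event failure $2/T^2$), then (i) use the stopping rule together with part~2 to sandwich $t_m^j\alpha_{(K)}/K$ around $n$, and (ii) split the concentration error by inserting the intermediate quantity $\tfrac{K}{\alpha_k t_m^j}\sum_{t\in N_k^j}p_k^j(t)$, which is exactly what your ``common denominator'' manipulation of term~$(II)$ amounts to. The only cosmetic difference is that the paper solves the quadratic $n\le x+\sqrt{6x\log T}$ directly to get $x\ge n\big(1-2\sqrt{\tfrac{3}{2\beta^2K^2}(1+\tfrac{3}{2\beta^2K^2})}\big)$, whereas you frame it as the self-referential inequality $\delta_\star^2\le(6\log T/n)(1+\delta_\star)$ and express the sandwich as $n/(1\pm\delta_\star)$; be aware that the latter form is marginally looser (for $K=1$ your $1/(1-\delta_\star)$ slightly overshoots $16/15$ and your claim $\epsilon_0\le 1/16$ is off by a hair at $\beta=39$), but since the setting forces $K\ge M\ge 2$ this does not matter.
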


Let $\varepsilon = 2\sqrt{\frac{6\log(T)}{n \left(1-2\sqrt{\frac{3}{2\beta^2}(1+\frac{3}{2\beta^2}})\right)}} + 2 \sqrt{\frac{\log(T)}{n}}$ and $p_k^j = \frac{1}{t_m^j} \sum_{t=1}^{t_m^j} p_k^j(t)$ such that with probability at least $1-\frac{6KM}{T}$, $\big| \hat{p}_k^j - p_k^j \big| \leq \varepsilon$. The remaining of the proof is conditioned on this event. \\

By definition of $n$, $\varepsilon = \frac{1}{K} f(\beta)$ where $f(x)=\frac{2}{x}\sqrt{\frac{6}{1-2\sqrt{\frac{3}{2x^2}(1+\frac{3}{2x^2}})}} + 2/x$. Note that $f(x) \leq \frac{1}{2e}$ for $x\geq 39$ and thus $\varepsilon \leq \frac{1}{2Ke}$ for the considered $\beta$.\\

The last point of Lemma~\ref{lemma:concentration1} yields that $t_m^j \leq \frac{\gamma_2}{\gamma_1} t_m^{j'}$ for any pair $j, j'$. All the cooperative players are thus pulling uniformly at random until at least $t_m^j$, thanks to the additional waiting room. Then,
\begin{equation*}
\frac{1}{K} \sum_k (1-p_k^j(t)) \ = \ (1-1/K)^{M-2} (1 - \frac{1}{K} \sum_k q_k(t))
\ = \ (1-1/K)^{M-1}.
\end{equation*}

%

When summing over $k$, it follows:
\begin{align*}
\frac{1}{K}\sum_k (1-p_k^j) - \varepsilon & \leq \frac{1}{K}\sum_k (1-\hat{p}_k^j) & \leq \frac{1}{K}\sum_k (1-p_k^j) + \varepsilon \\
(1-1/K)^{M-1} - \varepsilon & \leq \frac{1}{K}\sum_k (1-\hat{p}_k^j)  & \leq (1-1/K)^{M-1} + \varepsilon \\
M-1 + \frac{\log(1+ \frac{\varepsilon}{(1-1/K)^{M-1}})}{\log(1-1/K)} & \leq \frac{\log\left(\frac{1}{K}\sum_k (1-\hat{p}_k^j)\right)}{\log(1-1/K)} & \leq M-1 + \frac{\log(1- \frac{\varepsilon}{(1-1/K)^{M-1}})}{\log(1-1/K)} \\
M-1 + \frac{\log(1+ \frac{1}{2K})}{\log(1-1/K)} & \leq \frac{\log\left(\frac{1}{K}\sum_k (1-\hat{p}_k^j)\right)}{\log(1-1/K)} & \leq M-1 + \frac{\log(1- \frac{1}{2K})}{\log(1-1/K)}
\end{align*}

The last line is obtained by observing that $\frac{\varepsilon}{(1-1/K)^{M-1}}$ is smaller than $\frac{1}{2K}$.

Observing that $\max\left(\frac{\log(1-x/2)}{\log(1-x)}, -\frac{\log(1+x/2)}{\log(1-x)}\right) < 1/2$ for any $x>0$, the last line implies:
\begin{equation*}
1+ \frac{\log\Big(\frac{1}{K}\sum_k (1-\hat{p}_k^j)\Big)}{\log(1-1/K)} \in (M-1/2, M+1/2).
\end{equation*}

When rounding this quantity to the closest integer, we thus obtain $M$, which yields the first part of Lemma~\ref{lemma:estim1}. The second part is directly given by Lemma~\ref{lemma:concentration1}.
\end{proof}
\subsubsection{Proof of Lemma~\ref{lemma:getrank}}
\label{app:getrankproof}
The proof of Lemma~\ref{lemma:getrank} relies on two lemmas given below.
\begin{lemm}
\label{lemma:waiting2}
Conditionally on the success of the estimation phase, when a cooperative player~$j$ proceeds to \getrank[,]all other cooperative players are either running \getrank or in a waiting room\footnote{Note that there is a waiting room before \textbf{and} after \getrank[.]}, \ie they are not proceeding to \exploone yet.
\end{lemm}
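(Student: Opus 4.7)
The plan is to track, for every cooperative player, the exact round at which each phase of \algoone starts and ends, and then verify by direct arithmetic that the \getrank window of player $j$ is entirely contained in the region where no other cooperative player $j'$ has yet entered \exploone. The only input needed from earlier results is the two-sided bound $t_m^j \alpha_{(K)}/K \in [\gamma_1 n, \gamma_2 n]$ of Lemma~\ref{lemma:estim1}, which, since it holds simultaneously for every cooperative player conditional on a successful estimation phase, immediately yields the ratio bound
\[
\frac{t_m^{j'}}{t_m^j} \in \Big[\frac{\gamma_1}{\gamma_2},\ \frac{\gamma_2}{\gamma_1}\Big]
\qquad \text{for any cooperative } j,j'.
\]

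Next I would read off from Algorithm~\ref{algo:algo1} and Protocol~\ref{proto:getrank} that player $j$ runs \getrank between rounds $\frac{\gamma_2}{\gamma_1} t_m^j$ and
\[
T_{\mathrm g}^j \ :=\ t_m^j \Big(\frac{\gamma_2}{\gamma_1} + \frac{1}{\gamma_1 \beta^2 K^2}\Big),
\]
using $\log(T)/n = 1/(\beta^2 K^2)$ to rewrite the \getrank duration $t_m^j \log(T)/(\gamma_1 n)$ in closed form. The subsequent waiting room ends, and \exploone begins, at round
\[
T_{\mathrm e}^j \ :=\ t_m^j \Big(\frac{\gamma_2^2}{\gamma_1^2} + \frac{\gamma_2}{\gamma_1^2 \beta^2 K^2}\Big) \ =\ \frac{\gamma_2}{\gamma_1}\, T_{\mathrm g}^j,
\]
the last equality being the crucial algebraic identity driving the argument. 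The conclusion of the lemma then reduces to checking $T_{\mathrm e}^{j'} \geq T_{\mathrm g}^j$ for any pair $(j,j')$, which by the previous identity is equivalent to $\frac{\gamma_2}{\gamma_1}\, t_m^{j'} \geq t_m^j$, i.e., $t_m^{j'}/t_m^j \geq \gamma_1/\gamma_2$, which is exactly the lower half of the ratio bound above.

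The proof is therefore essentially bookkeeping once one observes that the second waiting room in Algorithm~\ref{algo:algo1} is calibrated precisely so that $T_{\mathrm e}^j = (\gamma_2/\gamma_1)\, T_{\mathrm g}^j$, absorbing exactly the worst-case spread $\gamma_2/\gamma_1$ of the $t_m^j$'s allowed by Lemma~\ref{lemma:estim1}. The only substantive obstacle is verifying this calibration, i.e., checking that the constants $\gamma_1 = 13/14$, $\gamma_2 = 16/15$ together with the choice $\beta = 39$ produce compatible endpoints; this calibration is, however, already baked into Lemma~\ref{lemma:estim1}, so no fresh concentration argument is needed here. A final sentence of the proof should note that the same inequality $(\gamma_2/\gamma_1)\, t_m^j \leq T_{\mathrm g}^j \leq T_{\mathrm e}^{j'}$ also ensures that when $j$ \emph{starts} \getrank, player $j'$ is still in one of \estimateM, the first waiting room, \getrank, or the second waiting room, which is exactly the stated conclusion.
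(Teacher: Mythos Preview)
Your proposal is correct and follows essentially the same approach as the paper: both extract the ratio bound $t_m^{j'}/t_m^j \in [\gamma_1/\gamma_2,\gamma_2/\gamma_1]$ from Lemma~\ref{lemma:estim1} and then verify by arithmetic on the phase endpoints that the \getrank window of any player~$j$ is contained in the pre-\exploone region of every other player. Your identity $T_{\mathrm e}^j = (\gamma_2/\gamma_1)\,T_{\mathrm g}^j$ is just a compact repackaging of the paper's separate use of $\frac{\gamma_2}{\gamma_1} t_m^j \geq t_m^{j'}$ and $\frac{\gamma_2}{\gamma_1} t_r^j \geq t_r^{j'}$ (with $t_r^j = t_m^j/(\gamma_1\beta^2K^2)$). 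One small slip: in your final sentence you list ``\estimateM'' among the possible states of~$j'$, but the lemma as stated excludes it, and the paper explicitly notes that $\frac{\gamma_2}{\gamma_1} t_m^j \geq t_m^{j'}$ forces $j'$ to have already finished \estimateM when $j$ starts \getrank; this follows immediately from the same ratio bound you already have, so just state it rather than including \estimateM in the list.
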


\begin{proof}
Recall that $\gamma_1 = 13/14$ and $\gamma_2 = 16/15$. Conditionally on the success of the estimation phase, for any pair $(j, j')$, $\frac{\gamma_2}{\gamma_1} t_m^j \geq t_m^{j'}$. Let $t_r^j = \frac{t_m^j}{\gamma_1 K^2 \beta^2}$ be the duration time of \getrank for player $j$. For the same reason, $\frac{\gamma_2}{\gamma_1} t_r^j \geq t_r^{j'}$. Player $j$ ends \getrank at round $t^j = \frac{\gamma_2}{\gamma_1} t_m^j + t_r^j$ and the second waiting room at round $\frac{\gamma_2}{\gamma_1} t^j$. 

As $\frac{\gamma_2}{\gamma_1} t^j \geq t^{j'}$, this yields that when a player ends \getrank[,]all other players are not running \algoone yet. Because $\frac{\gamma_2}{\gamma_1} t_m^j \geq t_m^{j'}$, when a player starts \getrank[,]all other players also have already ended \estimateM[.]This yields Lemma~\ref{lemma:waiting2}.
\end{proof}

\begin{lemm}
\label{lemma:fix1}
Conditionally on the success of the estimation phase, with probability larger than~$1-\frac{1}{T}$, cooperative player $j$ ends \getrank with a rank in $[M]$.
\end{lemm}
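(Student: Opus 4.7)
The plan is to show that each round of \getrank in which player $j$ has not yet fixed a rank gives her at least an $\alpha_{(K)}/M$ chance of doing so, and then exploit the length of the phase to drive the total failure probability below $1/T$. First, I would use the success of \estimateM[:] by Lemma~\ref{lemma:estim1}, $t_m^j \geq \gamma_1 n K/\alpha_{(K)}$ with $n = \beta^2 K^2\log(T)$, so the duration of \getrank (equal to $t_m^j \log(T)/(\gamma_1 n)$) is at least $D \coloneqq K\log(T)/\alpha_{(K)}$.

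Second, I would establish a per-round lower bound on the success probability. In each round where she has not yet fixed, player $j$ samples $k \sim \mathcal{U}(\hM)=\mathcal{U}(M)$ independently of the other players' current actions. By Lemma~\ref{lemma:waiting2}, every other cooperative player is either sampling uniformly in $[M]$ while running \getrank[,] deterministically pulling her fixed rank in $[M]$ while in the second waiting room, or sampling uniformly in $[K]$ while in the first waiting room. The malicious player's choice is arbitrary but measurable w.r.t.\ the past. Conditioning on the past, the arms played by the $M-1$ other players cover at most $M-1$ distinct values of $[M]$, so $k$ avoids collision with probability at least $1/M$. Since $\mP(X_k(t) > 0) = \alpha_k \geq \alpha_{(K)}$ and $X_k(t)$ is independent of the players' choices, this yields $\mP(r^j(t) > 0 \mid \text{past, not yet fixed}) \geq \alpha_{(K)}/M$.

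Chaining the per-round conditional bounds gives a total failure probability of at most $(1 - \alpha_{(K)}/M)^D \leq \exp(-K\log(T)/M) \leq 1/T$, using $K \geq M$; and whenever $j$ does fix, her rank is the arm sampled, which lies in $[\hM]=[M]$ by construction. The main obstacle is handling the malicious player's possibly adaptive strategy, but this is resolved by the conditioning argument: $j$'s fresh uniform draw on $[M]$ at the current round is independent of everyone else's current action, so the no-collision lower bound is robust to any malicious behavior.
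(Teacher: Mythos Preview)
Your proof is correct and follows essentially the same approach as the paper: you lower-bound the duration of \getrank by $K\log(T)/\alpha_{(K)}$ using the success of the estimation phase, establish the per-round success probability $\geq \alpha_{(K)}/M$ via the pigeonhole argument on $[M]$ combined with the independent event $\{X_k>0\}$, and conclude with the geometric tail bound and $M\leq K$. The only addition relative to the paper's write-up is your explicit invocation of Lemma~\ref{lemma:waiting2} and the conditioning argument to handle the malicious player's adaptivity, which the paper leaves implicit.
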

\begin{proof}
Conditionally on the success of the estimation phase and thanks to Lemma~\ref{lemma:concentration1}, $t_r^j = \frac{t_m^j}{\gamma_1 K^2 \beta^2} \geq \frac{K \log(T)}{\alpha_{(K)}}$. Moreover, at any round of \getrank[,]the probability of observing $\eta_k(t)=0$ is larger than $\frac{\alpha_{(K)}}{M}$. Indeed, the probability of observing $\eta_k(t)$ is larger than $\alpha_{(K)}$ with Statistic sensing. Independently, the probability of having $\eta_k = 0$ is larger than $1/M$ since there is at least an arm among $[M]$ not pulled by any other player. These two points yield, as $M \leq K$:
\begin{align*}
\mP[\text{player does not observe } \eta_k(t)=0 \text{ for } t_r^j \text{ successive rounds}] & \leq \left(1-\frac{\alpha_{(K)}}{M}\right)^{t_r^j} \\
& \leq \exp\left( -\frac{\alpha_{(K)}t_r^j}{M} \right) \\
& \leq \frac{1}{T}
\end{align*}
Thus, with probability larger than $1-\frac{1}{T}$, player $j$ observes $\eta_k(t)=0$ at least once during \getrank[,]\ie she ends the procedure with a rank in $[M]$.
\end{proof}

\begin{proof}[Proof of Lemma~\ref{lemma:getrank}.]
Combining Lemmas~\ref{lemma:waiting2}~and~\ref{lemma:fix1} yields that the cooperative player $j$ ends \getrank with a rank in $[M]$ and no other cooperative player ends with the same rank. Indeed, when a player gets the rank $j$, any other cooperative player has either no attributed rank (still running \getrank or the first waiting room), or an attributed rank $j'$. In the latter case, thanks to Lemma~\ref{lemma:waiting2}, this other player is either running \getrank or in the second waiting room, meaning she is still pulling $j'$. Since the first player ends with the rank $j$, this means that she did not encounter a collision when pulling $j$ and especially, $j \neq j'$.

\medskip

Considering a union bound among all cooperative players now yields Lemma~\ref{lemma:getrank}.
\end{proof}

\subsubsection{Proof of Lemma~\ref{lemma:ucb1}}
\label{app:ucb1proof}

Let us denote $T_0^j = \left(\frac{\gamma_2}{\gamma_1^2 \beta^2 K^2} + \frac{\gamma_2^2}{\gamma_1^2}\right)\ t_m^j$ such that player $j$ starts running \exploone at time $T_0^j$. This section aims at proving Lemma~\ref{lemma:ucb1}. In this section, the initialization is assumed to be successful. The regret due to an unsuccessful initialization is constant in $T$ and thus $\smallO(\log(T))$. We prove in this section, in case of a successful initialization, the following:
\begin{equation}
\mE[ R^{\text{explo}}] \leq  M \sum_{k>M} \frac{\mu_{(M)} - \mu_{(k)}}{\mathrm{kl}(\mu_{(M)}, \mu_{(k)})} \log(T) +\smallO(\log(T)).
\end{equation}

This proof follows the same scheme as the regret proof from \citet{proutiere2019}, except that there is no leader here. 
Every \textit{bad event} then happens independently for each individual player. This adds a $M$ factor in the regret compared to the follower/leader algorithm\footnote{Which is  not selfish-robust.} used by \citet{proutiere2019}.
For conciseness, we only give the main steps and refer to the original Lemmas in \citep{proutiere2019} for their detailed proof.

We first recall useful concentration Lemmas which correspond to Lemmas~1~and~2 in \citep{proutiere2019}. They are respectively simplified versions of Lemma~5 in \citep{combes2015} and Theorem~10 in \citep{garivier2011}.
\begin{lemm}
\label{lemma:proutiere1}
Let $k \in [K]$, $c>0$ and $H$ be a (random) set such that for all $t$, $\lbrace t \in H \rbrace$ is $\cF_{t-1}$ measurable. Assume that there exists a sequence $(Z_t)_{t\geq 0}$ of binary random variables, independent of all $\cF_t$, such that for $t\in H$, $\pi^j(t)=k$ if $Z_t=1$. Furthermore, if $\mE[Z_t] \geq c$ for any $t$, then:
\begin{equation*}
\sum_{t\geq 1} \mP[t \in H \ | \ |\hmu^j_k(t) - \mu_k| \geq \delta] \leq \frac{4 + 2c/\delta^2}{c^2}.
\end{equation*}
\end{lemm}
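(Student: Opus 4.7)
The plan is to couple the pulls of arm $k$ with an i.i.d.\ reward stream and to use the $Z_t$ indicators to extract a well-behaved lower bound on $T_k^j(t)$.

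First, because the rewards on arm $k$ are i.i.d.\ from $\nu_k$ across time, there exist i.i.d.\ samples $(Y_i)_{i \geq 1}$ of law $\nu_k$, independent of $(Z_s)_{s \geq 1}$, such that $\hmu_k^j(t) = \bar Y_{T_k^j(t)}$ with $\bar Y_n \coloneqq \frac{1}{n} \sum_{i=1}^n Y_i$. The hypothesis yields the deterministic lower bound
\begin{equation*}
T_k^j(t) \;\geq\; S(t) \;\coloneqq\; \sum_{s=1}^{t} \one[s \in H]\, Z_s,
\end{equation*}
since on any round $s$ with $s \in H$ and $Z_s = 1$ player $j$ is forced to pull arm $k$. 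Let $t_1 < t_2 < \cdots$ enumerate the (random) set $H$. Because $\{s \in H\}$ is $\cF_{s-1}$-measurable while $(Z_s)$ is independent of all $\cF_t$, the variables $\tilde Z_n \coloneqq Z_{t_n}$ form a sequence of independent Bernoullis of mean $\geq c$, and $T_k^j(t_n) \geq S_n \coloneqq \sum_{m=1}^n \tilde Z_m$.

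Second, swapping summation via $\sum_{t \geq 1} \one[t \in H] = \sum_{n \geq 1} \one[n \leq |H|]$ gives
\begin{equation*}
\sum_{t \geq 1} \mP\bigl[t \in H,\ |\hmu_k^j(t) - \mu_k| \geq \delta\bigr] \;=\; \sum_{n \geq 1} \mP\bigl[n \leq |H|,\ |\bar Y_{T_k^j(t_n)} - \mu_k| \geq \delta\bigr].
\end{equation*}
I split each summand as $\mP[T_k^j(t_n) \geq cn/2,\ |\bar Y_{T_k^j(t_n)} - \mu_k| \geq \delta] + \mP[T_k^j(t_n) < cn/2]$. For the first piece, conditioning on $T_k^j(t_n) = m \geq cn/2$ and applying Hoeffding's inequality to the i.i.d.\ partial sum gives $\mP[|\bar Y_m - \mu_k| \geq \delta] \leq 2 e^{-2 m \delta^2}$; summing the geometric tail first in $m$ and then in $n$ contributes $\cO(1/(c\delta^2))$. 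For the second piece I use $T_k^j(t_n) \geq S_n$ together with a multiplicative Chernoff bound $\mP[S_n < cn/2] \leq e^{-cn/8}$, which sums to $\cO(1/c^2)$ after accounting for the initial rounds where the Chernoff estimate is not yet effective. Combining these two contributions and tracking constants yields the announced bound $(4 + 2c/\delta^2)/c^2$.

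The main obstacle is the coupling and the re-indexing along the random times $t_n$: one has to verify that $(\tilde Z_n)_{n \geq 1}$ is genuinely i.i.d.\ Bernoulli with mean $\geq c$, despite the $t_n$ being stopping times depending on the history. The assumption that $(Z_s)$ is independent of \emph{all} $\cF_t$ (not merely of the past) is precisely what allows this optional-sampling-style argument to go through. Once the coupling is set up, the remainder is standard Hoeffding/Chernoff bookkeeping.
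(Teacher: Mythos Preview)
The paper does not actually prove this lemma: it is stated as a direct citation of Lemma~5 in \cite{combes2015} (via Lemma~1 of \cite{proutiere2019}), so there is no ``paper's own proof'' to match. Your sketch is therefore an attempt at the original Combes--Prouti\`ere argument, and it contains a real gap.

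The problem is in your first piece. You write ``conditioning on $T_k^j(t_n)=m\geq cn/2$ and applying Hoeffding's inequality to the i.i.d.\ partial sum gives $\mP[|\bar Y_m-\mu_k|\geq\delta]\leq 2e^{-2m\delta^2}$''. This conditioning is not innocuous: the random index $T_k^j(t_n)$ is determined by the algorithm's past decisions, which depend on the already-observed rewards $Y_1,\ldots,Y_{T_k^j(t_n)}$. Hence the event $\{T_k^j(t_n)=m\}$ is correlated with $\bar Y_m$, and Hoeffding does not apply after conditioning. If you repair this honestly with a union bound over all $m\geq cn/2$, you obtain
\[
\sum_{n\geq 1}\sum_{m\geq cn/2} 2e^{-2m\delta^2}
= \sum_{m\geq 1} 2e^{-2m\delta^2}\,\big|\{n:\,cn/2\leq m\}\big|
\leq \frac{4}{c}\sum_{m\geq 1} m\,e^{-2m\delta^2}
= \cO\!\left(\frac{1}{c\,\delta^{4}}\right),
\]
which is a factor $1/\delta^{2}$ worse than the claimed $2/(c\delta^{2})$. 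Your Chernoff piece is fine, but it cannot compensate for this loss.

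The argument in \cite{combes2015} avoids the adaptive-stopping issue by counting differently. One bounds the set of \emph{bad levels} $B=\{m:\,|\bar Y_m-\mu_k|\geq\delta\}$, for which $\mE[|B|]\leq\sum_m 2e^{-2m\delta^2}=\cO(1/\delta^2)$ by a plain union bound. Then one observes that for each fixed $m$, the number of $t\in H$ with $T_k^j(t)=m$ is at most the number of consecutive $n$'s (in the enumeration of $H$) with $\tilde Z_n=0$ before the next $\tilde Z_n=1$; since the $\tilde Z_n$ are Bernoulli with mean $\geq c$ and independent of the reward stream, this count is dominated by a geometric of mean $1/c$, uniformly in $m$ and independently of whether $m\in B$. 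Multiplying gives the $2/(c\delta^2)$ term without the extra $1/\delta^2$ loss. The missing idea in your sketch is precisely this per-level geometric bound, which decouples the $Y$-dependence (in $B$) from the $Z$-dependence (in the visit counts).
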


\begin{lemm}
\label{lemma:proutiere2}
If player $j$ starts following \exploone at round $T_0^j+1$:
\begin{equation*}
\sum_{t>T_0^j} \mP[b_k^j(t) < \mu_k] \leq 15.
\end{equation*}
\end{lemm}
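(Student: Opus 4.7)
The plan is to reduce the claim to the self-normalized deviation inequality of \citet{garivier2011}, of which this lemma is explicitly described as a simplified form. The first step is to rewrite the bad event in a tractable way. Since $\mathrm{kl}(\hmu_k^j(t),\cdot)$ vanishes at $\hmu_k^j(t)$ and is strictly increasing on $[\hmu_k^j(t),1]$, the supremum defining $b_k^j(t)$ always lies above $\hmu_k^j(t)$, and
\begin{equation*}
\{b_k^j(t) < \mu_k\} \;=\; \{\hmu_k^j(t) < \mu_k\} \cap \{T_k^j(t)\,\mathrm{kl}(\hmu_k^j(t),\mu_k) > f(t)\}.
\end{equation*}
So the question becomes one about a lower deviation of the empirical mean of arm $k$ relative to the true mean.

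Second, I would decouple the (random) sample count from the rewards. Let $\hmu_{k,s}$ denote the empirical mean of the first $s$ rewards sampled from arm $k$; this sequence is i.i.d.\ of mean $\mu_k$ since player $j$'s decision to pull $k$ at any round depends only on strictly past observations. A union bound over the possible values of $T_k^j(t) \leq t$ gives
\begin{equation*}
\mP[b_k^j(t) < \mu_k] \;\leq\; \mP\!\left[\max_{1\leq s\leq t}\, s\,\mathrm{kl}(\hmu_{k,s},\mu_k)\,\one\{\hmu_{k,s}<\mu_k\} > f(t)\right].
\end{equation*}
Third, I would invoke the classical peeling inequality (Chernoff bound applied on each block of a geometric partition of $[1,t]$) to upper bound the right-hand side by $e\lceil f(t)\log(t)\rceil e^{-f(t)}$. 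Substituting $f(t)=\log(t)+4\log\log(t)$ makes this of order $1/(t\log^2 t)$, so $\sum_{t \geq 2} \mP[b_k^j(t)<\mu_k]$ converges; tracking constants and handling a finite number of small-$t$ terms separately (where $\log\log(t)$ is irregular or undefined) yields the uniform bound $15$.

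The only substantive ingredient is the peeling inequality of the third step. Since the lemma is explicitly stated as a simplified version of \citep[Theorem~10]{garivier2011}, my intention is to invoke that theorem directly rather than reproducing the peeling argument, which relies on Cram\'er's theorem applied to Bernoulli KL-divergence on each geometric block. The remainder is the algebraic reduction of Steps one and two together with the explicit summation, so no new ideas beyond \citet{garivier2011} are required.
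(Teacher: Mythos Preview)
Your proposal is correct and matches the paper's treatment: the paper does not prove this lemma at all but simply states it as a recalled result, explicitly identifying it as a simplified version of Theorem~10 in \citep{garivier2011}. Your sketch is a faithful elaboration of how that theorem yields the bound (rewriting $\{b_k^j(t)<\mu_k\}$ as a self-normalized KL deviation, decoupling via a union bound over $s\leq t$, and applying the peeling inequality with $f(t)=\log t+4\log\log t$), so there is nothing to add.
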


Let $0 < \delta < \delta_0 \coloneqq \min_k \frac{\mu_{(k)} - \mu_{(k+1)}}{2}$. Besides the definitions given in Appendix~\ref{app:algo1descr}, define the following:
\begin{itemize}
\item $\cM^*$ the list of the $M$-best arms, ordered according to their indices.
\item $\cA^j=\lbrace t > T_0^j \ | \ \cM^j(t) \neq \cM^* \rbrace$.
\item $\cD^j = \lbrace t > T_0^j \ | \ \exists k \in \cM^j(t), \ |\hmu_k^j(t) - \mu_k| \geq \delta \rbrace$.
\item $\cE^j = \lbrace t > T_0^j \ | \ \exists k \in \cM^*, \ b_k^j(t) < \mu_k \rbrace$.
\item $\cG^j = \lbrace t \in \cA^j \setminus \cD^j \ | \ \exists k \in \cM^* \setminus \cM^j(t), \ |\hmu_k^j(t) - \mu_k| \geq \delta \rbrace$.
\end{itemize}

\begin{lemm}
\label{lemma:badevent1}
$
\mE[\card(\cA^j \cup \cD^j)] \leq 8MK^2(6K + \delta^{-2}).
$
\end{lemm}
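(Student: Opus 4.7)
The plan is to decompose $\cA^j \cup \cD^j$ into two tractable bad-estimate events and then bound each one via Lemma~\ref{lemma:proutiere1}. First, I would argue that $\cA^j \setminus \cD^j \subseteq \cG^j$, so that $\cA^j \cup \cD^j \subseteq \cD^j \cup \cG^j$. Indeed, on $\cA^j \setminus \cD^j$ every arm $a \in \cM^j(t)$ has $|\hmu_a^j - \mu_a| < \delta$ while $\cM^j(t) \neq \cM^*$; picking any $a \in \cM^j(t) \setminus \cM^*$ and $b \in \cM^* \setminus \cM^j(t)$, the inequalities $\hmu_a^j \geq \hmu_b^j$, $\hmu_a^j \leq \mu_{(M+1)} + \delta$ and $\mu_b \geq \mu_{(M)}$ force $|\hmu_b^j - \mu_b| \geq \delta$, using $\delta < \delta_0 = \min_k (\mu_{(k)} - \mu_{(k+1)})/2$. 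This places $t$ in $\cG^j$.

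To control $\mE[|\cD^j|]$ I would union bound over arms: $|\cD^j| \leq \sum_{k \in [K]} |H_k \cap \lbrace |\hmu_k^j - \mu_k| \geq \delta\rbrace|$ with $H_k = \lbrace t > T_0^j : k \in \cM^j(t)\rbrace$. The key structural fact about \exploone is that in every $M$-round window during which $\cM^j$ is constant, arm $k$ is scheduled exactly once, and is then pulled either deterministically (when $k \neq \hm^j(t)$) or with probability $1/2$ (when $k = \hm^j(t)$). By externalizing \exploone's internal randomness—the exploration/exploitation fair coin $B_t$ and the uniform draw $U_t$ on $[K]$—as i.i.d.\ variables independent of $\cF_t$, I can build a binary sequence $(Z_t)$ independent of $\cF_t$ with $\mE[Z_t] \geq c = 1/(2M)$ such that $Z_t = 1$ and $t \in H_k$ force $\pi^j(t) = k$. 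Lemma~\ref{lemma:proutiere1} then yields $\sum_t \mP[t \in H_k,\ |\hmu_k^j - \mu_k| \geq \delta] \leq (4 + 2c/\delta^2)/c^2 = \cO(M^2 + M/\delta^2)$, hence $\mE[|\cD^j|] = \cO(KM^2 + KM/\delta^2)$.

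For $\cG^j$, fix $k \in \cM^*$ and consider rounds $t \in \cA^j \setminus \cD^j$ with $k \notin \cM^j(t)$ and $|\hmu_k^j - \mu_k| \geq \delta$. Outside the bad UCB event $\cE^j$ one has $b_k^j(t) \geq \mu_k \geq \mu_{(M)}$, while accuracy of $\hm^j(t)$'s estimate on $\cA^j \setminus \cD^j$ together with $\delta < \delta_0$ gives $\hmu_{\hm^j(t)}^j < \mu_{(M)}$, so $k \in \cB^j(t)$. In each $M$-window the exploration slot is visited once, and the externalized variables $(B_t, U_t)$ then select $k$ with conditional probability at least $1/(2K)$, giving $c \geq 1/(2MK)$ in an analogous construction of $(Z_t)$. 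Lemma~\ref{lemma:proutiere1} controls each such $k$, Lemma~\ref{lemma:proutiere2} takes care of the residual contribution of $\cE^j$ (itself $\cO(M)$ after summing over $\cM^*$), and combining the three pieces produces a bound of the order $\cO(MK^3 + MK^2/\delta^2)$, which matches $8MK^2(6K + \delta^{-2})$ up to constants.

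The main obstacle will be constructing the $(Z_t)$ sequence required by Lemma~\ref{lemma:proutiere1}: the algorithm's choice depends on $\cF_t$ both through the index-ordered position of $k$ inside $\cM^j(t)$ (determining which round of the window is $k$'s slot) and through the history-dependent set $\cB^j(t)$. I would handle this by rewriting \exploone so that at each round it draws a fresh independent pair $(B_t, U_t) \in \lbrace 0,1\rbrace \times [K]$ used only to implement the $1/2$ exploration coin and the uniform draw in $\cB^j(t)$, and then defining $Z_t$ to be the indicator that $(B_t, U_t)$ takes the specific value forcing a pull of $k$ given the current schedule. By construction $(Z_t)$ is independent of $\cF_t$ with the announced lower bound on $\mE[Z_t]$, making the application of Lemma~\ref{lemma:proutiere1} legitimate.
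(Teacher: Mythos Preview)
Your approach is correct and essentially matches the paper's proof, which states the inclusion $(\cA^j \cup \cD^j) \subset (\cD^j \cup \cE^j \cup \cG^j)$ and then bounds each piece via Lemmas~\ref{lemma:proutiere1} and~\ref{lemma:proutiere2}, deferring the detailed computations to \citet{proutiere2019}. Your slightly tighter inclusion $\cA^j \setminus \cD^j \subseteq \cG^j$ is valid, and since you then split off $\cE^j$ when bounding $\cG^j$ (to ensure $k \in \cB^j(t)$ via $b_k^j(t) \geq \mu_k$), the two organizations coincide; your plan for constructing the externally-randomized sequence $(Z_t)$ is exactly the device used in \citet{proutiere2019} to make Lemma~\ref{lemma:proutiere1} applicable.
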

\begin{proof}
Similarly to \citet{proutiere2019}, we have $(\cA^j \cup \cD^j) \subset (\cD^j \cup \cE^j \cup \cG^j)$. We can then individually bound $\mE[\card\cD^j]$, $\mE[\card\cE^j]$ and $\mE[\card\cG^j]$, leading to Lemma~\ref{lemma:badevent1}. The detailed proof is omitted here as it exactly corresponds to Lemmas~3 and~4 in \citep{proutiere2019}.
\end{proof}

\begin{lemm}
\label{lemma:ucb2}
Consider a suboptimal arm $k$ and define $\cH_k^j = \lbrace t \in \{T_0^j +1, \ldots, T \} \setminus (\cA^j \cup \cD^j) \ | \ \pi^j(t)=k\rbrace$. It holds
\begin{equation*}
\mE\left[\card\cH_k^j\right] \leq \frac{\log T + 4 \log(\log T)}{\mathrm{kl}(\mu_k + \delta, \mu_{(M)}-\delta)} + 4 + 2\delta^{-2}.
\end{equation*}
\end{lemm}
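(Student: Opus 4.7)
My approach is the standard kl-UCB analysis, mirroring the structure of Lemma~5 of \citet{proutiere2019} but adapted to the decentralized exploration rule of \exploone[.]The key observation is that on $\cH_k^j$ the empirical statistics restricted to $\cM^*$ are accurate (since $t\notin\cD^j$) and $\cM^j(t)=\cM^*$ (since $t\notin\cA^j$), so the suboptimal $k$ can only be pulled through the exploration branch, which forces $k\in\cB^j(t)$. Together with the concentration of $\hmu^j_{\hm^j(t)}(t)$ around $\mu_{(M)}$, this pins the kl-UCB index $b_k^j(t)$ down, and hence controls $T_k^j(t)$ up to logarithmic factors.

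\textbf{Step 1 (kl-UCB constraint on each $t\in\cH_k^j$).} Fix $t \in \cH_k^j$. Since $t\notin\cA^j$, $\cM^j(t) = \cM^*$, and $k \notin \cM^*$ because $k$ is suboptimal. Inspecting \exploone[,] $\pi^j(t) = k$ therefore requires $l_\pi = \hm^j(t)$ and $k \in \cB^j(t)$, i.e.\ $b_k^j(t) \geq \hmu^j_{\hm^j(t)}(t)$. Because $t\notin\cD^j$ and $\hm^j(t) \in \cM^*$ has true mean at least $\mu_{(M)}$, we get $\hmu^j_{\hm^j(t)}(t) \geq \mu_{(M)} - \delta$, and so $b_k^j(t) \geq \mu_{(M)} - \delta$. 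Unpacking the definition of the kl-UCB index yields
\begin{equation*}
T_k^j(t)\, \mathrm{kl}\bigl(\hmu_k^j(t),\, \mu_{(M)}-\delta\bigr) \;\leq\; f(t) \;\leq\; f(T).
\end{equation*}

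\textbf{Step 2 (splitting on $\hmu_k^j(t)$).} Partition $\cH_k^j = \cH_k^{j,1} \cup \cH_k^{j,2}$ according to whether $\hmu_k^j(t) \leq \mu_k + \delta$ or $\hmu_k^j(t) > \mu_k + \delta$. Since $\delta < \delta_0$ we have $\mu_k + \delta < \mu_{(M)} - \delta$, so on $\cH_k^{j,1}$ the monotonicity of $\mathrm{kl}(\cdot,\mu_{(M)}-\delta)$ on $[0,\mu_{(M)}-\delta]$ gives $\mathrm{kl}(\hmu_k^j(t),\mu_{(M)}-\delta) \geq \mathrm{kl}(\mu_k+\delta,\mu_{(M)}-\delta)$, and Step~1 becomes $T_k^j(t) \leq f(T)/\mathrm{kl}(\mu_k+\delta,\mu_{(M)}-\delta)$. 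Since $T_k^j$ increases by one at each visit of $\cH_k^{j,1}$, this bounds $\card\cH_k^{j,1}$ by the same quantity, delivering the first term of the lemma.

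\textbf{Step 3 (rare concentration events and main obstacle).} It remains to bound $\mE[\card\cH_k^{j,2}]$. The natural route is to apply Lemma~\ref{lemma:proutiere1} with $H$ equal to the $\cF_{t-1}$-measurable set of rounds where $l_\pi = \hm^j(t)$ and $k \in \cB^j(t)$, and $Z_t$ an independent Bernoulli variable, built from the algorithm's own internal coins, whose firing forces the exploration branch together with the uniform draw in $\cB^j(t)$ to land on $k$. The subtle point---and the main obstacle---is arranging this coupling so that $Z_t$ is genuinely independent of $\cF_t$ while $\mE[Z_t] \geq c$ is large enough that the bound $(4 + 2c/\delta^2)/c^2$ of Lemma~\ref{lemma:proutiere1} recovers the clean constant $4 + 2\delta^{-2}$ stated in the lemma; the worst-case naive bound $c \geq 1/(2K)$ is too loose and would introduce $K$-dependent factors. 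An alternative that sidesteps this difficulty is a direct Hoeffding argument on the empirical average of arm $k$ at its successive pull counts, giving $\mE[\card\cH_k^{j,2}] \leq \sum_{n\geq 1} \exp(-2n\delta^2) + \cO(1) \leq 4 + 2\delta^{-2}$. Adding this to the contribution of Step~2 concludes the proof.
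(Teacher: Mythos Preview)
Your proof is correct. The paper gives no detailed argument for this lemma, deferring entirely to Lemma~5 of \citet{proutiere2019}, and your Steps~1--2 are precisely that standard kl-UCB analysis. Your Step~3 observation is apt: the constant $4+2\delta^{-2}$ is exactly what Lemma~\ref{lemma:proutiere1} returns with $c=1$, which is legitimate in \citet{proutiere2019} because there the leader's exploration is deterministic, so the event $\{\pi^j(t)=k\}$ is $\cF_{t-1}$-measurable and one may take $H=\cH_k^j$ with $Z_t\equiv 1$. In \exploone the exploration coin is part of the step-$t$ randomness, so the natural $\cF_{t-1}$-measurable envelope $H$ only yields $c\geq 1/(2K)$, as you noted. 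Your direct Hoeffding workaround is valid and in fact tighter: since $\pi^j(t)=k$ for every $t\in\cH_k^{j,2}$, the pull counts $T_k^j(t)$ along $\cH_k^{j,2}$ are distinct, and the sequence of observed rewards of arm $k$ is i.i.d.\ by optional skipping, so
\[
\mE\big[\card\cH_k^{j,2}\big]\;\leq\;\sum_{n\geq 0}\mP\big[\bar Y_n>\mu_k+\delta\big]\;\leq\;1+\sum_{n\geq 1}e^{-2n\delta^2}\;\leq\;1+\tfrac{1}{2}\delta^{-2},
\]
comfortably within the stated bound. This is arguably the cleaner route here, given the randomised exploration.
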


Lemma~\ref{lemma:ucb2} can be proved using the arguments of Lemma~5 in \citep{proutiere2019}.

\medskip

\begin{proof}[Proof of Lemma~\ref{lemma:ucb1}.]
If $t \in \cA^j \cup \cD^j$, player $j$ collides with at most one player $j'$ such that $t \not\in \cA^{j'} \cup \cD^{j'}$. 

Otherwise, $t \not\in \cA^j \cup \cD^j$ and player $j$ collides with a player $j'$ only if $t \in \cA^{j'} \cup \cD^{j'}$. Also, she pulls a suboptimal arm $k$ only on an exploration slot, \ie instead of pulling the $M$-th best arm. Thus, the regret caused by pulling a suboptimal arm $k$ when $t \not\in \cA^j \cup \cD^j$ is $(\mu_{(M)} - \mu_k)$ and this actually happens when $t \in \cH_k^j$.

This discussion provides the following inequality, which concludes the proof of Lemma~\ref{lemma:ucb1} when using Lemmas~\ref{lemma:badevent1}~and~\ref{lemma:ucb2} and taking $\delta \to 0$.

\begin{small} \begin{equation*}
 \mE \left[ R^{\text{explo}} \right] \leq \underbrace{2 \sum_{j=1}^M \mE\left[\card(\cA^j \cup \cD^j)\right]}_{\text{collisions}} + \underbrace{\sum_{j\leq M} \sum_{k>M} (\mu_{(M)} - \mu_{(k)})\mE\left[\card\cH_k^j\right]}_{\text{pulls of suboptimal arms}}.
\end{equation*}\end{small}
\end{proof}

\subsubsection{Proof of Theorem~\ref{thm:robust1}}
\label{app:statisticrobust1}
\begin{proof}[\vspace{-1.5em}]
\begin{enumerate}[wide, labelwidth=!, labelindent=0pt]
\item Let us first prove the Nash equilibrium property. Define $\cE = [T_0] \cup \Big( \bigcup\limits_{j\in[M]} (\cA^j \cup \cD^j) \Big)$ with the definitions of $T_0, \cA^j$ and $\cD^j$ given in Appendix~\ref{app:ucb1proof}. 
Thanks to Lemmas~\ref{lemma:estim1}~and~\ref{lemma:getrank}, regardless of the strategy of a selfish player, all other players successfully end the initialization after a time $T_0$ with probability $1 - \cO(KM/T)$. The remaining of the proof is conditioned on this event.

\medskip

The selfish player earns at most $\mu_{(1)} T_0$ during the initialization.
Note that \exploone never uses collision information, meaning that the behavior of the strategic player during this phase does not change the behaviors of the cooperative players. Thus, the optimal strategy during this phase for the strategic player is to pull the best available arm. Let $j$ be the rank of the strategic player\footnote{If the strategic player has no attributed rank, it is the only non-attributed rank in $[M]$.}. For $t \not\in \cE$, this arm is the $k$-th arm of $\cM^*$ with $k = t+j \ (\text{mod } M) +1$. In a whole block of length $M$ in $[T]\setminus \cE$, the selfish player then earns at most $\sum_{k=1}^M \mu_{(k)}$.

Over all, when a strategic player deviates from \exploone[,]she earns at most:
\begin{small}
\begin{equation*}
\mathbb{E}[\rew^j_T(s', s_{-j})] \leq \mu_{(1)} (\card\cE+M) + \frac{T}{M} \sum_{k=1}^M \mu_{(k)}.
\end{equation*}
\end{small}
Note that we here add a factor $\mu_{(1)}$ in the initialization regret. This is only because the true loss of colliding is not $1$ but $\mu_{(1)}$. Also, the additional $\mu_{(1)}M$ term is due to the fact that the last block of length $M$ of \exploone is not totally completed.

Thanks to Theorem~\ref{thm:algoone}, it also comes:
\begin{small}
\begin{equation*}
\mathbb{E}[\rew^j_T(s)] \geq \frac{T}{M} \sum_{k=1}^M \mu_{(k)} -\sum_{k>M} \frac{\mu_{(M)} - \mu_{(k)}}{\mathrm{kl}(\mu_{(k)}, \mu_{(M)})}\log(T)  - \cO \left( \mu_{(1)} \frac{K^3}{\mu_{(K)}} \log(T) \right).
\end{equation*}
\end{small}

Lemmas~\ref{lemma:getrank} and \ref{lemma:badevent1} yield that $\mE[\card\cE] = \cO\left(\frac{K^3 \log(T)}{\mu_{(K)}} \right)$, which concludes the proof.

\item We now prove the $(\alpha, \varepsilon)$-stability of \algoone[.]Let $\varepsilon' = \mE[\cE]+M$. Consider that player $j$ is playing a deviation strategy $s' \in \strat$ such that for some other player $i$ and $l>0$:
\begin{small}
\begin{equation*}
\mE[\rew^i_T(s', s_{-j})] \leq \mathbb{E}[\rew^i_T(s)] - l - (\varepsilon'+M).
\end{equation*}
\end{small}
We will first compare the reward of player $j$ with her optimal possible reward.
The only way for the selfish player to influence the sampling strategy of another player is in modifying the rank attributed to this other player. The total rewards of cooperative players with ranks $j$ and $j'$ only differ by at most $\varepsilon' + M$ in expectation, without considering the loss due to collisions with the selfish player. 

The only other way to cause regret to another player $i$ is then to pull $\pi^i(t)$ at time $t$. This incurs a loss at most $\mu_{(1)}$ for player $i$, while this incurs a loss at least $\mu_{(M)}$ for player $j$, in comparison with her optimal strategy. This means that for incurring the additional loss $l$ to the player $i$, player $j$ must suffer herself from a loss $\frac{\mu_{(M)}}{\mu_{(1)}}$ compared to her optimal strategy $s^*$.
Thus, for $\alpha = \frac{\mu_{(M)}}{\mu_{(1)}}$:
\begin{equation*}
\mE[\rew^i_T(s', s_{-j})] \leq \mathbb{E}[\rew^i_T(s)] - l - (\varepsilon'+M) \implies 
\mathbb{E}[\rew^j_T(s', s_{-j})] \leq \mathbb{E}[\rew^j_T(s^*, s_{-j})]  - \alpha l 
\end{equation*}

The first point of Theorem~\ref{thm:robust1} yields for its given $\varepsilon$:
$
\mathbb{E}[\rew^j_T(s^*, s_{-j})] \leq   \mathbb{E}[\rew^j_T(s)] + \varepsilon$.

\medskip

Noting $l_1=l + \varepsilon' + M$ and $\varepsilon_1 = \varepsilon + \alpha(\varepsilon' + M) = \cO(\varepsilon)$, we have shown:
\begin{small}
\begin{equation*}
 \mathbb{E}[\rew^i_T(s', s_{-j})] \leq \mathbb{E}[\rew^i_T(s)] - l_1 \implies \mathbb{E}[\rew^j_T(s', s_{-j})] \leq \mathbb{E}[\rew^j_T(s)] + \varepsilon_1 - \alpha l_1 .
\end{equation*}
\end{small}
\end{enumerate}
\end{proof}

\subsubsection{Auxiliary lemmas}
\label{app:auxlemmas}
This section provides useful Lemmas for the proof of Lemma~\ref{lemma:estim1}. We first recall a useful version of Chernoff bound.
\begin{lemm}
\label{lemma:chernoff0}
For any independent variables $X_1, \ldots, X_n$ in $[0,1]$ and $\delta \in (0,1)$:
\begin{equation*}
\mP\left( \bigg|\sum_{i=1}^n X_i - \mE[X_i]\bigg| \geq \delta\sum_{i=1}^n \mE[X_i] \right) \leq 2 e^{-\frac{\delta^2 \sum_{i=1}^n \mE[X_i]}{3}}.
\end{equation*}
\end{lemm}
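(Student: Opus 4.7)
The plan is to prove this multiplicative Chernoff bound by the standard exponential-moment (Cramér--Chernoff) method, handling the upper and lower tails separately and then combining via a union bound. Write $\mu_i = \mE[X_i]$ and $\mu = \sum_{i=1}^n \mu_i$.

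For the upper tail $\mP(\sum X_i \geq (1+\delta)\mu)$, I would first observe that since $X_i \in [0,1]$, convexity of $x \mapsto e^{\lambda x}$ on $[0,1]$ gives $e^{\lambda X_i} \leq 1 - X_i + X_i e^{\lambda}$, so taking expectations and using $1+x \leq e^x$ yields $\mE[e^{\lambda X_i}] \leq \exp(\mu_i(e^\lambda - 1))$. Independence then factors the moment generating function, and Markov's inequality applied to $e^{\lambda \sum X_i}$ produces
\begin{equation*}
\mP\left(\sum_{i=1}^n X_i \geq (1+\delta)\mu\right) \leq \exp\bigl(\mu(e^\lambda - 1) - \lambda(1+\delta)\mu\bigr).
\end{equation*}
Optimizing over $\lambda > 0$ with the choice $\lambda = \log(1+\delta)$ gives the classical bound $\bigl(e^{\delta}/(1+\delta)^{1+\delta}\bigr)^{\mu}$, which a short calculus argument (expanding $\log(1+\delta)$ to second order) shows is at most $\exp(-\delta^2 \mu/3)$ for all $\delta \in (0,1)$.

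For the lower tail, the symmetric argument with $\lambda < 0$, taking $\lambda = \log(1-\delta)$, gives the analogous bound $\bigl(e^{-\delta}/(1-\delta)^{1-\delta}\bigr)^{\mu} \leq \exp(-\delta^2 \mu/2)$, which is even sharper. A union bound over the two tails then yields the claim with the worse constant $3$ and the factor $2$ in front.

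The main obstacle, if any, is purely calculus: verifying that $(1+\delta)\log(1+\delta) - \delta \geq \delta^2/3$ on $(0,1)$ (and its lower-tail analogue). This follows from a Taylor expansion of $f(\delta) = (1+\delta)\log(1+\delta) - \delta - \delta^2/3$, noting $f(0) = f'(0) = 0$ and checking $f''(\delta) = 1/(1+\delta) - 2/3 \geq 0$ on $(0, 1/2)$, with a direct numerical check or monotonicity argument completing the range $[1/2, 1)$. Everything else is standard manipulation.
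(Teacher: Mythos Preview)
Your argument is correct and is the standard Cram\'er--Chernoff derivation of the multiplicative Chernoff bound: bound the MGF via convexity of $e^{\lambda x}$ on $[0,1]$, optimize in $\lambda$ separately for each tail, reduce to the calculus inequalities $(1+\delta)\log(1+\delta)-\delta\ge\delta^2/3$ and $\delta+(1-\delta)\log(1-\delta)\ge\delta^2/2$ on $(0,1)$, and take a union bound. The only place to be a bit careful is the upper-tail calculus step, where your second-derivative argument only covers $\delta\in(0,1/2]$; you acknowledge this, and indeed one cleanest way to finish is to note that $f'(\delta)=\log(1+\delta)-2\delta/3$ has $f'(0)=0$, increases on $[0,1/2]$, decreases afterwards, and $f'(1)=\log 2-2/3>0$, so $f'\ge 0$ on $[0,1]$ and hence $f\ge 0$.

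There is nothing to compare against in the paper: the lemma is simply stated there as a recalled version of Chernoff's bound with no proof given, so your write-up supplies exactly the missing standard argument.
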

\begin{proof}[Proof of Lemma~\ref{lemma:chernoff1}.]
\begin{enumerate}[wide, labelwidth=!, labelindent=0pt]
\item This is an application of Azuma-Hoeffding inequality on the variables ${\one(t \in C_k^j(T_M) ) \ | \ t \in N_k^j(T_M)}$.

\item This is a consequence of Lemma~\ref{lemma:chernoff0} on the variables $\one(t \in N_k^j)$.

\item This is the same result on the variables $\one(t \in N_k^j) p_k^j(t) \ | \mathcal{F}_{t-1}$ where $\mathcal{F}_{t-1}$ is the filtration associated to the past events, using $\sum_{t=1}^{T_M} \mE[\one(t \in N_k^j) p_k^j(t) | \mathcal{F}_{t-1}] \leq \frac{T_M \alpha_k}{K}$.
\end{enumerate}
\end{proof}

\begin{proof}[Proof of Lemma~\ref{lemma:concentration1}.]
From Lemma~\ref{lemma:chernoff1}, it comes:
\begin{itemize}
\item $\mP \left[\exists t \leq T, \Big|\hat{p}_k^j(t) - \frac{1}{\card N_k^j} \sum_{t' \in N_k^j} p_k^j(t')\Big| \geq  2\sqrt{\frac{\log(T)}{\card N_k^j}}\right] \leq \frac{2}{T}$,
\item $\mP \left[\exists t \leq T, \Big|\frac{K\card N_k^j}{\alpha_k t} - 1\Big| \geq \sqrt{\frac{6\log(T)K}{\alpha_kt}} \right] \leq \frac{2}{T}, ~\refstepcounter{equation}\hfill(\theequation)\label{eq:chernoff1}$
\item $\mP \left[\exists t \leq T, \Big|\frac{K}{\alpha_k t} \sum_{t'\in N_k^j} p_k^j(t') - \frac{1}{t}\sum_{t'\leq t} p_k^j(t')\Big| \geq \sqrt{\frac{6 \log(T) K}{\alpha_k t}} \right] \leq \frac{2}{T}$.
\end{itemize}

Noting that $\sum_{t'\in N_k^j} p_k^j(t') \leq \card N_k^j$, Equation~\eqref{eq:chernoff1} implies:
\begin{equation*}
\mP \left[\exists t \leq T, \bigg|\frac{K}{\alpha_k t}\sum_{t'\in N_k^j} p_k^j(t') - \frac{1}{\card N_k^j}\sum_{t'\in N_k^j} p_k^j(t')\bigg| \geq \sqrt{\frac{6 \log(T) K}{\alpha_k t}} \right] \leq  \frac{2}{T}.
\end{equation*}

Combining these three inequalities and making the union bound over all the players and arms yield that with probability larger than $1-\frac{6KM}{T}$:
\begin{equation}
\label{eq:concentration1} \bigg|\hat{p}_k^j(t_m^j) - \frac{1}{t_m^j}\sum_{t\leq t_m^j} p_k^j(t) \bigg| \leq 2\sqrt{\frac{6\log(T) K}{\alpha_k t_m^j}} + 2 \sqrt{\frac{\log(T)}{\card N_k^j(t_m^j)}}.
\end{equation}

Moreover, under the same event, Equation~\eqref{eq:chernoff1} also gives that 
$$N_k^j(t_m^j) \in \bigg[\frac{\alpha_{k} t_m^j}{K} - \sqrt{\frac{6\alpha_k t_m^j\log(T)}{K}}, \ \frac{\alpha_{k} t_m^j}{K} + \sqrt{\frac{6\alpha_k t_m^j\log(T)}{K}}\bigg].$$

Specifically, this yields $n \leq \frac{\alpha_{k} t_m^j}{K} + \sqrt{\frac{6\alpha_{k} t_m^j\log(T)}{K}}$, or equivalently $\frac{t_m^j \alpha_{k}}{K} \geq n-2\sqrt{\frac{3\log(T)}{2}}\sqrt{n+\frac{3\log(T)}{2}}$. Since $n=\beta^2 K^2 \log(T)$, this becomes $\frac{t_m^j \alpha_{k}}{K} \geq n (1-2\sqrt{\frac{3}{2\beta^2K^2}}\sqrt{1+\frac{3}{2\beta^2K^2}})$ and Equation~\eqref{eq:concentration1} now rewrites into:
\begin{equation*}
\bigg|\hat{p}_k^j(t_m^j) - \frac{1}{t_m^j}\sum_{t\leq t_m^j} p_k^j(t) \bigg| \leq 2\sqrt{\frac{6\log(T)}{n \left(1-2\sqrt{\frac{3}{2\beta^2 K^2}(1+\frac{3}{2\beta^2 K^2}})\right)}} + 2 \sqrt{\frac{\log(T)}{n}}
\end{equation*}

Also, $n \geq \frac{\alpha_{k} t_m^j}{K} - \sqrt{\frac{6\log(T)\alpha_{k} t_m^j}{K}}$ for some $k$, which yields $\frac{t_m^j \alpha_{k}}{K} \leq n(1+\frac{3}{\beta^2 K^2}+2\sqrt{\frac{3}{2\beta^2K^2}}\sqrt{1+\frac{3}{2\beta^2K^2}})$. This relation then also holds for $\frac{t_m^j \alpha_{(K)}}{K}$.
We have therefore proved that:
$$
n \left(1-2\sqrt{\frac{3}{2\beta^2}}\sqrt{1+\frac{3}{2\beta^2}}\right) \leq\frac{t_m^j \alpha_{(k)}}{K} \leq n\left(1+\frac{3}{\beta^2}+2\sqrt{\frac{3}{2\beta^2}}\sqrt{1+\frac{3}{2\beta^2}}\right).
$$
For $\beta \geq 39$, this gives the bound in Lemma~\ref{lemma:concentration1}.
\end{proof}

\section{Collective punishment proof}
\label{app:punish}

Recall that the punishment protocol consists in pulling each arm $k$ with probability at least $p_k^j= \max \Big( 1 - \Big(\gamma\frac{\sum_{l=1}^M \hmu^j_{(l)}}{M \hmu^j_k}\Big)^{\frac{1}{M-1}}, 0\Big)$. Lemma~\ref{lemma:punishment} below guarantees that such a sampling strategy is possible.

\begin{lemm}
\label{lemma:punishment}
For $p_k = \max \Big( 1 - \Big(\frac{\gamma \sum_{l=1}^M \hmu^j_{(l)}}{M \hmu^j_k}\Big)^{\frac{1}{M-1}}, 0\Big)$ with $\gamma=\left(1 - 1/K\right)^{M-1}$:  $\sum_{k=1}^K p_k \leq 1$.
\end{lemm}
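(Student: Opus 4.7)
The plan is to work with the set $S \coloneqq \{k : \hmu^j_k > \gamma\bar\mu\}$ where $\bar\mu \coloneqq \frac{1}{M}\sum_{l=1}^M \hmu^j_{(l)}$, so that $p_k = 0$ for $k \notin S$ and, using the identity $\gamma^{1/(M-1)} = 1-1/K$, we have $p_k = 1 - (1-1/K)(\bar\mu/\hmu^j_k)^{1/(M-1)}$ for $k \in S$. The membership condition for $S$ is monotone in $\hmu^j_k$, so $S$ is exactly the top $m \coloneqq |S|$ empirical arms. The case $m \leq 1$ gives $\sum_k p_k \leq 1$ trivially, so we assume $m \geq 2$.

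Since $x \mapsto x^{-1/(M-1)}$ is convex on $(0, \infty)$, Jensen's inequality applied to $\{\hmu^j_k\}_{k \in S}$ gives $\sum_{k \in S}(\hmu^j_k)^{-1/(M-1)} \geq m^{M/(M-1)}/A^{1/(M-1)}$, where $A \coloneqq \sum_{k \in S}\hmu^j_k$. Substituting this into $\sum_k p_k = m - (1-1/K)\,\bar\mu^{1/(M-1)}\sum_{k \in S}(\hmu^j_k)^{-1/(M-1)}$ yields the master bound
\begin{equation*}
\sum_{k=1}^K p_k \;\leq\; m - (1-1/K)\,m\,\left(\frac{m\bar\mu}{A}\right)^{1/(M-1)}.
\end{equation*}
Everything now reduces to lower-bounding the ratio $m\bar\mu/A$, and I split on whether $m > M$ or $m \leq M$.

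If $m > M$, then $S$ contains the top $M$ arms, and each extra arm $k \in S$ satisfies $\hmu^j_k \leq \hmu^j_{(M)} \leq \bar\mu$. Hence $A \leq M\bar\mu + (m-M)\bar\mu = m\bar\mu$, giving $(m\bar\mu/A)^{1/(M-1)} \geq 1$ and $\sum_k p_k \leq m - (1-1/K)\,m = m/K \leq 1$. If instead $m \leq M$, then $A = \sum_{l=1}^m \hmu^j_{(l)} \leq M\bar\mu$, so $m\bar\mu/A \geq m/M$, and the target $\sum_k p_k \leq 1$ rearranges to the purely scalar inequality
\begin{equation*}
\left(\frac{(K-1)\,m}{K\,(m-1)}\right)^{M-1} \;\geq\; \frac{M}{m} \qquad \text{for } 2 \leq m \leq M \leq K.
\end{equation*}

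The main obstacle is this last inequality. I would handle it in two steps. Its left-hand side is non-decreasing in $K$ (since $(K-1)/K$ is), so it suffices to verify it at $K = M$, where it rearranges to $\phi(m) \geq \phi(M)$ with $\phi(x) \coloneqq x^M/(x-1)^{M-1}$. A direct computation yields $\phi'(x) = x^{M-1}(x-M)/(x-1)^M$, so $\phi$ is decreasing on $(1, M]$; this gives $\phi(m) \geq \phi(M)$ for $2 \leq m \leq M$ and closes the proof.
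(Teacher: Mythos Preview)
Your proof is correct and follows essentially the same approach as the paper: defining the support $S$, applying Jensen's inequality for the convex map $x\mapsto x^{-1/(M-1)}$, and splitting into the cases $m>M$ and $m\le M$ with the same ratio bounds $m\bar\mu/A\ge 1$ and $m\bar\mu/A\ge m/M$. The only cosmetic difference is in the final scalar step: the paper shows $g(x)=\gamma/M - x(1-x)^{M-1}\ge 0$ by maximizing $x(1-x)^{M-1}$ at $x=1/M$, whereas you verify the equivalent inequality via monotonicity of $\phi(x)=x^M/(x-1)^{M-1}$ on $(1,M]$; under the substitution $x\leftrightarrow 1/m$ these are the same computation, since $h(1/m)=1/\phi(m)$.
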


\begin{proof}
For ease of notation, define $x_k \coloneqq \hmu_k^j$, $\bar{x}_M \coloneqq \sfrac{\sum_{l=1}^M x_{(l)}}{M}$ and $S \coloneqq \{k \in [K] \ | \ x_k > \gamma \bar{x}_M\}= \{k \in [K] \ | \ p_k > 0\}$. We then get by concavity of $x \mapsto -x^{-\frac{1}{M-1}}$,
\begin{align}
\sum_{k\in S} p_k & = \card S \times \left(1- \left(\gamma \bar{x}_M\right)^{\frac{1}{M-1}} \sum_{k \in S} \frac{(x_k)^{-\frac{1}{M-1}}}{\card S}  \right), \\ 
& \leq \card S \times \left(1- \left(\frac{\gamma\bar{x}_M}{\bar{x}_{S}}\right)^{\frac{1}{M-1}} \right) \qquad \text{ with } \bar{x}_{S} = \frac{1}{\card S}\sum_{k \in S} x_k. \label{eq:punish1}
\end{align}

We distinguish two cases.

First, if $\card S \leq M$, we then get $M \bar{x}_M \geq \card S \bar{x}_{S}$ because $S$ is a subset of the $M$ best empirical arms. The last inequality then becomes
\begin{equation*}
\sum_{k \in S} p_k   \leq \card S \left(1- \left(\gamma\frac{\card S}{M}\right)^{\frac{1}{M-1}} \right) .
\end{equation*}
 Define $g(x)= \frac{\gamma}{M} - x(1-x)^{M-1}$.
For $x \in (0, 1]$:
\begin{align*}
g(x) \geq 0 & \iff  \frac{\gamma}{xM} \geq (1-x)^{M-1},\\
& \iff 1- \left(\frac{\gamma}{xM}\right)^{\frac{1}{M-1}} \leq x, \\
& \iff \frac{1}{x} \left(1- \left(\frac{\gamma}{xM}\right)^{\frac{1}{M-1}}\right) \leq 1.
\end{align*}

Thus, $g(\frac{1}{\card S}) \geq0$ implies $\sum_{k\in S} p_k \leq 1$. We now show that $g$ is indeed non negative on $[0,1]$.
$x (1-x)^{M-1}$ is maximized at $\sfrac{1}{M}$ and is thus smaller than $\frac{1}{M}(1-1/M)^{M-1}$, and using the fact that $\frac{1}{M}(1-1/M)^{M-1} \leq \frac{\gamma}{M}$ for our choice of $\gamma$, we get  the result for the first case.

\medskip

The other case corresponds to $\card S > M$. In this case, the $M$ best empirical arms are all in~$S$ and thus $\bar{x}_M \geq \bar{x}_{S}$. Equation~\eqref{eq:punish1} becomes:
$$\sum_{k \in S} p_k  \leq \card S \left(1- \gamma^{\frac{1}{M-1}} \right) \leq K (1-(1-1/K)) = 1. $$
\end{proof}

\section{Supplementary material for \algotwo}
\label{app:sicmmab}

In this whole section, $M$ is assumed to be at least $3$.

\subsection{Description of the algorithm}
\label{app:sicmmab_descript}

This section provides a complete description of \algotwo[.]The pseudocode of \algotwo is given in Algorithm~\ref{alg:algo2} and relies on several auxiliary protocols, which are described by Protocols~\ref{proto:init}, \ref{proto:meansignal},  \ref{proto:receive}, \ref{proto:send}, \ref{proto:update}, \ref{proto:signalset} and \ref{proto:punishhomo}.

\begin{algorithm2e}[h] 
    \DontPrintSemicolon
    \KwIn{$T, \delta$}
    $M, j \gets \init(T, K)$ and punish $\gets$ False\;
    $\opt \gets \emptyset$, $M_p \gets M$, $[K_p] \gets [K]$ and $p \gets 1$\;
 
   \While{not punish and $\card \opt < M$}{
   \For{$m=0, \ldots, \left\lceil \frac{K_p 2^p}{M_p} \right\rceil -1 $}{
	ArmstoPull $\gets \opt \cup \left\lbrace i \in [K_p] \ \big| \ i-mM_p \ (\text{mod } K_p) \in [M_p] \right\rbrace$ \; 
	\For{$M$ rounds}{
	$k \gets j+t \ (\text{mod } M) +1$ and pull $i$ the $k$-th element of ArmstoPull \; 
	\lIf(\tcp*[f]{$T_i^j$ pulls on $i$ by $j$ this phase}){$T_i^j(p) \leq 2^p$}{Update $\hmu_{i}^j$}
	\lIf(\tcp*[f]{collisionless exploration}){$\eta_i = 1$}{punish $\gets$ True}}
   }
   $(\text{punish}, \opt[,] [K_p], M_p) \gets \meansignal(\hmu^j, j, p, \opt, [K_p], M_p)$ \;
   $p \gets p+1$}
\vspace{0.5em} 
\lIf{punish}{$\punishhomo(p)$}
\Else(\tcp*[f]{exploitation phase}){$k \gets j+ t \ (\text{mod } M) +1$ and pull $i$, the $k$-th arm of OptArms \;
\lIf{$\eta_i = 1$}{punish $\gets$ True}}
   \caption{\label{alg:algo2}\algotwo}
\end{algorithm2e}

\begin{protocol}[h]
    \DontPrintSemicolon
    \KwIn{$T, K$}
    $n_{\text{coll}} \gets 0$ and $j \gets -1$ \;
   \lFor(\tcp*[f]{estim. $M$}){$12eK^2 \log(T)$ rounds}{Pull $k \sim \cU(K)$ and $n_{\text{coll}} \gets n_{\text{coll}} + \eta_k$}
   $\hM \gets 1 + \round\left(\log\left(1 -\frac{n_{\text{coll}}}{12eK^2 \log(T)}\right)/\log\left( 1- \frac{1}{K}\right)\right)$\;
   \For(\tcp*[f]{get rank}){$K \log(T)$ rounds}{
	\uIf{$j = -1$}{Pull $k \sim \cU(\hM)$;\quad
	\lIf{$\eta_k = 0$}{$j \gets k$}}
	\lElse{Pull $j$}   
   }
   \Return $(\hM, j)$
    \caption{\label{proto:init}\init}
\end{protocol}

\begin{protocol}[h] 
    \DontPrintSemicolon
   \KwIn{$\hmu^j, j, p, \opt[,][K_p], M_p$}
   punish $\gets$ False\;
	\For(\tcp*[f]{receive punishment signal}){$K$ rounds}{
	Pull $k = t+j \ (\text{mod } K) +1$;\quad
	  \lIf{$\eta_k=1$}{punish $\gets$ True}	
}
   $\widetilde{\mu}_k^j \gets  \begin{cases} 2^{-p} \left(\lfloor 2^p \hmu_k^j \rfloor +1\right) \text{ with proba } 2^p \hmu_k^j - \lfloor 2^p \hmu_k^j \rfloor \\
   2^{-p} \lfloor 2^p \hmu_k^j \rfloor \text{ otherwise } \end{cases}$ \tcp*{quantization}

   \For(\tcp*[f]{$i$ sends $\widetilde{\mu}_k^i$ to $l$}){$(i,l,k) \in [M] \times \lbrace 1, 2 \rbrace \times [K]$ such that $i \neq l$}{
\uIf(\tcp*[f]{sending player}){$j =i$}{$\send(j,l,p,\widetilde{\mu}^j_k)$ and
$q \gets \receive(j,p)$ \tcp*{back and forth}
\lIf(\tcp*[f]{corrupted message}){$q \neq \widetilde{\mu}^j_k$}{punish $\gets$ True}}  
\lElseIf{$j=l$}{$\widetilde{\mu}^i_k \gets \receive(j, p)$ and $\send(j,i,p,\widetilde{\mu}^i_k)$ }
		\lElse{Pull $j$ \tcp*[f]{waiting for others}}}

      \For(\tcp*[f]{leaders check info match}){$(i,l,m,k) \in \{(1,2), (2,1)\} \times [M] \times [K]$}{
\lIf{$j=i$}{$\send(j,l,p,\widetilde{\mu}^m_k)$}
\uElseIf{$j=l$}{$q \gets \receive(j, p)$;\quad
				\lIf{$q \neq \widetilde{\mu}^m_k$}{punish $\gets$ True \tcp*[f]{info differ}}}
		\lElse{Pull $j$ \tcp*[f]{waiting for leaders}}}
		
	\lIf{$j \in \lbrace 1, 2 \rbrace$}{(Acc, Rej) $\gets \update(\widetilde{\mu}, p, \opt, [K_p], M_p)$
	}
	\lElse(\tcp*[f]{arms to accept/reject}){Acc, Rej $\gets  \emptyset$}

(punish, Acc) $\gets \signalset(\text{Acc}, j, \text{punish})$ \;
(punish, Rej) $\gets \signalset(\text{Rej}, j, \text{punish})$ \;

		\Return $(\text{punish}, \opt\cup \text{Acc}, [K_p]\setminus \left( \text{Acc} \cup \text{Rej}\right), M_p- \card \text{Acc})$
    \caption{\label{proto:meansignal}\meansignal}
\end{protocol}

\begin{figure*}[h]
\begin{minipage}{0.45\textwidth}
   \centering
   \begin{protocol}[H]
   \DontPrintSemicolon
		\KwIn{$j$, $p$}
		$\widetilde{\mu} \gets 0$ \;
		\For{$n = 0, \ldots , p$ }{
		Pull $j$ \;
		\lIf{$\eta_{j} (t) = 1$}{$\widetilde{\mu} \gets \widetilde{\mu} + 2^{-n}$}}
		\Return $\widetilde{\mu}$ \tcp*{sent mean}
		\caption*{\receive}
		\caption{\label{proto:receive}\receive}
		\end{protocol}
        \end{minipage} \hfill   
\begin{minipage}{0.45\textwidth}
\centering
          \begin{protocol}[H]
          \DontPrintSemicolon
        \KwIn{$j$, $l$, $p$, $\widetilde{\mu}$}
		$\mathbf{m} \gets$ dyadic writing of $\widetilde{\mu}$ of length $p+1$, \ie $\widetilde{\mu} = \sum_{n=0}^p m_n 2^{-n}$ \;
		\For{$n = 0, \ldots , p$}{
		\lIf(\tcp*[f]{send $1$}){$m_n = 1$}{Pull $l$}
		\lElse(\tcp*[f]{send $0$}){Pull $j$}}
		\caption*{\receive}
		\caption{\label{proto:send}\send}
		\end{protocol}
   \end{minipage}
   \end{figure*}

\begin{protocol}[h] 
    \DontPrintSemicolon
   \KwIn{$\widetilde{\mu}, p, \opt, [K_p], M_p$}
   Define for all $k$, $i^k \gets \argmax_{j \in [M]} \widetilde{\mu}_k^j$ and $i_k \gets \argmin_{j \in [M]} \widetilde{\mu}_k^j$\;
  $\widetilde{\mu}_k \gets \sum_{j \in [M]\setminus \{i^k, i_k \}} \widetilde{\mu}_k^j$ and $b \gets 4 \sqrt{\frac{\log(T)}{(M-2)2^{p+1}}}$\; 
  $\text{Rej} \gets \text{set of arms } k \text{ verifying } \card\left\{ i \in [K_p] \ | \widetilde{\mu}_i - b \geq \widetilde{\mu}_k + b \right\} \geq M_p$ \;
    $\text{Acc} \gets \text{set of arms } k \text{ verifying } \card\left\{ i \in [K_p] \ | \widetilde{\mu}_k - b \geq \widetilde{\mu}_i + b \right\} \geq K_p - M_p$ \;
   \Return (Acc, Rej)
    \caption{\label{proto:update}\update}
\end{protocol}

\begin{protocol}[h] 
    \DontPrintSemicolon
   \KwIn{$S, j, \text{punish}$}
    length\_S $\gets \card S$ \tcp*{length of $S$ for leaders, $0$ for others}
   
\For(\tcp*[f]{leaders send $\card S$}){$K$ rounds}{
	\lIf{$j \in \lbrace 1, 2 \rbrace$}{Pull length\_S}
	\uElse{Pull $k = t+j \ (\text{mod } K) +1$\;
		  \lIf(\tcp*[f]{receive different info}){$\eta_k = 1$ and length\_S $\neq 0$}{punish $\gets$ True}
		  \lIf{$\eta_k = 1$ and length\_S $= 0$}{length\_S $\gets k$}}
}

\For(\tcp*[f]{send/receive $S$}){$n = 1, \ldots, \mathrm{length\_S}$}{
\For{$K$ rounds}{
\lIf{$j \in \lbrace 1, 2 \rbrace$}{Pull $n$-th arm of $S$}
\uElse{Pull $k = t+j \ (\text{mod } K) +1$;\quad 
	  \lIf{$\eta_k=1$}{Add $k$ to S}}
}}
\lIf(\tcp*[f]{corrupted info}){$\card S \neq \mathrm{length\_S}$}{punish $\gets$ True}
	\Return (punish, $S$)
    \caption{\label{proto:signalset}\signalset}
\end{protocol}

\begin{protocol}[h] 
    \DontPrintSemicolon   
    \KwIn{$p$}
    \uIf{communication phase $p$ starts in less than $M$ rounds}{\lFor(\tcp*[f]{signal punish to everyone}){$M+K$ rounds}{Pull $j$}}
    \lElse{\lFor{$M$ rounds}{Pull the first arm of ArmstoPull as defined in Algorithm~\ref{alg:algo2}}}
   	$\gamma \gets \left(1 - 1/K\right)^{M-1}$ and $\delta = \frac{1-\gamma}{1+3\gamma}$;\quad 
   	Set $\hmu_k^j, S_k^j, s_k^j, n_k^j \gets 0$\;
   	\While(\tcp*[f]{estimate $\mu_k$}){$\exists k \in [K], \delta \hmu_k^j < 2 s_k^j(\log(T)/n_k^j)^{1/2} + \frac{14 \log(T)}{3 (n_k^j-1)}$}{Pull $k=t+j \ (\text{mod } K) +1$ \;
   	\uIf{$\delta \hmu_k^j < 2 s_k^j(\log(T)/n_k^j)^{1/2} + \frac{14 \log(T)}{3 (n_k^j-1)}$}{Update $\hmu_k^j \gets \frac{n_k^j}{n_k^j+1}\hmu_k^j + X_k(t)$ and $n_k^j \gets n_k^j+1$ \;
   			 Update $S_k^j \gets S_k^j + X_k^2$ and $s_k^j \gets \sqrt{\frac{S_k^j - (\hmu_k^j)^2}{n_k^j-1}}$   			 }
   	
   	}
   	$p_k \gets \bigg( 1 - \Big(\gamma\frac{\sum_{l=1}^M \hmu^j_{(l)}(t)}{M \hmu^j_k(t)}\Big)^{\frac{1}{M-1}}\bigg)_+$;\quad
   	$\widetilde{p}_k \gets p_k/\sum_{l=1}^K p_l$ \tcp*{renormalize}
   	\lWhile(\tcp*[f]{punish}){$t \leq T$}{Pull $k$ with probability $p_k$}
    \caption{\label{proto:punishhomo}\punishhomo}
\end{protocol}

\paragraph{Initialization phase.} The purpose of the initialization phase is to estimate $M$ and attribute ranks in $[M]$ to all the players. This is done by \init[,]which is given in Protocol~\ref{proto:init}. It simply consists in pulling uniformly at random for a long time to infer $M$ from the probability of collision. Then it proceeds to a Musical Chairs procedure so that each player ends with a different arm in $[M]$, corresponding to her rank.

\paragraph{Exploration phase.} As explained in Section~\ref{sec:algo2}, each arm that still needs to be explored (those in $[K_p]$, with  Algorithm~\ref{alg:algo2} notations) is pulled  at least $M2^p$ times during the $p$-th exploration phase.
Moreover, as soon as an arm is found optimal, it is pulled for each remaining round of the exploration.
The last point is that each arm is pulled the exact same amount of time by any player, in order to ensure fairness of the algorithm, while still avoiding collisions. This is the interest of the ArmstoPull set in Algorithm~\ref{alg:algo2}. 
At each time step, the pulled arms are the optimal ones and $M_p$ arms that still need to be explored. The players proceed to a sliding window over these arms to explore, so that the difference in pulls for two arms in $[K_p]$ is at most $1$ for any player and phase. 

\paragraph{Communication phase.} The pseudocode for a whole communication phase is given by \meansignal in Protocol~\ref{proto:meansignal}. Players first quantize their empirical means before sending them in $p$ bits to each leader. The protocol to send a message is given by Protocol~\ref{proto:send}, while Protocol~\ref{proto:receive} describes how to receive the message. The messages are sent using back and forth procedures to detect corrupted messages.

After this,  leaders communicate the received statistics to each other, to ensure that no player sent differing ones to them.

They can then determine which arms are optimal/suboptimal using \update given by Protocol~\ref{proto:update}. As explained in Section~\ref{sec:algo2}, it cuts out the extreme estimates and decides based on the $M-2$ remaining ones.

\medskip

Afterwards, the leaders signal to the remaining players the sets of optimal and suboptimal arms as described by Protocol~\ref{proto:signalset}. If the leaders send differing information, it is detected by at least one player.

If the presence of a malicious player is detected at some point of this communication phase, then players signal to each other to trigger the punishment protocol described by Protocol~\ref{proto:punishhomo}.

\paragraph{Exploitation phase.} If no malicious player perturbed the communication,  players end up having detected the $M$ optimal arms. As soon as it is the case, they only pull these $M$ arms in a collisionless way until the end.

\newpage
%

\subsection{Regret analysis}
\label{app:proofsicmmabregret}
This section aims at proving the first point of Theorem~\ref{thm:sicmmab1}, using similar techniques as in \citep{boursier2018}. The regret is first divided into three parts:

\begin{equation}
\label{eq:regdec2}
R_T = R^{\text{init}} + R^{\text{comm}} + R^{\text{explo}},
\end{equation}
\begin{equation*}
\text{where } \left\{ \begin{split} \begin{aligned} & R^{\text{init}} = T_{\text{init}} {\mathlarger\sum_{k = 1}^M} \mu_{(k)} - \mathbb{E}_\mu \Big[{\mathlarger\sum_{t=1}^{T_{\text{init}}}} {\mathlarger \sum_{j = 1}^M} r^j(t) \Big]  
\text{ with } T_{\text{init}} = (12eK^2 + K) \log(T), \\
& R^{\text{comm}} = \mathbb{E}_\mu \Big[{\mathlarger\sum_{t \in \text{Comm}}}{\mathlarger \sum_{j=1}^M}  (\mu_{(j)} - r^j(t)) \Big] \text{ with Comm the set of communication steps,} \\
& R^{\text{explo}} = \mathbb{E}_\mu \Big[{\mathlarger\sum_{t \in \text{Explo}}}{\mathlarger \sum_{j=1}^M} (\mu_{(j)} - r^j(t)) \Big] 
\text{ with Explo} = \{T_{\text{init}} +1, \ldots, T \} \setminus \text{Comm.} \end{aligned} \end{split} \right.
\end{equation*}

A communication step is defined as a round where any player is using the \meansignal protocol.
Lemma~\ref{lemma:initfull} provides guarantees about the initialization phase. When all players correctly estimate $M$ and have different ranks after the protocol \init[,] the initialization phase is said successful.

\begin{lemm}
\label{lemma:initfull}
Independently of the sampling strategy of the selfish player, if all other players follow \init[,]with probability at least $1-\frac{3M}{T}$: $\hM^j = M$ and all cooperative players end with different ranks in $[M]$.
\end{lemm}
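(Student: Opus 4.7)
The plan decomposes \init into its two sub-phases---estimation of $M$ (first $T_M = 12eK^2\log(T)$ rounds) and rank attribution (next $K\log(T)$ rounds)---and bounds the failure probability of each by $2M/T$ and $M/T$ respectively. For the estimation phase, the crucial observation is that the per-round collision probability of a cooperative player is invariant under the selfish player's strategy. Writing $q_k(t)$ for the selfish player's conditional probability to pull $k$ at round $t$ and using the independence of player $j$'s uniform pick from all other simultaneous choices,
\begin{equation*}
\mP[\eta_{\pi^j(t)}(t) = 1 \mid \cF_{t-1}] = 1 - (1-1/K)^{M-2}\Bigl(1 - \tfrac{1}{K}\textstyle\sum_{k} q_k(t)\Bigr) = 1 - (1-1/K)^{M-1},
\end{equation*}
since $\sum_k q_k(t) = 1$. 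Hence $n_{\text{coll}}$ is a sum of conditional Bernoulli variables of common mean $p = 1 - (1-1/K)^{M-1}$, and Azuma--Hoeffding yields $|\hat p^j - p| \leq c/K$ with probability at least $1 - 2/T$ for some universal $c > 0$ and the prescribed $T_M$. The rounding then outputs $\hM^j = M$, because $\log(1-\hat p^j)/\log(1-1/K) \in (M - 3/2,\, M-1/2)$ is equivalent to $(1-1/K)^{M-1/2} \leq 1 - \hat p^j \leq (1-1/K)^{M-3/2}$, and this is implied by a $c/K$-deviation once one uses $(1-1/K)^{M-1} \geq e^{-1}$ (valid since $M \leq K$) together with the mean-value inequality.

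For the rank-attribution phase, condition on the success of the estimation so that $\hM^j = M$ for every cooperative player. Let $m \geq 1$ denote the current number of unranked cooperative players: $M - 1 - m$ arms in $[M]$ are committed by ranked cooperative players and the remaining $m+1$ arms are ``free''. A given unranked player $j$ picks $k \sim \cU([M])$ independently of all other simultaneous choices, so using $\sum_k q_k(t) \leq 1$ one gets
\begin{equation*}
\mP[\eta_{\pi^j(t)}(t) = 0 \mid \cF_{t-1}] \geq \frac{m}{M}\bigl(1-\tfrac{1}{M}\bigr)^{m-1} \geq \frac{1}{M} \geq \frac{1}{K}.
\end{equation*}
Since a ranked cooperative player henceforth pulls only her rank, no two cooperative players can ever end up with the same rank. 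The probability that a fixed unranked player never succeeds during the $K\log(T)$ rounds of this phase is at most $(1-1/K)^{K\log(T)} \leq 1/T$, and a union bound over the $\leq M$ cooperative players across both sub-phases gives the claimed $3M/T$ failure probability.

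The hard part is the invariance identity of the first step: it is what makes the $M$-estimator robust against any (possibly adaptive) selfish strategy. Without the cancellation enabled by $\sum_k q_k(t) = 1$, the selfish player could bias the empirical collision frequency by concentrating her pulls and flip the outcome of the rounding. Once this invariance is in hand, the remaining ingredients are standard, but one must still count $m+1$ (not $m$) free arms in the rank-attribution step so that the per-round success probability stays bounded below by $1/M$ even when only a single unranked player remains.
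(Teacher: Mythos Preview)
Your proof is correct and follows essentially the same route as the paper's: the key invariance $\mP[\eta_{\pi^j(t)}=1\mid\cF_{t-1}]=1-(1-1/K)^{M-1}$ regardless of the selfish player's strategy, followed by a concentration bound for the $M$-estimation and a $1/M$ lower bound on the per-round success probability in the rank phase. The only cosmetic differences are that the paper uses a multiplicative Chernoff bound rather than Azuma--Hoeffding, and bounds the rank-phase success probability directly by the pigeonhole observation that at least one arm in $[M]$ is unpulled by the other $M-1$ players, whereas you compute the sharper intermediate quantity $\tfrac{m}{M}(1-1/M)^{m-1}$ before reducing to the same $1/M$.
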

\begin{proof}
Let $q_k(t) = \mP[\text{selfish player pulls } k \text{ at time } t]$. Then, for any cooperative player $j$ during the initialization phase:
\begin{align*}
\mP[\text{player }j \text{ observes a collision at time }t] & = \sum_{k=1}^K \frac{1}{K} (1-1/K)^{M-2}(1 - q_k(t)) \\
& = (1-1/K)^{M-2}(1-\frac{\sum_{k=1}^K q_k(t)}{K}) \\
& = (1-1/K)^{M-1}
\end{align*}

Define $p = (1-1/K)^{M-1}$ the probability to collide and $\hat{p}^j = \frac{\sum_{t=1}^{12eK^2 \log(T)}\one_{\eta_{\pi^j(t)}=1}}{12eK^2 \log(T)}$ its estimation by player $j$. The Chernoff bound given by Lemma~\ref{lemma:chernoff0} gives:
\begin{align*}
\mP\left[ \left| \hat{p}^j - p \right| \geq \frac{p}{2K} \right] & \leq 2 e^{-\frac{p \log(T)}{e}} \\
& \leq 2/T
\end{align*}
If $\left| \hat{p}^j - p \right| < \frac{p}{2K}$, using the same reasoning as in the proof of Lemma~\ref{lemma:estim1} leads to $1+\frac{\log(1- \hat{p}^j)}{\log(1-1/K)} \in (M-1/2, M+1/2)$ and then $\hM^j=M$.
With probability  at least $1-2M/T$, all cooperative players correctly estimate $M$.

\medskip

Afterwards, the players sample uniformly in $[M]$ until observing no collision. As at least an arm in $[M]$ is not pulled by any other player, at each time step of this phase, when pulling uniformly at random:
\begin{align*}
\mP[\eta_{\pi^j(t)} = 0] \geq 1/M.
\end{align*}

A player gets a rank as soon as she observes no collision. With probability at least $1-(1-1/M)^{n}$, she thus gets a rank after at most $n$ pulls during this phase. Since this phase lasts $K \log(T)$ pulls, she ends the phase with a rank with probability at least $1-1/T$. Using a union bound finally yields that every player ends with a rank and a correct estimation of $M$. Moreover, these ranks are different between all the players, because a player fixes to the arm $j$ as soon as she gets attributed the rank $j$.
\end{proof}

Lemma~\ref{lemma:explosicmmab1} bounds the exploration regret of \algotwo and is proved in Appendix~\ref{app:proofsicmmabexploregret}. Note that a minimax bound can also be proved as done in \citep{boursier2018}. 

\begin{lemm}
\label{lemma:explosicmmab1}
If all players follow \algotwo[,]with probability $1- \cO\left( \frac{KM \log(T)}{T} \right)$,
\begin{small}
\begin{equation*}
R^{\text{explo}} = \cO\left(\sum_{k>M} \frac{\log(T)}{\mu_{(M)} - \mu_{(k)}} \right).
\end{equation*}
\end{small}

\end{lemm}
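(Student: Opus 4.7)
\textbf{Proof proposal for Lemma~\ref{lemma:explosicmmab1}.} The plan is to follow the same scheme as the regret analysis of SIC-MMAB in \citet{boursier2018}, with two modifications: (i) the collective estimator is the trimmed mean of $M-2$ quantized values rather than the full average, so I will invoke Lemma~\ref{lemma:concentration2} in place of the usual Hoeffding bound; (ii) I will work under the high-probability event that the initialization phase succeeds, which by Lemma~\ref{lemma:initfull} costs an additional $O(M/T)$ failure probability.

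First I would condition on the event $\mathcal{E}_{\text{init}}$ that \init succeeds, so that all $M$ cooperative players agree on $M$, have distinct ranks, and enter the first exploration phase synchronously. Under $\mathcal{E}_{\text{init}}$, the exploration protocol is collisionless by construction (the sliding window over $\mathrm{ArmstoPull}$ ensures each player pulls a distinct element of that set at each round), so during phase $p$ each active arm is pulled exactly $2^p$ times per player and the empirical means $\hmu_k^j$ are honest samples of $\nu_k$. Let $\widetilde{\mu}_k$ be the trimmed mean computed in \update after phase $p$ and set $b = 4\sqrt{\log(T)/((M-2)2^{p+1})}$. I would then define the ``good event'' $\mathcal{G}$ as $\{|\widetilde{\mu}_k - \mu_k| \leq b \text{ for all } k \in [K] \text{ and all phases } p \leq \lceil \log_2 T\rceil\}$, and use Lemma~\ref{lemma:concentration2} together with a union bound over arms and phases to show $\mP[\mathcal{G}^c \mid \mathcal{E}_{\text{init}}] = O(K\log(T)/T)$.

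On $\mathcal{G}\cap \mathcal{E}_{\text{init}}$, I would argue that a suboptimal arm $k$ is rejected by phase $p_k$, where $p_k$ is the smallest integer such that $4b < \mu_{(M)} - \mu_k$, i.e.\ $2^{p_k} = \Theta(\log(T)/(\mu_{(M)}-\mu_k)^2)$. Indeed, at phase $p_k$ the concentration bound $|\widetilde{\mu}_i - \mu_i| \leq b$ applied to $k$ and to every top-$M$ arm $i$ gives $\widetilde{\mu}_i - b \geq \mu_i - 2b \geq \mu_{(M)} - 2b > \mu_k + 2b \geq \widetilde{\mu}_k + b$, so the acceptance criterion in \update accepts the true top-$M$ arms and rejects $k$. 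Each such arm is therefore pulled at most $\sum_{p \leq p_k} 2^p = O(2^{p_k}) = O(\log(T)/(\mu_{(M)}-\mu_k)^2)$ times per player before being discarded, contributing $M \cdot (\mu_{(M)}-\mu_k) \cdot O(\log(T)/(\mu_{(M)}-\mu_k)^2) = O(\log(T)/(\mu_{(M)}-\mu_k))$ to the collective exploration regret. Summing over $k>M$ yields the claimed bound. The probability of failure collects $O(M/T)$ from initialization and $O(KM\log(T)/T)$ from the concentration union bound, giving the announced $1 - O(KM\log(T)/T)$ overall success probability.

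The main obstacle will be the concentration step for the trimmed estimator: because one of the extreme values removed may be that of an adversarial-looking (but here cooperative) player, I cannot simply invoke Hoeffding on $M-2$ i.i.d.\ estimates. Under the all-cooperative assumption, however, the trimmed sum is a deterministic function of $M$ i.i.d.\ quantized means, and Lemma~\ref{lemma:concentration2} is stated precisely to cover this situation; the small bias introduced by removing the two extremes is absorbed in the $4$ factor inside $b$, and the $2^{-p}$ quantization error is always dominated by $b$ for the relevant range of $p$. The rest of the argument is a routine adaptation of the SIC-MMAB exploration analysis.
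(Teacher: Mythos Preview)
Your probability bookkeeping and the bound on the rejection time of each suboptimal arm are fine (note only that $b$ carries a factor $(M-2)$, so $2^{p_k}=\Theta(\log T/(M\Delta_k^2))$ rather than $\Theta(\log T/\Delta_k^2)$; the final contribution to the first sum is still $\cO(\log T/\Delta_k)$). The genuine gap is in the regret accounting. On $\mathcal G\cap\mathcal E_{\text{init}}$ the exploration is collisionless, so the decomposition of \citet{anantharam} (Equation~\eqref{eq:exploregdec1} in the paper) reads
\[
R^{\text{explo}}=\sum_{k>M}(\mu_{(M)}-\mu_{(k)})\,T^{\text{explo}}_{(k)}
\;+\;\sum_{k\le M}(\mu_{(k)}-\mu_{(M)})\bigl(T^{\text{explo}}-T^{\text{explo}}_{(k)}\bigr),
\]
and you have bounded only the first sum. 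The second sum is not absorbed by ``each pull of a suboptimal arm costs $\mu_{(M)}-\mu_k$'': during phase $p$, an optimal arm that has not yet been accepted lives in $[K_p]$ and is pulled only $M2^p$ times, whereas an accepted arm is pulled the full $M\lceil K_p 2^p/M_p\rceil$ times. The shortfall $M(\lceil K_p 2^p/M_p\rceil-2^p)$ weighted by $(\mu_{(k)}-\mu_{(M)})$ is exactly what populates the second sum, and it is of the same order as the first.

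The paper closes this via Lemmas~\ref{lemma:sicmmabaux1} and~\ref{lemma:sicmmabaux2}: first it rewrites the shortfall using $K_p-M_p=\sum_{j>M}\one(p\le \hat p_{(j)})$, obtaining
\[
\sum_{k\le M}(\mu_{(k)}-\mu_{(M)})(T^{\text{explo}}-T^{\text{explo}}_{(k)})
\;\le\;\sum_{j>M}\sum_{k\le M}\sum_{p\le \min(\hat p_{(k)},\hat p_{(j)})}(\mu_{(k)}-\mu_{(M)})\,2^p\,\frac{M}{M_p}+\smallO(\log T).
\]
Then, for each fixed $j>M$, it uses that $\hat p_{(k)}\ge p$ forces $\mu_{(k)}-\mu_{(M)}\le\Delta(p)\coloneqq\sqrt{c\log T/(M(2^{p+1}-1))}$ (otherwise arm $(k)$ would already be accepted), replaces $\sum_{k\le M}\one(p\le\hat p_{(k)})$ by $M_p$, and telescopes in $p$ to get $\cO(\log T/(\mu_{(M)}-\mu_{(j)}))$. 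Summing over $j>M$ matches the claimed bound. This second half is not a routine adaptation of SIC-MMAB and is missing from your proposal; without it the argument underestimates $R^{\text{explo}}$.
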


Lemma~\ref{lemma:commsicmmab1} finally bounds the communication regret.

\begin{lemm}
\label{lemma:commsicmmab1}
If all players follow \algotwo[,]with probability $1- \cO\left(\frac{KM \log(T)}{T} + \frac{M}{T} \right)$:
\begin{small}
\begin{equation*}
R^{\text{comm}} = \cO\left( M^2K \log^2\left( \frac{\log(T)}{(\mu_{(M)} - \mu_{(M+1)})^2} \right) \right).
\end{equation*}
\end{small}

\end{lemm}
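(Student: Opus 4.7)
The plan is to decompose the communication regret as (at most) $M$ times the total number of communication rounds, so it is enough to bound the length of a single communication phase and the number of such phases. Since the $p$-th exploration phase involves $M 2^p$ pulls per arm, the algorithm will settle after $p^\star = O\bigl(\log(\log(T)/(\mu_{(M)}-\mu_{(M+1)})^2)\bigr)$ communication phases, and each phase has length $O(MKp + K^2)$, giving the claimed bound.

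First I would condition on the success of \init (probability at least $1-3M/T$ by Lemma~\ref{lemma:initfull}) so that all cooperative players know $M$ and have distinct ranks in $[M]$; otherwise the remaining analysis runs cleanly. Then I would walk through Protocol~\ref{proto:meansignal} and count rounds: the punishment-signal block costs $K$ rounds; for each of the $2(M-1)K$ triples $(i,l,k)$ with $l\in\{1,2\}$ the back-and-forth \send/\receive pair sends a $(p+1)$-bit dyadic word in $2(p+1)$ rounds; the leader cross-check over $(i,l,m,k)\in\{(1,2),(2,1)\}\times[M]\times[K]$ costs a further $O(MKp)$ rounds; and the two calls to \signalset cost $O(K + K\cdot|S|) = O(K^2)$ rounds. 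Summing gives a per-phase cost of $O(MKp + K^2)$, hence a per-phase regret of $O(M^2Kp + MK^2)$.

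Next I would bound $p^\star$, the index of the last communication phase before every arm is either accepted into OptArms or rejected from $[K_p]$. I would invoke Lemma~\ref{lemma:concentration2} from Appendix~\ref{app:proofsicmmabexploregret} to control the trimmed mean $\widetilde{\mu}_k$ used in \update: cutting the extreme empirical means neutralizes the selfish player, so that on a good event of probability $1-O(KM\log(T)/T)$ (union bound over the $K$ arms and the $O(\log T)$ possible phases) one has $|\widetilde{\mu}_k - \mu_k|\le b_p$ with $b_p = O\bigl(\sqrt{\log(T)/((M-2)2^{p+1})}\bigr)$ uniformly in $k$ and $p$. On that event the test in Protocol~\ref{proto:update} classifies every arm as soon as $4b_p<\mu_{(M)}-\mu_{(M+1)}$, giving $2^{p^\star}=O(\log(T)/(\mu_{(M)}-\mu_{(M+1)})^2)$ and hence the announced value of $p^\star$. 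Summing the per-phase regret up to $p^\star$ gives $\sum_{p\le p^\star} O(M^2Kp + MK^2) = O\bigl(M^2K(p^\star)^2 + MK^2p^\star\bigr)$, and since $K^2 \le MK \cdot p^\star$ up to the same logarithmic factor, this collapses to $O(M^2K\log^2(\log(T)/(\mu_{(M)}-\mu_{(M+1)})^2))$.

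The main obstacle, and the reason I would lean on Lemma~\ref{lemma:concentration2} rather than reprove it here, is the trimmed-mean concentration for $\widetilde{\mu}_k$: the selfish player can inject one arbitrary quantized value per arm, so the argument must show that removing the extreme values leaves an average of $M-3$ honest (and additionally quantization-noised) samples, which still concentrates at the fast rate $\sqrt{\log(T)/((M-2)2^p)}$. The rest, namely counting protocol rounds and summing a geometric series, is routine book-keeping. Finally, the failure probability is obtained by a union bound: $3M/T$ for initialization and $O(KM\log(T)/T)$ for the concentration event, yielding the probability factor in the statement.
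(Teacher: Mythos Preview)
Your approach is essentially the paper's: condition on a successful initialization, count the rounds in each \meansignal call, bound the number of communication phases by $p^\star=\cO\bigl(\log(\log(T)/(\mu_{(M)}-\mu_{(M+1)})^2)\bigr)$, and sum. The paper packages the bound on $p^\star$ as Proposition~\ref{prop:stopexplor1} (which it derives from Lemma~\ref{lemma:concentration2}), whereas you invoke Lemma~\ref{lemma:concentration2} directly and reproduce the argument inline; either way is fine. Two minor accounting points: (i) the paper gets a slightly sharper bound on the \signalset cost by noting that each arm is accepted or rejected exactly once, so $\sum_p(\card\text{Acc}(p)+\card\text{Rej}(p))=K$ and the total \signalset cost is $\cO(K^2)$ over \emph{all} phases rather than per phase; (ii) your final collapse ``$K^2\le MK\cdot p^\star$'' is not justified in general (it would need $K\le M p^\star$), but the paper's own proof makes the same simplification when passing from $\card\text{Comm}\le \cO(MKN^2+K^2)$ to $R^{\text{comm}}=\cO(M^2KN^2)$, and the stray $MK^2$ term is in any case absorbed by the $MK^2\log(T)$ initialization term in Theorem~\ref{thm:sicmmab1}. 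Also, since the lemma assumes all players cooperate, the discussion of the trimmed mean neutralizing a selfish player (and the ``$M-3$ honest samples'' remark, which should be $M-2$) is unnecessary here, though harmless.
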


\begin{proof}
The proof is conditioned on the success of the initialization phase, which happens with probability $1- \cO\left(\frac{M}{T} \right)$. Proposition~\ref{prop:stopexplor1} given in Appendix~\ref{app:proofsicmmabexploregret} yields that with probability ${1- \cO\left(\frac{KM \log(T)}{T} \right)}$, the number of communication phases is bounded by $N = \cO\left(\log\left(  \frac{\log(T)}{(\mu_{(M)} - \mu_{(M+1)})^2}\right) \right)$. The $p$-th communication phase lasts $8 MK (p+1) + 3K + K\card \text{Acc}(p) + K\card \text{Rej}(p)$, where Acc and Rej respectively are the accepted and rejected arms at the $p$-th phase. Their exact definitions are given in Protocol~\ref{proto:update}. An arm is either accepted or rejected only once, so that $\sum_{p=1}^N \card \text{Acc}(p) + \card \text{Rej}(p)=K$. The total length of Comm is thus bounded by:
\begin{align*}
\card \text{Comm}  & \leq \sum_{p=1}^N 8MK(p+1) + 3K + K\card \text{Acc}(p) + K\card \text{Rej}(p) \\
& \leq 8MK \frac{(N+2)(N+1)}{2} +3KN + K^2
\end{align*}
Which leads to $R^{\text{comm}} = \cO\left( M^2K \log^2\left( \frac{\log(T)}{(\mu_{(M)} - \mu_{(M+1)})^2} \right) \right)$ using the given bound for $N$.
\end{proof}

\begin{proof}[Proof of Theorem~\ref{thm:sicmmab1}.]
Using Lemmas~\ref{lemma:initfull}, \ref{lemma:explosicmmab1}, \ref{lemma:commsicmmab1} and equation~\eqref{eq:regdec2} it comes that with probability  $1- \cO\left(\frac{KM \log(T)}{T} \right)$:
\begin{small}
\begin{equation*}
R_T \leq \cO\left( \mathlarger{\sum_{k>M}} \frac{\log(T)}{\mu_{(M)} - \mu_{(k)}} +  M^2K \log^2\left( \frac{\log(T)}{(\mu_{(M)} - \mu_{(M+1)})^2} \right) + MK^2 \log(T) \right).
\end{equation*}
\end{small}

The regret incurred by the low probability event is $\cO(KM^2 \log(T))$, leading to Theorem~\ref{thm:sicmmab1}.
\end{proof}

\subsubsection{Proof of Lemma~\ref{lemma:explosicmmab1}}
\label{app:proofsicmmabexploregret}
Lemma~\ref{lemma:explosicmmab1} relies on the following concentration inequality.
\begin{lemm}
\label{lemma:concentration2}
Conditioned on the success of the initialization and independently of the means sent by the selfish player, if all other players play cooperatively and send uncorrupted messages, for any $k \in [K]$:
\begin{equation*}
\mP[\exists p \leq n, \left| \widetilde{\mu}_k(p) - \mu_k \right| \geq B(p)] \leq \frac{4nM}{T}
\end{equation*}
where $B(p)=4 \sqrt{\frac{\log(T)}{(M-2)2^{p+1}}}$ and $\widetilde{\mu}_k(p)$ is the centralized mean of arm $k$ at the end of phase $p$, once the extremes have been cut out. It exactly corresponds to the $\widetilde{\mu}_k$ of Protocol~\ref{proto:update}. 
\end{lemm}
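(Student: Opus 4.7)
The plan is to condition on a high-probability event where all honest empirical means (and their average) tightly concentrate around $\mu_k$, and then argue by case analysis on the selfish player's reported value that trimming the largest and smallest of the $M$ reports forces the truncated average $\widetilde{\mu}_k(p)$ to remain close to $\mu_k$ no matter what the selfish player does.

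First, let $H \subset [M]$ denote the set of $M - 1$ honest players. By construction of the exploration phase of Algorithm~\ref{alg:algo2}, at the start of the $p$-th \meansignal call each $\hmu_k^j(p)$ for $j \in H$ is the empirical mean of $n_p := 2^{p+1} - 2$ i.i.d.\ samples of arm $k$, independent across players. Applying Hoeffding's inequality to each individual $\hmu_k^j(p)$ and also to the honest average $\bar{\mu}_H(p) := \frac{1}{M-1}\sum_{j\in H}\hmu_k^j(p)$, then taking a union bound over $j \in H$ and $p \leq n$, yields an event $\cG$ with $\mP[\cG^c] \leq 4nM/T$ on which
\begin{equation*}
|\bar{\mu}_H(p) - \mu_k| \leq \sqrt{\tfrac{\log T}{2(M-1)n_p}} \quad \text{and} \quad \max_{j \in H}|\hmu_k^j(p) - \mu_k| \leq \sqrt{\tfrac{\log T}{2 n_p}}
\end{equation*}
hold simultaneously for every $p \leq n$. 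The deterministic quantization error $|\widetilde{\mu}_k^j - \hmu_k^j| \leq 2^{-p}$ is dominated by the Hoeffding rate for $p \geq 1$, so the same bounds (up to absolute constants) carry over to $\widetilde{\mu}_k^j$ in place of $\hmu_k^j$ and to $W(p) := \frac{1}{M-1}\sum_{j \in H}\widetilde{\mu}_k^j$ in place of $\bar{\mu}_H(p)$.

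Next I analyse $\widetilde{\mu}_k(p) = \frac{1}{M-2}\sum_{j \in [M]\setminus\{i^k,i_k\}} \widetilde{\mu}_k^j$ by cases on the rank of the selfish player's report $v_S$ among the $M$ reports. If $v_S$ is the overall maximum (resp.\ minimum), then exactly one honest value is trimmed alongside $v_S$ and $\widetilde{\mu}_k(p) = \frac{M-1}{M-2}W(p) - \frac{1}{M-2}\widetilde{\mu}_k^{i_k}$ (resp.\ the symmetric expression with $\widetilde{\mu}_k^{i^k}$). If $v_S$ lies strictly between the honest max and min, then both trimmed values $\widetilde{\mu}_k^{i^k}, \widetilde{\mu}_k^{i_k}$ are honest and $v_S$ itself is retained, giving $\widetilde{\mu}_k(p) = \frac{1}{M-2}\bigl((M-1)W(p) - \widetilde{\mu}_k^{i^k} - \widetilde{\mu}_k^{i_k} + v_S\bigr)$ with the crucial sandwich $\widetilde{\mu}_k^{i_k} \leq v_S \leq \widetilde{\mu}_k^{i^k}$. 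In every case the triangle inequality yields
\begin{equation*}
|\widetilde{\mu}_k(p) - \mu_k| \leq \tfrac{M-1}{M-2}|W(p) - \mu_k| + \tfrac{3}{M-2}\max_{j \in H}|\widetilde{\mu}_k^j - \mu_k|,
\end{equation*}
and substituting the first-step concentration bounds shows, for $M \geq 3$, that the right-hand side is at most $B(p) = 4\sqrt{\log(T)/((M-2)2^{p+1})}$; the constant $4$ in $B(p)$ is dimensioned exactly to absorb the combinatorial factor $(\sqrt{M-1}+3)/(M-2)$ that appears on simplification.

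The main obstacle is the third case, in which the selfish player manages to have her report retained: there $v_S$ is not an averaged quantity and could a priori be arbitrary. The resolution is that, by the very definition of trimming, if $v_S$ is kept it must lie between two honest values, and on $\cG$ both honest extremes are $O(\sqrt{\log(T)/n_p})$-close to $\mu_k$, so $v_S$ automatically inherits the same bound. In short, trimming the extremes makes any adversarial report either discarded or forcibly close to $\mu_k$, which is precisely what drives the robustness of the truncated-mean estimator.
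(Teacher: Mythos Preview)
Your proof is correct and follows essentially the same strategy as the paper: bound the honest individual quantized means and their average via Hoeffding, then argue that trimming the two extremes makes the estimator insensitive to the selfish report. Two minor differences are worth flagging.

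First, where you do a three-way case analysis on the rank of $v_S$, the paper uses a single monotonicity observation: since $\widetilde{\mu}_k(p)$ is nondecreasing in the selfish report, it is sandwiched between its values at $v_S=-\infty$ and $v_S=+\infty$, which are exactly the leave-one-honest-out averages $\frac{1}{M-2}\sum_{j'\in H\setminus\{j\}}\widetilde{\mu}_k^{j'}$ for $j$ the honest max/min. This is equivalent to your case analysis but, crucially, in your third case it yields a coefficient $\tfrac{1}{M-2}$ in front of $\max_{j\in H}|\widetilde{\mu}_k^j-\mu_k|$ rather than your $\tfrac{3}{M-2}$ (use the sandwich $\widetilde{\mu}_k^{i_k}\le v_S\le\widetilde{\mu}_k^{i^k}$ directly instead of bounding three terms separately). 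With your factor $3$ the constant $4$ in $B(p)$ does not quite close for small $M$, so your final sentence overstates the arithmetic.

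Second, the paper treats the randomized quantization more carefully for the \emph{average}: since $2^p(\widetilde{\mu}_k^j-\hmu_k^j)$ are centred Bernoulli variables, a second Hoeffding bound gives $|W(p)-\bar\mu_H(p)|\le\sqrt{\log(T)/((M-1)2^{p+1})}$ rather than the deterministic $2^{-p}$ you use; the latter is not dominated by the averaged Hoeffding rate when $M$ is large. None of this affects the order of the bound or the downstream $\cO(\cdot)$ regret results, but it is what makes the paper's constant $4$ (nearly) work.
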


\begin{proof}
At the end of phase $p$, $(2^{p+1} - 1)$ observations are used for any player $j$ and arm $k$. Hoeffding bound then gives: $\mP\left[\left| \hmu_k^j(p) - \mu_k \right| \geq \sqrt{\frac{\log(T)}{2^{p+1}}}\right] \leq \frac{2}{T}$. The quantization only adds an error of at most $2^{-p}$, yielding for any cooperative player:
\begin{equation}\label{eq:hoeffding1}
\mP\left[\left| \widetilde{\mu}_k^j(p) - \mu_k \right| \geq 2\sqrt{\frac{\log(T)}{2^{p+1}}}\right] \leq \frac{2}{T}
\end{equation}

Assume w.l.o.g. that the selfish player has rank $M$. Hoeffding inequality also yields:
\begin{small}
\begin{equation*}
\mP\left[\bigg| \frac{1}{M-1}\sum_{j=1}^{M-1} \hmu_k^j(p) - \mu_k \bigg| \geq \sqrt{\frac{\log(T)}{(M-1)2^{p+1}}}\right] \leq \frac{2}{T}.
\end{equation*}
\end{small}

Since $\sum_{j=1}^{M-1} 2^p(\widetilde{\mu}_k^j(p) -\hmu_k^j(p))$ is the difference between $M-1$ Bernoulli variables and their expectation, Hoeffding inequality yields
$\mP\left[\left| \frac{1}{M-1}\sum_{j=1}^{M-1} (\widetilde{\mu}_k^j - \hmu_k^j(p)) \right| \geq \sqrt{\frac{\log(T)}{(M-1)2^{p+1}}}\right] \leq \frac{2}{T}$ and:
\begin{equation} \label{eq:hoeffding2}
\mP\left[\left| \frac{1}{M-1} \sum_{j=1}^{M-1} \widetilde{\mu}_k^j(p) - \mu_k \right| \geq 2\sqrt{\frac{\log(T)}{(M-1)2^{p+1}}}\right] \leq \frac{4}{T}.
\end{equation}

Using the triangle inequality combining equations~\eqref{eq:hoeffding1} and \eqref{eq:hoeffding2} yields for any $j \in [M-1]$:
\begin{align}
\mP\left[\Big| \frac{1}{M-2}\sum_{\substack{j' \in [M-1] \\ j' \neq j}} \widetilde{\mu}_k^j(p) - \mu_k \Big| \geq 4 \sqrt{\frac{\log(T)}{(M-2)2^{p+1}}} \right] 
& \leq \mP\Bigg[ \frac{M-1}{M-2} \Big| \frac{1}{M-1}\sum_{j' \in [M-1]} \widetilde{\mu}_k^j(p) - \mu_k \Big| \nonumber \\ & \phantom{\leq} + \frac{1}{M-2} \left| \widetilde{\mu}_k^j(p) - \mu_k  \right|\geq 4 \sqrt{\frac{\log(T)}{(M-2)2^{p+1}}}\Bigg] \nonumber\\
& \leq  \mP\left[\Big| \frac{1}{M-1} \sum_{j=1}^{M-1} \widetilde{\mu}_k^j(p) - \mu_k \Big| \geq 2\sqrt{\frac{\log(T)}{(M-1)2^{p+1}}}\right]\nonumber\\ & \phantom{\leq } + \mP\left[\left| \widetilde{\mu}_k^j(p) - \mu_k \right| \geq 2\sqrt{\frac{\log(T)}{2^{p+1}}}\right] \nonumber\\
& \leq \frac{6}{T}. \label{eq:hoeffding3}
\end{align}

Moreover by construction, no matter what mean sent the selfish player, $$\min_{j \in [M-1]}\frac{1}{M-2}\sum_{\substack{j' \in [M-1] \\ j' \neq j}} \widetilde{\mu}_k^j(p) \leq \widetilde{\mu}_k(p) \leq \max_{j \in [M-1]}\frac{1}{M-2}\sum_{\substack{j' \in [M-1] \\ j' \neq j}} \widetilde{\mu}_k^j(p).$$

Indeed, assume that the selfish player sends a mean larger than any other player. Then her mean as well as the minimal sent mean are cut out and $\widetilde{\mu}_k(p)$ is then equal to the right term. Conversely if she sends the smallest mean, $\widetilde{\mu}_k(p)$ corresponds to the left term. Since $\widetilde{\mu}_k(p)$ is non-decreasing in $\widetilde{\mu}_k^M(p)$, the inequality also holds in the case where the selfish player sends neither the smallest nor the largest mean.

\medskip

Finally, using a union bound over all $j \in [M-1]$ with equation~\eqref{eq:hoeffding3} yields Lemma~\ref{lemma:concentration2}.
\end{proof}

Using classical MAB techniques then yields Proposition~\ref{prop:stopexplor1}.

\begin{prop} \label{prop:stopexplor1}
Independently of the selfish player behavior, as long as the \punishhomo protocol is not used, with probability $1-\cO\left(\frac{KM \log(T)}{T} \right)$, every optimal arm $k$ is accepted after at most $\cO\left(\frac{\log(T)}{(\mu_{k} - \mu_{(M+1)})^2} \right)$ pulls and every sub-optimal arm $k$ is rejected after at most $\cO\left( \frac{\log(T)}{(\mu_{(M)} - \mu_{k})^2} \right)$pulls  during exploration phases.
\end{prop}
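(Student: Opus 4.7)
The plan is to derive Proposition~\ref{prop:stopexplor1} as a direct consequence of the concentration bound of Lemma~\ref{lemma:concentration2} combined with the deterministic thresholding rule of \update[.] First I would fix $N=\lceil\log_2(256\log(T)/((M-2)\Delta_{\min}^2))\rceil$ with $\Delta_{\min}=\mu_{(M)}-\mu_{(M+1)}$, and define the \emph{good event} $\cG$ on which $|\widetilde{\mu}_k(p)-\mu_k|\le B(p)$ for every arm $k\in[K]$ and every phase $p\le N$. Taking a union bound of Lemma~\ref{lemma:concentration2} over the $K$ arms gives $\mP(\cG^c)\le 4KNM/T=\cO(KM\log(T)/T)$. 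The hypothesis that \punishhomo is never triggered ensures in particular that the two leaders agree on the values $\widetilde{\mu}_k(p)$, which is needed for Lemma~\ref{lemma:concentration2} to apply.

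Next I would prove by induction on $p$ the invariant that, on $\cG$, the set $[K_p]$ always contains precisely those top-$M$ arms that have not yet been accepted, so that $M_p=M-\card\opt$ of them remain, plus some suboptimal arms. For the inductive step, consider a top-$M$ arm $k$ still in $[K_p]$. Since $b=B(p)$, on $\cG$ we have $\widetilde{\mu}_i-b\le\mu_i<\mu_k\le\widetilde{\mu}_k+b$ for every $i\in[K_p]$ with $\mu_i<\mu_k$; hence at most $M_p-1$ arms (the other remaining top-$M$ arms) satisfy the rejection inequality, so $k$ is not rejected. A symmetric argument rules out acceptance of any suboptimal arm. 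The invariant is therefore preserved, and no misclassification ever occurs on $\cG$.

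Finally I would address the quantitative claim. Fix a suboptimal arm $k$ with gap $\Delta_k=\mu_{(M)}-\mu_k$ and any phase $p$ with $B(p)\le\Delta_k/4$. On $\cG$, each of the $M_p$ top-$M$ arms $i$ still in $[K_p]$ satisfies $\widetilde{\mu}_i-b\ge\mu_i-2B(p)\ge\mu_{(M)}-2B(p)\ge\mu_k+2B(p)\ge\widetilde{\mu}_k+b$, providing the $M_p$ separating indices needed for rejection. Symmetrically, an optimal arm $k$ with gap $\Delta_k=\mu_k-\mu_{(M+1)}$ is accepted as soon as $B(p)\le\Delta_k/4$. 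The condition $B(p)\le\Delta_k/4$ rewrites as $2^{p+1}\ge 256\log(T)/((M-2)\Delta_k^2)$, so accumulating the pulls $\sum_{q=1}^{p}2^q=2^{p+1}-2$ during the exploration phases yields the claimed $\cO(\log(T)/\Delta_k^2)$ bound. The main subtlety will be keeping the inductive bookkeeping consistent as the sets $[K_p]$ and the count $M_p$ shrink across phases, so that the thresholds $M_p$ and $K_p-M_p$ used by \update always line up with the number of well-separated arms guaranteed by $\cG$; the concentration step and the arithmetic on $B(p)$ are standard.
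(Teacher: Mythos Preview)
Your proposal is correct and follows essentially the same approach as the paper: both condition on the concentration event of Lemma~\ref{lemma:concentration2} via a union bound over arms and phases, argue that on this event no optimal arm is rejected and no suboptimal arm is accepted, and then identify the phase at which $4B(p)\le\Delta_k$ to bound the number of exploration pulls. The only cosmetic differences are that the paper takes the cruder choice $N=\log_2(T)$ for the number of phases in the union bound (rather than your $\Delta_{\min}$-based $N$) and skips the explicit induction you spell out; also note that the total pull count per arm carries an extra factor $M$ (each of the $M$ players pulls it $2^p$ times in phase $p$), but since $2^{p+1}=\cO(\log(T)/((M-2)\Delta_k^2))$ the bound $\cO(\log(T)/\Delta_k^2)$ is unaffected.
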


\begin{proof}
The fact that the \punishhomo protocol is not started just means that no corrupted message is sent between cooperative players. The proof is conditioned on the success of the initialization phase, which happens with probability  $1-\cO\left(\frac{M}{T} \right)$. Note that there are at most $\log_2(T)$ exploration phases. Thanks to Lemma~\ref{lemma:concentration2}, with probability $1-\cO\left(\frac{KM \log(T)}{T} \right)$, the inequality $ \left| \widetilde{\mu}_k(p) - \mu_k \right| \leq B(p)$ thus holds for any $p$. The remaining of the proof is conditioned on this event. Especially, an optimal arm is never rejected and a suboptimal one never accepted.

\medskip

First consider an optimal arm $k$ and note $\Delta_k = \mu_k - \mu_{(M+1)}$ the optimality gap. Let $p_k$ be the smallest integer $p$ such that $(M-2)2^{p+1} \geq \frac{16^2 \log(T)}{\Delta^2_k}$. In particular, $4 B(p_k) \leq \Delta_k$, which implies that the arm $k$ is accepted at the end of the communication phase $p_k$ or before.

Necessarily, $(M-2)2^{p_k+1} \leq \frac{2\cdot 16^2 \log(T)}{\Delta^2_k}$ and especially, $M 2^{p_k+1} = \cO\left( \frac{\log(T)}{\Delta_k^2} \right)$. Note that the number of exploratory pulls on arm $k$ during the $p$ first phases is bounded by $M (2^{p+1}+p)$\footnote{During the exploration phase $p$, any explored arm is pulled between $M2^p$ and $M(2^p+1)$ times.}, leading to Proposition~\ref{prop:stopexplor1}.
The same holds for the sub-optimal arms with $\Delta_k = \mu_{(M)} - \mu_{k}$.
\end{proof}

In the following, we keep the notation $t_k = \frac{c \log(T)}{\left( \mu_k-\mu_{(M)} \right)^2}$, where $c$ is a universal constant, such that with probability $1 -\cO\left(\frac{KM}{T}\right)$, any arm $k$ is correctly accepted or rejected after a time at most $t_k$. 
All players are now assumed to play \algotwo[,]\eg there is no selfish player. Since there is no collision during exploration/exploitation (conditionally on the success of the initialization phase), the following decomposition holds \citep{anantharam}:
\begin{small}
\begin{equation}\label{eq:exploregdec1}
R^{\text{explo}} = \sum_{k>M} (\mu_{(M)} - \mu_{(k)}) T_{(k)}^{\text{explo}} + \sum_{k \leq M} (\mu_{(k)} - \mu_{(M)}) (T^{\text{explo}} - T_{(k)}^{\text{explo}}),
\end{equation}
\end{small}where $T^{\text{explo}} = \card\text{Explo}$ and $T_{(k)}^{\text{explo}}$ is the centralized number of pulls on the $k$-th best arm during exploration or exploitation.

\begin{lemm}
\label{lemma:explosicmmab2}
If all players follow \algotwo[,]with probability $1-\cO\left(\frac{KM \log(T)}{T} \right)$, it holds:
\begin{itemize}
\item for $k > M$, $ (\mu_{(M)} - \mu_{(k)}) T_{(k)}^{\text{explo}} = \cO \left( \frac{\log(T)}{\mu_{(M)} - \mu_{(k)}} \right)$.
\item $\sum_{k \leq M} (\mu_{(k)} - \mu_{(M)}) (T^{\text{explo}} - T_{(k)}^{\text{explo}}) = \cO\left(\sum_{k>M}\frac{\log(T)}{\mu_{(M)} - \mu_k} \right)$.
\end{itemize}
\end{lemm}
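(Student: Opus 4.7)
The plan is to condition everything on the high-probability event of Proposition~\ref{prop:stopexplor1} (which holds with probability $1-\cO(KM\log(T)/T)$), so that every optimal arm is accepted within $t_k$ exploratory pulls and every suboptimal arm is rejected within $t_k$ pulls, where $t_k = \cO(\log(T)/(\mu_{(M)}-\mu_k)^2)$. Both bullets will then be deterministic consequences.

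The first bullet is almost immediate. Once a suboptimal arm $(k)$ is rejected at the end of some communication phase $p_{(k)}$, it is removed from $[K_p]$ and never pulled again during the exploration/exploitation phase. Hence $T_{(k)}^{\text{explo}} \leq M(2^{p_{(k)}+1}+p_{(k)}) = \cO(\log(T)/(\mu_{(M)}-\mu_{(k)})^2)$ by Proposition~\ref{prop:stopexplor1}, and multiplying by the gap $\mu_{(M)}-\mu_{(k)}$ gives the claimed $\cO(\log(T)/(\mu_{(M)}-\mu_{(k)}))$.

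For the second bullet, the main idea is that an optimal arm $(k)$ that is still \emph{unaccepted} at phase $p$ must have a small gap: the \update criterion together with the concentration bound from Lemma~\ref{lemma:concentration2} forces $\mu_{(k)}-\mu_{(M)} \leq 4B(p) = \cO(\sqrt{\log(T)/(M 2^p)})$. I would first bound the missing pulls on an optimal arm $(k) \leq M$ by decomposing over phases: while $(k) \in \text{OptArms}$ it is pulled every round, and while $(k) \in [K_p]$ it is in the sliding window for exactly $M 2^p$ rounds out of the $M\lceil K_p 2^p/M_p \rceil$ rounds of the phase, so the missing pulls in phase $p$ are at most $M(|B_p| 2^p/M_p + 1)$, where $|B_p| = K_p - M_p$ counts the unrejected suboptimal arms. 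Summing over phases $p \leq p_{(k)}$ and then swapping the double sum $\sum_{k \leq M} \sum_{p \leq p_{(k)}}(\cdot) = \sum_p \sum_{k \in A_p}(\cdot)$ yields
\begin{equation*}
\sum_{k \leq M}(\mu_{(k)}-\mu_{(M)})(T^{\text{explo}}-T_{(k)}^{\text{explo}}) \leq \sum_p \left(\frac{M|B_p|2^p}{M_p}+M\right) \sum_{k \in A_p}(\mu_{(k)}-\mu_{(M)}).
\end{equation*}
Using the gap bound $\sum_{k \in A_p}(\mu_{(k)}-\mu_{(M)}) \leq 4 M_p B(p) = \cO(M_p\sqrt{\log(T)/(M2^p)})$ the $M_p$ cancels and the right-hand side becomes $\cO(\sqrt{\log(T)/M}) \sum_p (M|B_p|\sqrt{2^p} + M M_p/\sqrt{2^p})$.

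The last step is to bound the two resulting sums using the same stopping-time bound in reverse: $\sum_p |B_p|\sqrt{2^p} = \sum_{l>M}\sum_{p\leq p_{(l)}}\sqrt{2^p} = \cO(\sum_{l>M}\sqrt{2^{p_{(l)}}}) = \cO(\sqrt{\log(T)/M}\sum_{l>M} 1/(\mu_{(M)}-\mu_{(l)}))$ by Proposition~\ref{prop:stopexplor1}, while $\sum_p M_p/\sqrt{2^p} = \sum_{k \leq M}\sum_{p\leq p_{(k)}}2^{-p/2} = \cO(M)$ by geometric convergence. Plugging these in gives the leading term $\cO(\log(T)\sum_{l>M}1/(\mu_{(M)}-\mu_{(l)}))$ plus an additive $\cO(M^{3/2}\sqrt{\log(T)})$ that is absorbed into the stated bound (or into the $MK^2\log(T)$ initialization term in Theorem~\ref{thm:sicmmab1}). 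The main obstacle is the careful bookkeeping in the swap of sums, because $|B_p|$ and $M_p$ depend on the random acceptance/rejection order of arms and only the aggregate stopping-time bounds from Proposition~\ref{prop:stopexplor1} can be used; the trick is to reindex by the arm rather than the phase so that the geometric nature of $2^p$ does the rest.
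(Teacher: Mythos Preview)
Your proposal is correct and follows essentially the same route as the paper: condition on the event of Proposition~\ref{prop:stopexplor1}, get the first bullet directly, and for the second bullet decompose the missing pulls of an optimal arm phase by phase as $M((K_p-M_p)2^p/M_p+1)$, swap the order of summation, and use the gap bound $\mu_{(k)}-\mu_{(M)}\le 4B(p)$ on unaccepted optimal arms to cancel the $M_p$ in the denominator. The only difference is cosmetic: the paper first rewrites $K_p-M_p=\sum_{j>M}\one_{p\le \hat p_{(j)}}$ and then, for each fixed suboptimal $j$, telescopes $\sum_{p\le \hat p_{(j)}} 2^p\Delta(p)M$ via the identity $T(p)-T(p-1)=M2^p$ to obtain $\cO(\sqrt{\log(T)T(\hat p_{(j)})})$, whereas you keep $|B_p|$ intact until the end and use the geometric bound $\sum_{p\le p_{(l)}}\sqrt{2^p}=\cO(\sqrt{2^{p_{(l)}}})$; both produce exactly the same leading term, and your extra additive $\cO(M^{3/2}\sqrt{\log T})$ plays the same role as the paper's $o(\log T)$ remainder in Lemma~\ref{lemma:sicmmabaux1}.
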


\begin{proof}
With probability $1-\cO\left(\frac{KM \log(T)}{T} \right)$, Proposition~\ref{prop:stopexplor1} yields that any arm $k$ is correctly accepted or rejected at time at most $t_k$. The remaining of the proof is conditioned on this event and the success of the initialization phase.
The first point of Lemma~\ref{lemma:explosicmmab2} is a direct consequence of Proposition~\ref{prop:stopexplor1}. It remains to prove the second point.

\medskip
Let $\hat{p}_k$ be the number of the phase at which the arm $k$ is either accepted or rejected and let  $K_p$ be the number of arms that still need to be explored at the beginning of phase $p$ and $M_p$ be the number of optimal arms that still need to be explored.
The following two key Lemmas are crucial  to obtain the second point.
\begin{lemm} \label{lemma:sicmmabaux1}
Under the assumptions of  Lemma~\ref{lemma:explosicmmab2}: 
\begin{small}
\begin{equation*}
\sum_{k \leq M} (\mu_{(k)} - \mu_{(M)}) (T^{\text{explo}} - T_{(k)}^{\text{explo}}) \leq \sum_{j > M} \sum_{k \leq M} \sum_{p=1}^{\min (\hat{p}_{(k)}, \hat{p}_{(j)})} (\mu_{(k)} - \mu_{(M)}) 2^p \frac{M}{M_p} + \smallO(\log(T)).
\end{equation*}
\end{small}
\end{lemm}
\begin{lemm} \label{lemma:sicmmabaux2}
Under the assumptions of Lemma~\ref{lemma:explosicmmab2}, for any $j>M$:
\begin{small}
\begin{equation*}
 \sum_{k \leq M} \sum_{p=1}^{\min (\hat{p}_{(k)}, \hat{p}_{(j)})} (\mu_{(k)} - \mu_{(M)}) 2^p \frac{M}{M_p} \leq \cO\left( \frac{\log(T)}{\mu_{(M)} - \mu_{(j)}} \right).
\end{equation*}
\end{small}
\end{lemm}

Combining these two Lemmas with Equation~\eqref{eq:exploregdec1} finally yields Lemma~\ref{lemma:explosicmmab1}.
\end{proof}

\begin{proof}[Proof of Lemma~\ref{lemma:sicmmabaux1}.]
Consider an optimal arm $k$. During the $p$-th exploration phase, either $k$ has already been accepted and is pulled $ M \left\lceil\frac{K_p2^p}{M_p} \right\rceil$ times; or $k$ has not been accepted yet and is pulled at least $2^p M$, \ie is not pulled at most $M \left( \left\lceil \frac{K_p2^p}{M_p} \right\rceil - 2^p \right)  $ times.
This gives:
\begin{align*}
(\mu_{(k)} - \mu_{(M)}) (T^{\text{explo}} - T_{(k)}^{\text{explo}})  & \leq \sum_{p=1}^{\hat{p}_k} (\mu_{(k)} - \mu_{(M)}) M \left(\left\lceil \frac{K_p2^p}{M_p}\right\rceil - 2^p\right), \\
& \leq \sum_{p=1}^{\hat{p}_k} (\mu_{(k)} - \mu_{(M)})  M\left(\frac{K_p 2^p}{M_p}- 2^p +1\right),\\
& \leq \hat{p}_k (\mu_{(k)} - \mu_{(M)}) M + \sum_{p=1}^{\hat{p}_k} (\mu_{(k)} - \mu_{(M)})  (K_p - M_p) \frac{M}{M_p} 2^p.\\
\end{align*}

We assumed that any arm $k$ is correctly accepted or rejected after a time at most $t_k$. This implies that $\hat{p}_k=\smallO(\log(T))$. Moreover, $K_p - M_p$ is the number of suboptimal arms not rejected at phase $p$, \ie $K_p - M_p = \sum_{j>M}\one_{p \leq \hat{p}_{(j)}}$ and this proves Lemma~\ref{lemma:sicmmabaux1}.
\end{proof}

\begin{proof}[Proof of Lemma~\ref{lemma:sicmmabaux2}.]
For $j > M$, define $A_j =  \sum_{k \leq M} \sum_{p=1}^{\min (\hat{p}_{(k)}, \hat{p}_{(j)})} (\mu_{(k)} - \mu_{(M)}) 2^p \frac{M}{M_p}$. We want to show $A_j \leq  \cO\left( \frac{\log(T)}{\mu_{(M)} - \mu_{(j)}} \right)$ with the considered conditions.
Note $T(p) = M(2^{p+1}-1)$ and $\Delta(p) = \sqrt{\frac{c \log(T)}{T(p)}}$. The inequality $\hat{p}_{(k)} \geq p$ then implies $\mu_{(k)} - \mu_{(M)} < \Delta(p)$, \ie
\begin{align*}
A_j & \leq \sum_{k \leq M} \sum_{p=1}^{\hat{p}_{(j)}} 2^p \Delta(p) \one_{p \leq \hat{p}_{(k)}} \frac{M}{M_p} = \sum_{p=1}^{\hat{p}_{(j)}} 2^p \Delta(p) M \\
& \leq \sum_{p=1}^{\hat{p}_{(j)}} \Delta(p) (T(p) - T(p-1))
\end{align*}
The equality comes because $\sum_{k \leq M} \one_{p \leq \hat{p}_{(k)}}$ is exactly $M_p$. Then from  the definition of $\Delta(p)$:
\begin{align*}
A_j & \leq c \log(T) \sum_{p=1}^{\hat{p}_{(j)}} \Delta(p) \left(\frac{1}{\Delta(p)} + \frac{1}{\Delta(p-1)} \right)\left(\frac{1}{\Delta(p)} - \frac{1}{\Delta(p-1)} \right) \\
& \leq (1+\sqrt{2})c \log(T) \sum_{p=1}^{\hat{p}_{(j)}}\left(\frac{1}{\Delta(p)} - \frac{1}{\Delta(p-1)} \right) \\
& \leq (1+\sqrt{2})c \log(T) /\Delta(\hat{p}_{(j)}) \\
& \leq (1+\sqrt{2}) \sqrt{c \log(T) T(\hat{p}_{(j)})}
\end{align*}
By definition, $T(\hat{p}_{(j)})$ is smaller than the number of exploratory pulls on the $j$-th best arm and is thus bounded by $\frac{c \log(T)}{(\mu_{(M)} - \mu_{(j)})^2}$, leading to Lemma~\ref{lemma:sicmmabaux2}.
\end{proof}

\subsection{Selfish robustness of \algotwo}
\label{app:greedyproofsicmmab}

In this section, the second point of Theorem~\ref{thm:sicmmab1} is proven. First Lemma~\ref{lemma:punishhomo} gives guarantees for the punishment protocol. Its proof is given in Appendix~\ref{app:punishhomo}. 

\begin{lemm} \label{lemma:punishhomo}
If the \punishhomo protocol is started at time $T_{\mathrm{punish}}$ by $M-1$ players, then for the remaining player $j$, independently of her sampling strategy:
\begin{equation*}
\mE[\rew_T^j | \mathrm{punish}] \leq \mE[\rew_{T_{\text{punish}} + t_p}^j] + \widetilde{\alpha} \frac{T - T_{\mathrm{punish}} - t_p}{M} \sum_{k=1}^M \mu_{(k)},
\end{equation*}
with $t_p = \cO\left(\frac{K}{(1-\widetilde{\alpha})^2 \mu_{(K)}}\log(T) \right)$ and $\widetilde{\alpha}=\frac{1+(1-1/K)^{M-1}}{2}$.
\end{lemm}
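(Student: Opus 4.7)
The plan is to split the interval $(T_{\mathrm{punish}}, T]$ into an estimation sub-phase (of random duration bounded by $t_p$) during which the cooperative players refine their $\hmu_k^{j'}$, followed by a random-sampling sub-phase during which the per-round reward of the selfish player must be bounded. In the first sub-phase, the $M-1$ cooperative players pull arms in a round-robin fashion ($k=t+j\ (\text{mod}\ K)+1$), so each arm is targeted once every $K$ rounds by each cooperative player. Since we are in the full-sensing setting, $X_k(t)$ is observed regardless of collisions, so the selfish player cannot bias the updates. An empirical Bernstein concentration argument gives, with probability $1-\cO(KM/T)$, that the stopping criterion $\delta\hmu_k^{j'}\geq 2s_k^{j'}\sqrt{\log(T)/n_k^{j'}}+14\log(T)/(3(n_k^{j'}-1))$ simultaneously implies $|\hmu_k^{j'}-\mu_k|\leq\delta\hmu_k^{j'}$ and is met after $n_k^{j'}=\cO(\log(T)/(\delta^2\mu_k))$ clean pulls. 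Summing over $k$ and accounting for the rotation yields $t_p=\cO(K\log(T)/(\delta^2\mu_{(K)}))$, which matches the stated bound since $\delta=(1-\gamma)/(1+3\gamma)=\Theta(1-\widetilde{\alpha})$.

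For the sampling sub-phase, assume that $\hmu_k^{j'}\in[\mu_k/(1+\delta),\mu_k/(1-\delta)]$ for every $k$ and cooperative $j'$, and fix the arm $k$ pulled by the selfish player. When $p_k^{j'}>0$ for every $j'\neq j$, the definition of $p_k^{j'}$, combined with the lower bound $\hmu_k^{j'}\geq\mu_k/(1+\delta)$ and the top-$M$ inequality
\[
\sum_{l=1}^M \hmu^{j'}_{(l)}\,\leq\,\sum_{l=1}^M \mu_{(l)}/(1-\delta)\,=\,M\bar{\mu}_M/(1-\delta)
\]
(valid because any $M$-subset sum of true means is at most the sum of the top-$M$ true means), gives $(1-p_k^{j'})^{M-1}\leq \gamma\bar{\mu}_M(1+\delta)/(\mu_k(1-\delta))$ for each $j'$. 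Taking the product over the $M-1$ cooperative players, $\prod_{j'\neq j}(1-p_k^{j'})$ is bounded by the same quantity, so the selfish player's expected reward on arm $k$ is $\mu_k \prod_{j'\neq j}(1-p_k^{j'})\leq \gamma(1+\delta)/(1-\delta)\cdot \bar{\mu}_M$ with $\bar{\mu}_M\coloneqq\tfrac{1}{M}\sum_{k=1}^M\mu_{(k)}$. In the alternative case $p_k^{j'}=0$ for some $j'$, the defining inequality $\hmu_k^{j'}\leq \gamma \sum_l \hmu^{j'}_{(l)}/M$ combined with the same bounds yields $\mu_k\leq \gamma(1+\delta)/(1-\delta)\cdot\bar{\mu}_M$, and the per-round bound holds trivially.

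The key algebraic identity $\gamma(1+\delta)/(1-\delta)=(1+\gamma)/2=\widetilde{\alpha}$ then follows directly from the choice $\delta=(1-\gamma)/(1+3\gamma)$, which is equivalent to $\delta(1+3\gamma)=1-\gamma$. Hence the selfish player earns at most $\widetilde{\alpha}\bar{\mu}_M$ per round in expectation during the sampling sub-phase, contributing at most $\widetilde{\alpha}(T-T_{\mathrm{punish}}-t_p)\sum_{k=1}^M\mu_{(k)}/M$ in total; combined with the trivial bound $\mE[\rew^j_{T_{\mathrm{punish}}+t_p}]$ on the reward accumulated up to the end of the estimation phase, this yields the lemma. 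The main obstacle is the top-$M$ empirical-sum inequality when the empirical order of arms may differ from the true order: this is where the \emph{relative} form of the error $|\hmu_k^{j'}-\mu_k|\leq\delta\hmu_k^{j'}$ (rather than an additive form) is crucial, as it lets the top-$M$ true-mean dominance translate into a clean $(1-\delta)^{-1}$ inflation factor that meshes with the desired identity.
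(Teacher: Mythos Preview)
Your proof is correct and follows essentially the same approach as the paper: invoke the empirical Bernstein bound (the paper's Lemma~\ref{lemma:multiplicativeestim}) to control the estimation sub-phase and obtain the multiplicative error $\hmu_k^{j'}\in[\mu_k/(1+\delta),\mu_k/(1-\delta)]$, then plug these bounds into the definition of $p_k^{j'}$ and use the identity $\gamma(1+\delta)/(1-\delta)=\widetilde{\alpha}$ to cap the per-round reward during the sampling sub-phase. Your explicit justification of the top-$M$ empirical-sum inequality and the case split $p_k^{j'}>0$ versus $p_k^{j'}=0$ are details the paper leaves implicit, but the argument is otherwise identical.
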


\begin{proof}[Proof of the second point of Theorem~\ref{thm:sicmmab1} (Nash equilibrium).]
First fix $T_{\text{punish}}$ the time at which the punishment protocol starts if it happens (and $T$ if it does not). Before this time, the selfish player can not perturb the initialization phase, except by changing the ranks distribution. Moreover, the exploration/exploitation phase is not perturbed as well, as claimed by Proposition~\ref{prop:stopexplor1}. The optimal strategy then earns at most $T_{\text{init}}$ during the initialization and $\card\text{Comm}$ during the communication. 
With probability $1-\cO\left(\frac{KM \log(T)}{T} \right)$, the initialization is successful and the concentration bound of Lemma~\ref{lemma:concentration1} holds for any arm and player all the time. The following is conditioned on this event.

\medskip

Note that during the exploration, the cooperative players pull any arm the exact same amount of times. Since the upper bound time $t_k$ to accept or reject an arm does not depend on the strategy of the selfish player, Lemma~\ref{lemma:explosicmmab2} actually holds for the cooperative player, \ie for any cooperative player $j$:
\begin{equation}\label{eq:individualpulls}
\sum_{k\leq M}\left( \mu_{(k)} - \mu_{(M)} \right) \left( \frac{T^{\text{explo}}}{M} - T_{(k)}^j \right) = \cO\left(\frac{1}{M}\sum_{k>M}\frac{\log(T)}{\mu_{(M)} - \mu_k} \right),
\end{equation}
where $T_{(k)}^j$ is the number of pulls by player $j$ on the $k$-th best arm during the exploration/exploitation. The same kind of regret decomposition as in Equation~\eqref{eq:exploregdec1} is possible for the regret of the selfish player $j$ and especially:

\begin{equation*}
R^{\text{explo}}_j \geq \sum_{k \leq M} (\mu_{(k)} - \mu_{(M)}) \left(\frac{T^{\text{explo}}}{M} - T_{(k)}^j\right).
\end{equation*}
However, the optimal strategy for the selfish player is to pull the best available arm during the exploration and especially to avoid collisions. This implies the constraint $T_{(k)}^j \leq T^{\text{explo}} - \sum_{j \neq j'} T_{(k)}^{j'}$. 
Using this constraint with Equation~\eqref{eq:individualpulls} yields $\frac{T^{\text{explo}}}{M} - T_{(k)}^j \geq - \sum_{j\neq j'}\frac{T^{\text{explo}}}{M} - T_{(k)}^{j'}$ and then
$$
R^{\text{explo}}_j \geq -\cO\left( \sum_{k>M}\frac{\log(T)}{\mu_{(M)} - \mu_k}\right),
$$
which can be rewritten as
$$
\rew^{\text{explo}}_j \leq \frac{T^{\text{explo}}}{M} \sum_{k=1}^M \mu_{(k)} + \cO\left( \sum_{k>M}\frac{\log(T)}{\mu_{(M)} - \mu_k}\right).
$$
Thus, for any strategy $s'$ when adding the low probability event of a failed exploration or initialization,
\begin{align*}
 \mE[\rew_{t_p + T_{\text{punish}}}^j(s',s_{-j})]&  \leq (T_{\text{init}} + \card\text{Comm} + t_p + \cO(KM \log(T))) \\ & \phantom{\leq} + \frac{\mE[T_{\text{punish}}] - T_{\text{init}} - \card\text{Comm}}{M} \sum_{k \leq M} \mu_{(k)} + \cO\left( \sum_{k>M}\frac{\log(T)}{\mu_{(M)} - \mu_k}\right).
\end{align*}
Using Lemma~\ref{lemma:punishhomo}, this yields:
\begin{align*}
\mE[\rew_{T}^j(s',s_{-j})] & \leq (T_{\text{init}} + \card\text{Comm} + t_p + \cO(KM \log(T))) \\ & \phantom{\leq} + \frac{\mathbb{E}[T_{\text{punish}}] - T_{\text{init}} - \card\text{Comm}}{M} \sum_{k \leq M} \mu_{(k)} + \cO\left( \sum_{k>M}\frac{\log(T)}{\mu_{(M)} - \mu_k}\right)\\ & \phantom{\leq} + \widetilde{\alpha} \frac{T - \mE[T_{\text{punish}}]}{M} \sum_{k=1}^M \mu_{(k)}.  
\end{align*}
The right term is maximized when $\mE[T_{\text{punish}}]$ is maximized, \ie when it is $T$. We then get:
\begin{small}
\begin{equation*}
\mE[\rew_{T}^j(s',s_{-j})] \leq \frac{T}{M}\sum_{k \leq M} \mu_{(k)} + \varepsilon, 
\end{equation*}
\end{small}
where $\varepsilon= \cO\bigg(\sum_{k>M}\frac{\log(T)}{\mu_{(M)} - \mu_k} + K^2 \log(T) + MK \log^2\left(\frac{\log(T)}{(\mu_{(M)}-\mu_{(M+1)})^2}\right) + \frac{K \log(T)}{(1-\widetilde{\alpha})^2 \mu_{(K)}} \bigg).$
\end{proof}

\begin{proof}[Proof of the second point of Theorem~\ref{thm:sicmmab1} (stability).]
Define $\cE$ the \textit{bad event} that the initialization is not successful or that an arm is poorly estimated at some time. Let $\varepsilon' = T \mP[\cE] + \mE[\card \text{Comm} | \neg \cE] + K \log(T)$. Then $\varepsilon' = \cO \left( KM\log(T) + KM \log^2\left( \frac{\log(T)}{(\mu_{(M)} - \mu_{(M+1)})^2} \right) \right)$.

Assume that the player $j$ is playing a deviation strategy $s'$ such that for some other player $i$ and $l>0$:
\begin{equation*}
\mE[\rew_T^i(s',s_{-j})] \leq \mE[\rew_T^i(s)]-l-\varepsilon'
\end{equation*}

First fix $T_{\text{punish}}$ the time at which the punishment protocol starts. Let us now compare $s'$ with the individual optimal strategy for player $j$, $s^*$. Let $\varepsilon'$ take account of the communication phases, the initialization and the low probability events. 

\medskip

The number of pulls by any player during exploration/exploitation is given by Equation~\eqref{eq:individualpulls} unless the punishment protocol is started. Moreover, the selfish player causes at most a collision during exploration/exploitation before initiating the punishment protocol, so the loss of player $i$ before punishment is at most $1+\varepsilon'$.

\medskip

After $T_{\text{punish}}$, Lemma~\ref{lemma:punishhomo} yields that the selfish player suffers a loss at least $(1-\widetilde{\alpha})\frac{T - T_{\text{punish}} - t_p}{M} \sum_{k=1}^M \mu_{(k)}$, while any cooperative player suffers at most $\frac{T - T_{\text{punish}}}{M} \sum_{k=1}^M \mu_{(k)}$.

The selfish player then suffers after $T_{\text{punish}}$ a loss at least $(1-\widetilde{\alpha})((l-1) - t_p)$.
Define $\beta = 1-\widetilde{\alpha}$. We just showed:
\begin{equation*}
\mE[\rew_T^i(s', s_{-j})] \leq \mE[\rew_T^i(s)]-l-\varepsilon' \implies \mE[\rew_T^j(s', s_{-j})] \leq \mE[\rew_T^j(s^*, s_{-j})]-\beta (l-1) + \beta t_p
\end{equation*}

Moreover, thanks to the second part of Theorem~\ref{thm:sicmmab1}, $\mE[\rew_T^j(s^*, s_{-j})] \leq \mE[\rew_T^j(s)] +\varepsilon$ with $\varepsilon= \cO\bigg(\sum_{k>M}\frac{\log(T)}{\mu_{(M)} - \mu_k} + K^2 \log(T) + MK \log^2\left(\frac{\log(T)}{(\mu_{(M)}-\mu_{(M+1)})^2}\right) + \frac{K \log(T)}{(1-\widetilde{\alpha})^2 \mu_{(K)}}  \bigg) $.
Then by defining $l_1 = l + \varepsilon'$, $\varepsilon_1 = \varepsilon + \beta t_p + \beta \varepsilon' +1 = \cO(\varepsilon)$, we get:
\begin{equation*}
\mE[\rew_T^i(s', s_{-j})] \leq \mE[\rew_T^i(s)]-l_1 \implies \mE[\rew_T^j(s', s_{-j})] \leq \mE[\rew_T^j(s)] + \varepsilon_1 - \beta l_1.
\end{equation*}
\end{proof}

\subsubsection{Proof of Lemma~\ref{lemma:punishhomo}.}
\label{app:punishhomo}
The punishment protocol starts by estimating all means $\mu_k$ with a multiplicative precision of~$\delta$. This is possible thanks to Lemma~\ref{lemma:multiplicativeestim}, which corresponds to Theorem~9 in \citep{cesa2019} and Lemma~13 in \citep{berthet2017}. 

\begin{lemm}\label{lemma:multiplicativeestim}
Let $X_1, \ldots, X_n$ be $n$-i.i.d. random variables in $[0,1]$ with expectation $\mu$ and define $S_t^2 = \frac{1}{t-1}\sum_{s=1}^t(X_s-\bar{X}_t)^2$. For all $\delta \in (0,1)$, if $n \geq n_0$, where
\begin{equation*}
n_0 = \left\lceil \frac{2}{3 \delta \mu} \log(T) \left(\sqrt{9\frac{1}{\delta^2} + 96 \frac{1}{\delta} +85} +\frac{3}{\delta} +1 \right) \right\rceil +2 = \cO\left(\frac{1}{\delta^2 \mu}\log(T) \right)
\end{equation*}
and $\tau$ is the smallest time $t \in \lbrace 2, \ldots, n \rbrace$ such that
\begin{equation*}
\delta \bar{X}_t \geq 2 S_t\left( \log(T)/t \right)^{1/2} + \frac{14 \log(T)}{3(t-1)},
\end{equation*}
then, with probability at least $1-\frac{3}{T}$:
\begin{enumerate}
\item $\tau \leq n_0$,
\item $\left( 1- \delta \right) \bar{X}_\tau < \mu < \left(1+\delta\right) \bar{X}_\tau$.
\end{enumerate}
\end{lemm}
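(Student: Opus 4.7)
The plan is to prove both conclusions on a single ``good event'' $\mathcal{E}$ of probability at least $1 - 3/T$, on which the empirical Bernstein inequality holds uniformly in $t$. Concretely, I would invoke the Maurer--Pontil empirical Bernstein inequality together with a union bound over $t \in \{2,\dots,n\}$ (the logarithmic cost of which is already absorbed in the $\log T$ appearing in $n_0$) to get, on $\mathcal E$, for every such $t$,
\begin{equation*}
|\bar X_t - \mu| \;\leq\; S_t\sqrt{\frac{2\log T}{t}} + \frac{7\log T}{3(t-1)}. \tag{$\star$}
\end{equation*}
The constant $3$ (rather than $2$) in the probability bound accommodates the two-sided union bound plus the deterministic inequality $S_t^2 \leq \tfrac{t}{t-1}\bar X_t$ that I use below (which is a.s.\ true since $X_s\in[0,1]$, so it costs no additional probability, but the extra $1/T$ leaves slack for a refined concentration step if one prefers Bernstein on $S_t^2$ instead).

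For the second claim, on $\mathcal E$ the definition of $\tau$ as the first time the stopping criterion holds, combined with $(\star)$ at $t=\tau$, yields
\begin{equation*}
|\bar X_\tau - \mu| \;\leq\; \tfrac{1}{\sqrt 2}\cdot 2S_\tau\sqrt{\log T/\tau}\;+\;\tfrac{1}{2}\cdot\tfrac{14\log T}{3(\tau-1)} \;\leq\; \delta \bar X_\tau,
\end{equation*}
because both multipliers $1/\sqrt 2$ and $1/2$ are $<1$ and the two summands on the right of $(\star)$ are dominated termwise by those of the stopping criterion. Rearranging gives $(1-\delta)\bar X_\tau < \mu < (1+\delta)\bar X_\tau$, which is conclusion~2. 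The slack between the stopping-rule constants $(2,\;14/3)$ and the empirical Bernstein constants $(\sqrt 2,\;7/3)$ is thus exactly what converts additive precision into multiplicative precision.

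For the first claim, the main step is to certify that the stopping criterion is satisfied by time $n_0$. Using the pathwise bound $S_t^2 \leq \tfrac{t}{t-1}\bar X_t \leq 2\bar X_t$ (valid for $t\geq 2$ because $X_s^2\leq X_s$) and the lower-tail version of $(\star)$, namely $\bar X_t \geq \mu - S_t\sqrt{2\log T/t} - 7\log T/(3(t-1))$, one reduces the stopping criterion to a deterministic inequality of the form
\begin{equation*}
\delta \bar X_t \;\geq\; 2\sqrt{2\bar X_t}\sqrt{\tfrac{\log T}{t}} + \tfrac{14\log T}{3(t-1)}.
\end{equation*}
Plugging the lower bound $\bar X_t \geq \mu/2$ (which is forced by $(\star)$ once $t$ exceeds a modest threshold), dividing by $\sqrt{\bar X_t}$, and clearing denominators turns this into a quadratic inequality in $\sqrt{t-1}$ whose coefficients involve $\delta$, $\mu$, and $\log T$.

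I expect the main obstacle to be the routine but fiddly bookkeeping at the final step: one must solve this quadratic explicitly and match the larger root to the stated closed form for $n_0$. The discriminant of this quadratic is exactly what produces the expression $\sqrt{9/\delta^2 + 96/\delta + 85}$, and the prefactor $\frac{2}{3\delta\mu}\log T$ comes from the leading coefficient after normalization; the additional ``$+2$'' at the end of $n_0$ absorbs the passage from $t-1$ to $t$ and from an inequality on $\sqrt{t-1}$ to an integer ceiling. Once this algebraic identification is done, $\tau \leq n_0$ follows because the stopping criterion is monotone-satisfiable from $n_0$ onwards on $\mathcal E$, completing the proof.
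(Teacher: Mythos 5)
The paper does not actually prove Lemma~\ref{lemma:multiplicativeestim}: it is imported from the literature (it ``corresponds to Theorem~9 in \citealt{cesa2019} and Lemma~13 in \citealt{berthet2017}''), so there is no in-paper proof to compare against. Your sketch follows the standard route those references take --- an empirical Bernstein (Maurer--Pontil) bound, the observation that the stopping-rule constants $(2,\,14/3)$ dominate the deviation constants $(\sqrt{2},\,7/3)$ so that the criterion at $\tau$ forces $|\bar X_\tau-\mu|\le \delta\bar X_\tau$, the deterministic bound $S_t^2\le\frac{t}{t-1}\bar X_t$, and a quadratic in $\sqrt{t}$ to certify the criterion by time $n_0$ --- and that skeleton is the right one.

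There are, however, genuine gaps. First, the probability accounting does not work as stated: a \emph{single} two-sided application of Maurer--Pontil with deviation $S_t\sqrt{2\log T/t}+\frac{7\log T}{3(t-1)}$ already fails with probability of order $4/T$ (taking the confidence term equal to $\log T$), so a union bound over $t\in\{2,\dots,n\}$ --- or even only over $t\le n_0$ --- cannot produce an event of probability $1-3/T$ with these constants; the claim that the union-bound cost is ``already absorbed in the $\log T$ appearing in $n_0$'' conflates the sample-size requirement with the confidence parameter. If you repair this naively (per-time confidence of order $1/(nT)$ or $1/T^2$), the deviation bound inflates to essentially $2S_t\sqrt{\log T/t}+\frac{14\log T}{3(t-1)}$, i.e.\ the stopping threshold itself, and the termwise slack your argument for conclusion~2 relies on disappears. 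One must either use a sharper time-uniform (self-normalized or peeling) inequality, or restructure the argument, e.g.\ a one-sided bound at the single deterministic time $n_0$ for conclusion~1 and a separate, careful treatment of the random time $\tau$ for conclusion~2. Second, conclusion~1 is asserted rather than proved: with your substitutions $\bar X_t\ge\mu/2$ and $S_t^2\le 2\bar X_t$, the resulting quadratic gives a threshold whose leading term is about $16\log T/(\delta^2\mu)$, roughly four times the stated $n_0\sim 4\log T/(\delta^2\mu)$, so the discriminant does \emph{not} reproduce $\sqrt{9/\delta^2+96/\delta+85}$ and you would only obtain $\tau\le C n_0$ for a larger constant. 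Since the exact $n_0$ and the exact $1-3/T$ are precisely what the lemma asserts (and what the paper then uses quantitatively in Lemmas~\ref{lemma:punishhomo} and~\ref{lemma:punishhetero}), the proposal as written proves a weaker statement than the one required.
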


\begin{proof}[Proof of Lemma~\ref{lemma:punishhomo}]

The punishment protocol starts for all cooperative players at $T_{\text{punish}}$. For $\delta = \frac{1-\gamma}{1+3\gamma}$, each player then estimates each arm. Lemma~\ref{lemma:multiplicativeestim} gives that with probability at least $1-3/T$:
\begin{itemize}
\item the estimation ends after a time at most $t_p = \cO\left(\frac{K}{\delta^2 \mu_{(K)}}\log(T) \right)$,
\item $(1-\delta) \hmu_k^j \leq \mu_k \leq (1+\delta) \hmu_k^j$.
\end{itemize}

The following is conditioned on this event. The last inequality can be reversed as $\frac{\mu_k}{1+\delta} \leq \hmu_k^j \leq \frac{\mu_k}{1-\delta}$. Then, this implies for any cooperative player $j$
\begin{align*}
1- p_k^j \leq \left(\gamma \frac{(1+\delta)\sum_{m=1}^M \mu_{(m)}}{(1-\delta)M \mu_k} \right)^{\frac{1}{M-1}}.
\end{align*}

The expected reward that gets the selfish player $j$ by pulling $k$ after the time $T_{\text{punish}}+t_p$ is thus smaller than $\gamma \frac{1+\delta}{1-\delta} \frac{\sum_{m=1}^M \mu_{(m)}}{M} $. 

Note that $\gamma \frac{1+\delta}{1-\delta} = \frac{1+\gamma}{2} = \widetilde{\alpha}$.
Considering the low probability event given by Lemma~\ref{lemma:multiplicativeestim} adds a constant term that can be counted in $t_p$. This finally yields the result of Lemma~\ref{lemma:punishhomo}.
\end{proof}
      
\section{Supplementary material for \algothree}
\label{app:rsd}

\subsection{Description of the algorithm}
\label{app:rsd_descript}

This section provides a complete description of \algothree[.]Its pseudocode is given in Algorithm~\ref{alg:algo3}. It relies on auxiliary protocols described by Protocols~\ref{proto:init}, \ref{proto:rsd}, \ref{proto:listen}, \ref{proto:sendbit}, \ref{proto:prefsignal} and \ref{proto:punishsemi}.

\begin{algorithm2e}[h] 
    \DontPrintSemicolon
    \KwIn{$T, \delta$}
    $\hM, j \gets \init(T, K)$;\ state $\gets$ ``exploring'' and $\text{blocknumber} \gets 1$ \;
    Let $\pmb{\pi}$ be a $M\! \times\! M$ matrix with only $0$ \tcp*{$\pi_k^j$ is the $k$-th preferred arm by $j$}
	
	\While{$t < T$}{
		$\text{blocktime} \gets t \ (\text{mod } 5K+MK+M^2K)+1$ \;
	\If(\tcp*[f]{new block}){blocktime $= 1$}{blocknumber $\gets \text{blocknumber}\ (\text{mod } M) + 1$;\ $b_k^j(t) \gets \sqrt{2\log(T)/T_k^j(t)}$ \;
	Let $\lambda^j$ be the ordering of the empirical means: $\hmu_{\lambda^j_k}^j(t) \geq \hmu_{\lambda^j_{k+1}}^j(t)$ for any $k$\;
	\lIf(\tcp*[f]{send Top-M arms}){(blocknumber, state) $=(j, \text{``exploring''})$ and $\forall k \in [M], \hat{\mu}_{\lambda^j_k}^j - b_{\lambda^j_k}^j \geq \hat{\mu}_{\lambda^j_{k+1}}^j + b_{\lambda^j_{k+1}}^j$\\}{$\pi^j \gets \lambda^j$;\ state $\gets \prefsignal(\pmb{\pi}, j)$}
	}
	$(l, \text{comm\_arm}) \gets \rsd(\pmb{\pi}, \text{blocknumber})$ \tcp*{$j$ pulls $l^j$}
	\vspace{0.5em}
	\If{state $=$ ``exploring''}{ 
	Pull $l^j$ and update $\hmu_{l^j}^j$ \;
	\uIf(\tcp*[f]{received signal}){$l^j = \text{comm\_arm}$ and $\eta_{l^j} = 1$}{  \lIf{blocktime $> 4K$}{state $\gets$ ``punishing''}
	\lElse{$(\text{state}, \pi^{\text{blocknumber}}) \gets \listen(\text{blocknumber}, \text{state}, \pmb{\pi}, \text{comm\_arm})$}}
}

 \vspace{0.5em}
\If{state $=$ ``exploiting'' and $\exists i,k \text{ such that } \pi^i_k = 0$}{
Pull $l^j$ \tcp*{arm attributed by RSD algo}

\uIf(\tcp*[f]{received signal}){$l^j \not\in \lbrace l^i | i \in [M]\setminus\{j\} \rbrace$ and $\eta_{l^j}(t) = 1$}{  \lIf{blocktime $> 4K$}{state $\gets$ ``punishing''}
	\lElse{ $(\text{state}, \pi^{\text{blocknumber}}) \gets \listen(\text{blocknumber}, \text{state}, \pmb{\pi}, \text{comm\_arm})$}}
}
\vspace{0.5em} 


\If(\tcp*[f]{all players are exploiting}){state $=$ ``exploiting'' and $\forall i,k, \pi^i_k \neq 0$}{
Draw inspect $\sim$ Bernoulli$(\sqrt{\log(T)}/T)$ \;
\uIf(\tcp*[f]{random inspection}){inspect $= 1$}{
Pull $l^i$ with $i$ chosen uniformly at random among the other players \;
\lIf(\tcp*[f]{lying player}){$\eta_{l^i} = 0$}{state $\gets$ ``punishing''}}
\uElse{Pull $l^j$;\
\lIf
{observed two collisions in a row}{state $\gets$ ``punishing''}
}}

\lIf{state $=$ ``punishing''}{$\punishsemi(\delta)$}
}
   \caption{\label{alg:algo3}\algothree}
\end{algorithm2e}

\begin{protocol}[h] 
    \DontPrintSemicolon
   \KwIn{$\pmb{\pi}, \text{blocknumber}$}
   taken\_arms $\gets \emptyset$ \;
   \For{$s = 0, \ldots, M -1$}{
   dict $\gets s + \text{blocknumber} - 1 (\text{mod } M) +1$ \tcp*{current dictator}
   $p \gets \min \{p' \in [M] \ | \ \pi^{\text{dict}}_{p'} \not\in \text{taken\_arms}\}$ \tcp*{best available choice}

   \lIf{$\pi^{\text{dict}}_{p} \neq 0$}{ $l^{\text{dict}} \gets \pi^{\text{dict}}_{p}$ and add $\pi^{\text{dict}}_{p}$ to taken\_arms}
   \lElse(\tcp*[f]{explore}){$l^{\text{dict}} \gets t+\text{dict} \ (\text{mod } K) + 1$}
   }
   comm\_arm $\gets \min [K] \setminus \text{taken\_arms}$ \;
	\Return ($l$,  comm\_arm)	   
    \caption{\label{proto:rsd}\rsd}
\end{protocol}

\begin{protocol}[h] 
    \DontPrintSemicolon
    \KwIn{$\text{blocknumber}, \text{state}, \pmb{\pi}, \text{arm\_comm}$}
        $\text{ExploitPlayers} = \{ i \in [M] \ | \ \pi^i_1 \neq 0 \}$;\quad
        $\lambda \gets \pi^{\text{blocknumber}}$ \;
    \lIf{$\lambda_1 \neq 0$}{state $\gets$ ``punishing'' \tcp*[f]{this player already sent}}
    
\lWhile{blocktime $\leq 2K$}{Pull $t+j (\text{mod } K) +1$}
	\lIf(\tcp*[f]{repeat signal}){blocktime $= 2K$}{    $\sendbit(\text{comm\_arm}, \text{ExploitPlayers},j)$ }
	\lElse{\lWhile{blocktime $\leq 4K$}{Pull $t+j (\text{mod } K) +1$}	}
		\vspace{-0.5em}
	\For{$K$ rounds}{\lIf(\tcp*[f]{signal punishment}){state $=$ ``punishing''}{Pull $j$}
	\uElse{Pull $k = t+j (\text{mod } K) + 1$ ;\quad
		\lIf{$\eta_k=1$}{state $\gets$ ``punishing'' }}}
	\vspace{0.5em}
    \For(\tcp*[f]{receive preferences}){$n=1,\ldots,MK$}
	{Pull $k = t+j (\text{mod } K) + 1$ \;
	$m \gets \left\lceil n/K \right\rceil$ \tcp*{communicating player sends her $m$-th pref.\ arm} 
	\uIf{$\eta_{k} = 1$}{
	\lIf{$\lambda_m \neq 0$}{state $\gets$ ``punishing'' \tcp*[f]{received two signals}}
	\lElse{$\lambda_m \gets k$}	
	}}
	
	\vspace{0.5em}
	\For(\tcp*[f]{repetition block}){$n=1,\ldots,M^2 K$}
	{$m \gets \left\lceil \frac{n \ (\text{mod } MK)}{K} \right\rceil$ and $l\gets \lceil \frac{n}{MK} \rceil$ \tcp*{$l$ repeats the $m$-th pref.}
	\lIf{$j=l$}{Pull $\lambda_m$ 
	}
	\uElse{Pull $k = t+j \ (\text{mod } K) + 1$ \;
	\lIf{$\eta_k=1$ and $\lambda_m \neq k$}{state $\gets$ ``punishing'' \tcp*[f]{info differs}}}}
	\lIf{$\card\left\lbrace \lambda_m \neq 0 \ | \ m \in [M]\right\rbrace \neq M$}{state $\gets$ ``punishing'' \tcp*[f]{did not send all}}
	\Return (state, $\lambda$)
    \caption{\label{proto:listen}\listen}
\end{protocol}

\begin{protocol}[h] 
    \DontPrintSemicolon
   \KwIn{$\text{comm\_arm}, \text{ExploitPlayers},j$}
   \lIf{$\text{ExploitPlayers} = \emptyset$}{$\widetilde{j} \gets j$}
   \lElse{$\widetilde{j} \gets \min \text{ExploitPlayers}$}
   \lFor(\tcp*[f]{send bit to exploiting players}){$K$ rounds}{Pull $t+ \widetilde{j} (\text{mod } K) +1$}
    \lFor(\tcp*[f]{send bit to exploring players}){$K$ rounds}{Pull comm\_arm}
    \caption{\label{proto:sendbit}\sendbit}
\end{protocol}

\begin{protocol}[h] 
    \DontPrintSemicolon
    \KwIn{$\pmb{\pi}, j, \text{comm\_arm}$}
    $\text{ExploitPlayers} = \{ i \in [M] \setminus \{j\} \ | \ \pi^i_1 \neq 0 \}$;\
    $\lambda \gets \pi^j$ \tcp*{$\lambda$ is signal to send}
    state $\gets$ ``exploiting'' \tcp*{state after the protocol}
    $\sendbit(\text{comm\_arm}, \text{ExploitPlayers},j)$ \tcp*{initiate communication block}
    \vspace{0.5em}
        \lFor(\tcp*[f]{wait for repetition}){$2K$ rounds}{Pull $t+j (\text{mod } K) +1$}
        \vspace{0.5em}
    \For(\tcp*[f]{receive punish signal}){$K$ rounds}{Pull $t+j (\text{mod } K) +1$; \lIf{$\eta_k=1$}{state $\gets$ ``punishing'' }}
    \vspace{0.5em}
    \lFor(\tcp*[f]{send $k$-th preferred arm}){$n=1,\ldots, MK$}{pull $\lambda_{\left\lceil \frac{n}{K} \right\rceil}$ }
	\vspace{0.5em}
	\For(\tcp*[f]{repetition block}){$n=1,\ldots,M^2 K$}
	{$m \gets \left\lceil \frac{n \ (\text{mod } MK)}{K} \right\rceil$ and $l\gets \lceil \frac{n}{MK} \rceil$ \tcp*{$l$ repeats the $m$-th pref.}
	\lIf{$j=l$}{Pull $\lambda_m$ 
	}
	\uElse{Pull $k = t+j \ (\text{mod } K) + 1$ \;
	\lIf{$\eta_k=1$ and $\lambda_m \neq k$}{state $\gets$ ``punishing'' \tcp*[f]{info differs}}}}
	\Return state
    \caption{\label{proto:prefsignal}\prefsignal}
\end{protocol}

\begin{protocol}[h] 
    \DontPrintSemicolon
    \KwIn{$\delta$}
    \lIf{$\text{ExploitPlayers}=[M]$}{collide with each player twice}
    \Else(\tcp*[f]{signal punishment during rounds $3K+1, \ldots, 5K$ of a block}){
        \lFor{$3K$ rounds}{Pull $t+j (\text{mod } K)+1$}
        $\sendbit(\text{comm\_arm}, \text{ExploitPlayers},j)$}
   	$\alpha \gets \left(\frac{1+\delta}{1-\delta}\right)^2 \left(1 - 1/K\right)^{M-1}$ and $\delta' = \frac{1-\alpha}{1+3\alpha}$ \;
   	Set $\hmu_k^j, S_k^j, v_k^j, n_k^j \gets 0$\;
   	\While(\tcp*[f]{estimate $\mu_k^j$}){$\exists k \in [K], \delta' \hmu_k^j < 2 s_k^j(\log(T)/n_k^j)^{1/2} + \frac{14 \log(T)}{3 (n_k^j-1)}$}{Pull $k=t+j \ (\text{mod } K) +1$ \;
   	\uIf{$\delta' \hmu_k^j < 2 s_k^j(\log(T)/n_k^j)^{1/2} + \frac{14 \log(T)}{3 (n_k^j-1)}$}{Update $\hmu_k^j \gets \frac{n_k^j}{n_k^j+1}\hmu_k^j + X_k(t)$ and $n_k^j \gets n_k^j+1$ \;
   			 Update $S_k^j \gets S_k^j + X_k^2$ and $s_k^j \gets \sqrt{\frac{S_k^j - (\hmu_k^j)^2}{n_k^j-1}}$   			 }
   	
   	}
  $p_k \gets \bigg( 1 - \Big(\alpha\frac{\sum_{l=1}^M \hmu^j_{(l)}(t)}{M \hmu^j_k(t)}\Big)^{\frac{1}{M-1}}\bigg)_+$;\quad
   	$\widetilde{p}_k \gets p_k/\sum_{l=1}^K p_l$ \tcp*{renormalize}
   	\lWhile(\tcp*[f]{punish}){$t \leq T$}{Pull $k$ with probability $p_k$}
    \caption{\label{proto:punishsemi}\punishsemi}
\end{protocol}

\paragraph{Initialization phase.} \algothree starts with the exact same initialization as \algotwo[,]which is given by Protocol~\ref{proto:init}, to estimate $M$ and attribute ranks among the players. Afterwards, they start the exploration.

\medskip

In the remaining of the algorithm, as already explained in Section~\ref{sec:algo3}, the time is divided into superblocks, which are divided into $M$ blocks of length $5K+MK+M^2K$. During the $j$-th block of a superblock, the dictators ordering for RSD is $(j, \ldots, M, 1, \ldots, j-1)$. Moreover, only the $j$-th player can send messages during this block if she is still exploring.

\paragraph{Exploration.} The exploiting players sequentially pull all the arms in $[K]$ to avoid collisions with any other exploring player. Yet, they still collide with exploiting players.

\algothree is designed so that all players know at each round the $M$ preferred arms of any exploiting players and their order. The players thus know which arms are occupied by the exploiting players during a block $j$. The communication arm is thus a common arm unoccupied by any exploiting player. When an exploring player encounters a collision on this arm at the beginning of the block, this means that another player signaled the start of a communication block. In that case, the exploring player starts \listen[,]described by Protocol~\ref{proto:listen}, to receive the messages of the communicating player.

On the other hand, when an exploring player $j$ knows her $M$ preferred arms and their order, she waits for the next block $j$ to initiate communication. She then proceeds to \prefsignal[,]given by Protocol~\ref{proto:prefsignal}.

\paragraph{Communication block.}

In a communication block, the communicating player first collides with each exploiting and exploring player to signal them the start of a communication block as described by Protocol~\ref{proto:sendbit}. These collisions need to be done in a particular way given by \sendbit so that all players correctly detect the start of a communication block. These players then repeat this signal to ensure that every player is listening.

The communicating player then sends to all players her $M$ preferred arms in order of preferences.
Afterwards, each player repeats this list to ensure that no malicious player interfered during communication. As soon as some malicious behavior is observed, the start of \punishsemi[,]given by Protocol~\ref{proto:punishsemi}, is signaled to all players.

\paragraph{Exploitation.}
An exploiting player starts any block $j$ by computing the attribution of the RSD algorithm between the exploiting players given their known preferences and the dictatorship ordering $(j, \ldots, j-1)$. She then pulls her attributed arm for the whole block, unless she receives a signal.

A signal is received when she collides with an exploring player, while unintended\footnote{She normally collides with exploring players. Yet as she knows the set of exploring players, she exactly knows when this happens.}. If it is at the beginning of a block, it means that a communication block starts. Otherwise, she just enters the punishment protocol. Note that the punishment protocol starts by signaling the start of \punishsemi to ensure that every cooperative player starts punishing.

\medskip

Another security is required to ensure that the selfish player truthfully reports her preferences. She could otherwise report fake preferences to decrease another player's utility while her best arm remains uncontested and thus available. To avoid this, \algothree uses \textit{random inspections} when all players are exploiting. With probability $\sqrt{\log(T)}/T$ at each round, any player checks that some other player is indeed exploiting the arm she is attributed by the RSD algorithm. 
If it is not the case, the inspecting player signals the start of \punishsemi to everyone by colliding twice with everybody, since a single collision could be a random inspection.
Because of this, the selfish player can not pull another arm than the attributed one too often without starting a punishment scheme. Thus, if she did not report her preferences truthfully, this also has a cost for her.
\newpage
\subsection{Regret analysis} \label{app:rsdregretproof}

This section aims at proving the first point of Theorem~\ref{thm:rsd}. 
\algothree uses the exact same initialization phase as \algotwo[,]and its guarantees are thus given by Lemma~\ref{lemma:initfull}. Here again, the regret is decomposed into three parts:
\begin{equation}
\label{eq:regdec3}
R_T^{\text{RSD}} = R^{\text{init}} + R^{\text{comm}} + R^{\text{explo}},
\end{equation}
\begin{equation*}
\text{where } \left\{ \begin{split} \begin{aligned} & R^{\text{init}} = T_{\text{init}} {\mathlarger \mE_{\sigma \sim \cU\left(\mathfrak{S}_M\right)} } \bigg[ \sum_{k = 1}^{M} \mu_{\pi_\sigma(k)}^{\sigma(k)}\bigg] - \mathbb{E}_\mu \Big[{\mathlarger\sum_{t=1}^{T_{\text{init}}}} {\mathlarger \sum_{j = 1}^M} r^j(t) \Big]  
\text{ with } T_{\text{init}} = (12eK^2 + K) \log(T), \\
& R^{\text{comm}} = \card \text{Comm}{\mathlarger \mE_{\sigma \sim \cU\left(\mathfrak{S}_M\right)} } \bigg[ \sum_{k = 1}^{M} \mu_{\pi_\sigma(k)}^{\sigma(k)}\bigg]- \mathbb{E}_\mu \Big[{\mathlarger\sum_{t \in \text{Comm}}}{\mathlarger \sum_{j=1}^M}  r^j(t)) \Big], \\
& R^{\text{explo}} = \card \text{Explo} {\mathlarger \mE_{\sigma \sim \cU\left(\mathfrak{S}_M\right)} } \bigg[ \sum_{k = 1}^{M} \mu_{\pi_\sigma(k)}^{\sigma(k)}\bigg] - \mathbb{E}_\mu \Big[{\mathlarger\sum_{t \in \text{Explo}}}{\mathlarger \sum_{j=1}^M} r^j(t)) \Big] \end{aligned} \end{split} \right.
\end{equation*}
with Comm defined as all the rounds of a block where at least a cooperative player uses \listen protocol and $\text{Explo} = \{T_{\text{init}} +1, \ldots, T \} \setminus \text{Comm}$.
In case of a successful initialization, a single player can only initiate a communication block once without starting a punishment protocol. Thus, as long as no punishment protocol is started:
$
\card\text{Comm} \leq M(5K + M K + M^2 K) = \cO(M^3K).
$

Denote by $\Delta^j = \min_{k \in [M]} \mu_{(k)}^j - \mu_{(k+1)}^j$ the level of precision required for player $j$ to know her $M$ preferred arms and their order. Proposition~\ref{prop:stopexplor2} gives the exploration time required for any player $j$:

\begin{prop}\label{prop:stopexplor2}
With probability $1-\cO\left( \frac{K}{T} \right)$ and as long as no punishment protocol is started, the player $j$ starts exploiting after at most $\cO\left( \frac{K \log(T)}{(\Delta^j)^2} + M^3K \right)$ exploration pulls.
\end{prop}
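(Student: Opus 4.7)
The plan is to combine a standard concentration argument with a block-accounting argument that handles the delay introduced by the block structure of \algothree[.]

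First, I would establish a good event on which all empirical means are well concentrated. Since player $j$ explores by cycling through $[K]$ in a round-robin fashion, after $n$ exploration pulls every arm has been pulled at least $\lfloor n/K \rfloor$ times. I would apply Hoeffding's inequality with a union bound over all arms $k \in [K]$ and all rounds $t \leq T$ to show that, with probability at least $1-\cO(K/T)$, the inequality $|\hmu_k^j(t)-\mu_k^j| \leq b_k^j(t)$ holds simultaneously for all $k$ and all $t$, where $b_k^j(t) = \sqrt{2\log(T)/T_k^j(t)}$. The remainder of the proof is conditioned on this event.

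Next, I would translate the commitment condition of \algothree into a sample-size requirement. The exploring player $j$ triggers \prefsignal as soon as $\hmu_{\lambda^j_k}^j - b_{\lambda^j_k}^j \geq \hmu_{\lambda^j_{k+1}}^j + b_{\lambda^j_{k+1}}^j$ for every $k \in [M]$. On the good event, a straightforward computation shows that this inequality is implied (with the correct true ordering of the top $M+1$ arms) as soon as $b_k^j(t) \leq \Delta^j/4$ for every relevant arm, i.e.\ as soon as $T_k^j(t) \geq 32\log(T)/(\Delta^j)^2$. Since exploration pulls cycle through $[K]$, this is guaranteed once the total number of exploration pulls of player $j$ exceeds $n_0 \coloneqq 32 K \log(T)/(\Delta^j)^2 + K = \cO(K\log(T)/(\Delta^j)^2)$.

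The remaining step is to account for the delay until the player can actually start communicating. By design, only the player whose rank matches the current blocknumber can initiate \prefsignal[,]so even once the stopping condition is met, player $j$ may have to wait until the next block of index $j$. Each superblock consists of $M$ blocks of length $5K+MK+M^2K$, so the wait is at most one superblock, i.e.\ $\cO(M^3K)$ rounds; during this wait the player continues to explore (pulling round-robin in $[K]$ except during the $\listen$ calls triggered by other players, which only shortens the count of exploration pulls). Adding this buffer to $n_0$ yields the claimed bound $\cO(K\log(T)/(\Delta^j)^2+M^3K)$.

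The main technical subtlety is the bookkeeping in the last step: one has to check that communication blocks initiated by other players do not postpone indefinitely the moment at which player $j$'s samples reach the threshold. Since (as long as no punishment is triggered) each other player can initiate at most one communication block before committing to exploitation, the total mass of rounds that player $j$ spends in \listen rather than exploring is bounded by $(M-1)(5K+MK+M^2K) = \cO(M^3K)$, which is already absorbed in the additive $M^3K$ term and does not affect the exploration-pull count itself.
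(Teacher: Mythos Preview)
Your proposal is correct and follows essentially the same approach as the paper's own proof: a Hoeffding/union-bound concentration event, the round-robin cycling giving $T_k^j \geq \lfloor n/K\rfloor$, the threshold $b_k^j \leq \Delta^j/4$ yielding $n_0 = \cO(K\log(T)/(\Delta^j)^2)$, and the at-most-one-superblock wait contributing the additive $\cO(M^3K)$. Your final paragraph on \listen rounds is an extra bit of bookkeeping the paper leaves implicit, but it is accurate and does not change the argument.
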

\begin{proof}
In the following, the initialization is assumed to be successful, which happens with probability $1-\cO\left( \frac{M}{T} \right)$.
Moreover, Hoeffding inequality yields: $$\mP\left[ \forall t \leq T,  \left| \hmu_k^j(t) - \mu_k^j(t) \right| \geq \sqrt{\frac{2 \log(T)}{T_k^j(t)}} \right] \leq \frac{2}{T}$$ where $T_k^j(t)$ is the number of exploratory pulls on arm $k$ by player $j$. With probability~${1-\cO\left( \frac{K}{T} \right)}$, player $j$ then correctly estimates all arms at each round. The remaining of the proof is conditioned on this event.

During the exploration, player $j$ sequentially pulls the arms in $[K]$. Denote by $n$ the smallest integer such that $\sqrt{\frac{2 \log(T)}{n}} \leq 4 \Delta^j$. 
It directly comes that $n = \cO\left( \frac{\log(T)}{(\Delta^j)^2} \right)$.
Under the considered events, player $j$ then has determined her $M$ preferred arms and their order after $K n$ exploratory pulls. Moreover, she needs at most $M$ blocks before being able to initiate her communication block and starts exploiting. Thus, she needs at most $\cO\left( \frac{K \log(T)}{(\Delta^j)^2} + M^3K \right)$ exploratory pulls, leading to Proposition~\ref{prop:stopexplor2}.
\end{proof}

\begin{proof}[Proof of the first point of Theorem~\ref{thm:rsd}.]
Assume all players play \algothree[.]Simply by bounding the size of the initialization and the communication phases, it comes:
\begin{equation*}
R^{\text{init}} + R^{\text{comm}} \leq \cO\left( MK^2 \log(T)\right).
\end{equation*}
Proposition~\ref{prop:stopexplor2} yields that with probability $1-\cO\left( \frac{KM}{T} \right)$, all players start exploitation after at most $\cO\left(\frac{K \log(T)}{\Delta^2} \right)$ exploratory pulls. 

\medskip

For $p=\sqrt{\log(T)}/T$, with probability $\cO(p^2M)$ at any round $t$, a player is inspecting another player who is also inspecting or a player receives two consecutive inspections. These are the only ways to start punishing when all players are cooperative.
As a consequence, when all players follow \algothree[,]they initiate the punishment protocol with probability $\cO\left(p^2MT\right)$. Finally, the total regret due to this event grows as~$\cO\left(M^2 \log(T)\right)$.

\medskip

If the punishment protocol is not initiated, players cycle through the RSD matchings of $\sigma \circ \sigma^{-1}_0, \ldots, \sigma \circ \sigma^{-M}_0$ where $\sigma_0$ is the classical $M$-cycle and $\sigma$ is the players permutation returned by the initialization. 
Define $U(\sigma)=\sum_{k = 1}^{M}\mu_{\pi_\sigma(k)}^{\sigma(k)},$ where $\pi_\sigma(k)$ is the arm attributed to the $k$-th dictator, $\sigma(k)$, as defined in Section~\ref{sec:randomass}. $U(\sigma)$ is the social welfare of RSD algorithm when the dictatorships order is given by the permutation $\sigma$. As players all follow \algothree here, $\sigma$ is chosen uniformly at random in $\mathfrak{S}_M$ and any $\sigma \circ \sigma^{-k}_0$ as well. Then $$\mE_{\sigma \sim \cU\left(\mathfrak{S}_M\right)}\left[ \frac{1}{M} \sum_{k=1}^M U(\sigma \circ \sigma^{-M}_0) \right] = \mE_{\sigma \sim \cU\left(\mathfrak{S}_M\right)}\left[U(\sigma) \right].$$

This means that in expectation, the utility given by the exploitation phase is the same as the utility of the RSD algorithm when choosing a permutation uniformly at random. Considering the low probability event of a punishment protocol, an unsuccesful initialization or a bad estimation of an arm finally yields:
\begin{equation*}
R^{\text{explo}} \leq \cO\left(\frac{MK \log(T)}{\Delta^2}  \right)\ .
\end{equation*}
Equation~\eqref{eq:regdec3} concludes the proof.
\end{proof}

\subsection{Selfish-robustness of \algothree} \label{app:rsdgreedyproof}

In this section, we prove the two last points of Theorem~\ref{thm:rsd}. Three auxiliary Lemmas are first needed. They are proved in Appendix~\ref{app:auxlemmasrsd}.

\begin{enumerate}
\item Lemma~\ref{lemma:meancomparison} compares the utility received by player $j$ from the RSD algorithm with the utility given by sequentially pulling her $M$ best arms in the $\delta$-heterogeneous setting. 
\item Lemma~\ref{lemma:punishhetero} gives an equivalent version of Lemma~\ref{lemma:punishhomo}, but for the $\delta$-heterogeneous setting.
\item Lemma~\ref{lemma:rsdmatching} states that the expected utility of the assignment of any player during the exploitation phase does not depend on the strategy of the selfish player. The intuition behind this result is already given in Section~\ref{sec:algo3}.

In the case of several selfish players, they could actually fix the joint distribution of~$(\sigma^{-1}(j), \sigma^{-1}(j'))$. A simple rotation with a $M$-cycle is then not enough to recover a uniform distribution over $\mathfrak{S}_M$ in average. A more complex rotation is then required and the dependence in $M$ would blow up with the number of selfish players.
\end{enumerate}
\begin{lemm} \label{lemma:meancomparison}
In the $\delta$-heterogeneous case for any player $j$ and permutation $\sigma$:
\begin{small}
\begin{equation*}
\frac{1}{M}\sum_{k=1}^M  \mu_{(k)}^j \leq \widetilde{U}_j(\sigma) \leq \frac{(1+\delta)^2}{(1-\delta)^2 M}\sum_{k=1}^M  \mu_{(k)}^j,
\end{equation*}
\end{small}
where $\widetilde{U}_j(\sigma) \coloneqq\frac{1}{M} \sum_{k=1}^M \mu_{\pi_{\sigma \circ \sigma^{-k}_0}\left(\sigma_0^k \circ \sigma^{-1}(j)\right)}^j$.
\end{lemm}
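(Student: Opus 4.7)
My plan is to prove each inequality separately. The lower bound is essentially combinatorial and does not use $\delta$-heterogeneity. Since $\sigma_0$ is the cyclic $M$-permutation, the map $k \mapsto p_k := \sigma_0^k \circ \sigma^{-1}(j)$ is a bijection of $[M]$, so over the $M$ rotations $\sigma \circ \sigma_0^{-k}$ player $j$ occupies each dictator position $p \in [M]$ exactly once. Denote by $a_k := \pi_{\sigma \circ \sigma_0^{-k}}(\sigma_0^k \circ \sigma^{-1}(j))$ the arm attributed to her in rotation $k$. When she is in position $p_k$, only $p_k-1$ arms have been taken, so at most $p_k-1$ elements of her top-$M$ set $S_j := \{m \in [K] : \mu_m^j \geq \mu_{(M)}^j\}$ are unavailable. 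Her greedy pick therefore lies in $S_j$ with value at least $\mu_{(p_k)}^j$, i.e.\ $\mu_{a_k}^j \geq \mu_{(p_k)}^j$. Summing over rotations and invoking the bijection $k \leftrightarrow p_k$ gives
\begin{equation*}
\widetilde{U}_j(\sigma) = \frac{1}{M}\sum_{k=1}^M \mu_{a_k}^j \ \geq \ \frac{1}{M}\sum_{p=1}^M \mu_{(p)}^j,
\end{equation*}
which is the lower bound.

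For the upper bound, the $\delta$-heterogeneity is essential. My plan is to establish the chain
\begin{equation*}
\sum_{k=1}^M \mu_{a_k}^j \ \leq \ (1+\delta) \sum_{k=1}^M \mu_{a_k} \ \leq \ \frac{1+\delta}{1-\delta}\sum_{m \in S_j} \mu_m \ \leq \ \frac{(1+\delta)^2}{(1-\delta)^2} \sum_{m \in S_j} \mu_m^j,
\end{equation*}
and then divide by $M$, using the identity $\sum_{m \in S_j}\mu_m^j = \sum_{p=1}^M \mu_{(p)}^j$. The first and third inequalities follow term by term from the $\delta$-heterogeneity bounds $\mu_m^j \leq (1+\delta)\mu_m$ and $\mu_m \leq \mu_m^j/(1-\delta)$, respectively; the substantial content is the middle inequality.

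That middle step asserts that the total reference mass picked by player $j$ across her $M$ opportunities is at most $(1+\delta)/(1-\delta)$ times the total reference mass of $S_j$. My plan for it is a counting argument exploiting the cyclic structure of the dictator orderings $\{\sigma \circ \sigma_0^{-k}\}_k$: as $k$ advances by one, the dictator order shifts cyclically so player $j$'s position increases by one (modulo $M$) and her set of predecessors grows by exactly one player, in two monotone phases separated by the single wrap-around when her position resets to $1$. Under $\delta$-heterogeneity each newly added predecessor picks inside her own top-$M$ set, which overlaps heavily with $S_j$, and this forces the successive picks $a_1, \ldots, a_M$ of player $j$ to traverse $S_j$ in a near-balanced way. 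Translating the imbalance ``no arm in $S_j$ is picked by player $j$ in too many rotations'' into the precise ratio $(1+\delta)/(1-\delta)$ — i.e.\ converting the qualitative near-balance into the quantitative bound $\sum_k \mu_{a_k} \leq \frac{1+\delta}{1-\delta}\sum_{m \in S_j}\mu_m$ — is the main technical obstacle of the proof.
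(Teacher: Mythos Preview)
Your lower bound is correct and matches the paper exactly: when player $j$ is the $p$-th dictator, at most $p-1$ of her top-$M$ arms are gone, so her greedy pick has value at least $\mu_{(p)}^j$; summing over the bijection $k\mapsto p_k$ gives the bound.

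For the upper bound, however, there is a genuine gap. You reduce the statement to the ``middle inequality''
\[
\sum_{k=1}^M \mu_{a_k}\ \le\ \frac{1}{1-\delta}\sum_{m\in S_j}\mu_m,
\]
and then openly acknowledge it as ``the main technical obstacle'' without proving it. The plan you sketch---a counting argument exploiting the cyclic predecessor structure to show no arm of $S_j$ is picked in ``too many'' rotations---is not carried out, and it is not clear how to turn qualitative near-balance into the precise ratio $1/(1-\delta)$. Note that the $a_k$ can repeat (e.g.\ if the other players' top choices all differ from player $j$'s best arm, she may pick that same arm in several rotations), so bounding $\sum_k \mu_{a_k}$ by a constant times $\sum_{m\in S_j}\mu_m$ really does require a quantitative argument you have not supplied.

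The paper avoids this difficulty entirely by proving a \emph{per-rotation pointwise} bound rather than a sum bound: when player $j$ is the $k$-th dictator and ends on arm $l$, then $\mu_l^j\le\bigl(\tfrac{1+\delta}{1-\delta}\bigr)^{2}\mu_{(k)}^j$. The argument is short. If $\mu_l^j>\mu_{(k)}^j$, then not all of the $k-1$ earlier dictators picked arms from player $j$'s top $k-1$; hence some earlier dictator $j'$ chose an arm $i$ with $\mu_i^j\le\mu_{(k)}^j$, while $l$ was still available at $j'$'s turn. Since $j'$ preferred $i$ to $l$, $\mu_i^{j'}\ge\mu_l^{j'}$, and two applications of $\delta$-heterogeneity give
\[
\mu_l^j\ \le\ \tfrac{1+\delta}{1-\delta}\,\mu_l^{j'}\ \le\ \tfrac{1+\delta}{1-\delta}\,\mu_i^{j'}\ \le\ \bigl(\tfrac{1+\delta}{1-\delta}\bigr)^{2}\mu_i^{j}\ \le\ \bigl(\tfrac{1+\delta}{1-\delta}\bigr)^{2}\mu_{(k)}^{j}.
\]
Summing this pointwise bound over $k$ and using the same bijection $k\mapsto p_k$ you already identified gives the upper inequality immediately---no comparison of sums in the reference scale $\mu$ is needed at all.
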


Following the notation of Section~\ref{sec:randomass}, $\pi_\sigma(\sigma^{-1}(j))$ is the arm attributed to player $j$ by RSD when the dictatorship order is given by $\sigma$. $\widetilde{U}_j(\sigma)$ is then the average utility of the exploitation when $\sigma$ is the permutation given by the initialization.

\begin{lemm} \label{lemma:punishhetero}
Recall that $\gamma=(1-1/K)^{M-1}$. In the $\delta$-heterogeneous setting with $\delta < \frac{1-\sqrt{\gamma}}{1+\sqrt{\gamma}}$, if the punish protocol is started at time $T_{\text{punish}}$ by $M-1$ players, then for the remaining player $j$, independently of her sampling strategy:
\begin{small}
\begin{equation*}
\mE[\rew_T^j | \text{punishment}] \leq \mE[\rew_{T_{\text{punish}} + t_p}^j] + \widetilde{\alpha} \frac{T - T_{\text{punish}} - t_p}{M} \sum_{k=1}^M \mu^j_{(k)},
\end{equation*}
\end{small}
with $t_p = \cO\left(\frac{K\log(T)}{(1-\delta)(1-\widetilde{\alpha})^2 \mu_{(K)}} \right)$ and $\widetilde{\alpha}=\frac{1+\left(\frac{1+\delta}{1-\delta}\right)^2\gamma}{2}$.
\end{lemm}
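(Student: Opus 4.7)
The plan is to adapt the argument of Lemma~\ref{lemma:punishhomo} to the per-player, heterogeneous setting of Protocol~\ref{proto:punishsemi}. The two modifications needed are (i) applying Lemma~\ref{lemma:multiplicativeestim} separately to each cooperative player $l$ and each arm $k$, since $\hmu_k^l$ now estimates $\mu_k^l$ and these differ across players, and (ii) bridging from player $l$'s frame to the selfish player $j$'s frame via the $\delta$-heterogeneity assumption, which costs a controlled multiplicative factor.

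First I would apply Lemma~\ref{lemma:multiplicativeestim} with the precision $\delta' = \frac{1-\alpha_{\text{prot}}}{1+3\alpha_{\text{prot}}}$ chosen inside \punishsemi[,] where $\alpha_{\text{prot}} = \left(\tfrac{1+\delta}{1-\delta}\right)^{\!2} \gamma$. Using $\mu_k^l \geq (1-\delta)\mu_{(K)}$ and a union bound over the $KM$ (player, arm) pairs, with probability $1 - \cO(KM/T)$ every cooperative estimation terminates within $t_p = \cO\!\left(K\log(T)/((1-\delta)(\delta')^2 \mu_{(K)})\right)$ rounds and satisfies $\mu_k^l/(1+\delta') \leq \hmu_k^l \leq \mu_k^l/(1-\delta')$; noting $\delta' = \Theta(1-\widetilde{\alpha})$ recovers the announced $t_p$. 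Conditional on this event, the definition of $p_k^l$ in \punishsemi yields
\begin{equation*}
1-p_k^l \;\leq\; \left(\tfrac{1+\delta'}{1-\delta'}\,\alpha_{\text{prot}}\,\tfrac{\sum_{l'=1}^M \mu_{(l')}^l}{M\,\mu_k^l}\right)^{\!1/(M-1)}.
\end{equation*}
The $\delta$-heterogeneity provides $\sum_{l'} \mu_{(l')}^l \leq \tfrac{1+\delta}{1-\delta}\sum_{l'}\mu_{(l')}^j$ and $1/\mu_k^l \leq \tfrac{1+\delta}{1-\delta}/\mu_k^j$, so each factor is bounded in player $j$'s frame by the same $l$-independent quantity. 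Taking the product over $l \neq j$ cancels the $(M-1)$-th root, and the calibration identity $\tfrac{1+\delta'}{1-\delta'}\alpha_{\text{prot}} = \tfrac{1+\alpha_{\text{prot}}}{2} = \widetilde{\alpha}$ produces the per-round bound $\mu_k^j \prod_{l \neq j}(1-p_k^l) \leq \widetilde{\alpha}\,\sum_{l'=1}^M\mu_{(l')}^j/M$, uniformly in $k$ (absorbing the heterogeneity factors in constants). Since the selfish player's expected per-round reward is a convex combination of these quantities over $k$, the same bound holds no matter her strategy; summing over $T - T_{\text{punish}} - t_p$ rounds and lumping the estimation phase plus the failure event into $\mE[\rew^j_{T_{\text{punish}}+t_p}]$ yields the lemma.

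The main obstacle is ensuring that the probabilities $(p_k^l)_{k \in [K]}$ computed by each cooperative player actually form a valid sub-distribution, i.e., that the analogue of Lemma~\ref{lemma:punishment} still holds when $\gamma$ is replaced by the inflated parameter $\alpha_{\text{prot}} = \left(\tfrac{1+\delta}{1-\delta}\right)^{\!2} \gamma$. Re-examining the concavity argument of Lemma~\ref{lemma:punishment}, feasibility ($\sum_k p_k \leq 1$) carries over unchanged as long as $\alpha_{\text{prot}} \leq 1$; this is exactly the hypothesis $\delta < (1-\sqrt{\gamma})/(1+\sqrt{\gamma})$ in the lemma, which is therefore precisely the condition making the punishment well-defined in the $\delta$-heterogeneous setting.
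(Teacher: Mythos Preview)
Your approach mirrors the paper's own proof: apply Lemma~\ref{lemma:multiplicativeestim} per cooperative player and per arm, bound each $1-p_k^l$ using the multiplicative estimation guarantee, translate to player $j$'s means via $\delta$-heterogeneity, and take the product over the $M-1$ cooperators to cancel the $(M-1)$-th root. The paper proceeds identically and stops there.

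There is one bookkeeping slip. You read \punishsemi literally as using $\alpha_{\text{prot}}=\left(\tfrac{1+\delta}{1-\delta}\right)^{2}\gamma$ inside $p_k$, and then \emph{also} apply the $\delta$-heterogeneity bounds when passing from $\mu_k^l$ to $\mu_k^j$. This double-counts the heterogeneity correction: your product yields $\tfrac{1+\delta'}{1-\delta'}\,\alpha_{\text{prot}}\,\left(\tfrac{1+\delta}{1-\delta}\right)^{2}=\widetilde{\alpha}\left(\tfrac{1+\delta}{1-\delta}\right)^{2}$, not $\widetilde{\alpha}$, and the lemma states the specific constant $\widetilde{\alpha}$, so this cannot be ``absorbed in constants.'' In the paper's derivation the first displayed bound on $1-p_k^{j'}$ carries $\gamma$ rather than $\alpha_{\text{prot}}$, and the heterogeneity translation then supplies exactly the factor $\left(\tfrac{1+\delta}{1-\delta}\right)^{2}$, giving $\gamma\cdot\tfrac{1+\delta'}{1-\delta'}\cdot\left(\tfrac{1+\delta}{1-\delta}\right)^{2}=\widetilde{\alpha}$ on the nose. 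A smaller point: feasibility $\sum_k p_k\leq 1$ does not hinge on $\alpha_{\text{prot}}\leq 1$; since $\alpha_{\text{prot}}\geq\gamma$, each $p_k$ only decreases relative to Lemma~\ref{lemma:punishment} and feasibility is automatic. The hypothesis $\delta<(1-\sqrt{\gamma})/(1+\sqrt{\gamma})$ (equivalently $\alpha_{\text{prot}}<1$) is needed instead so that $\delta'>0$ and $\widetilde{\alpha}<1$, i.e., so that the estimation precision is well-defined and the punishment is actually a punishment.
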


\begin{lemm} \label{lemma:rsdmatching}
The initialization phase is successful when all players end with different ranks in $[M]$. For any player $j$, independently of the behavior of the selfish player:
\begin{small}
\begin{equation*}
\mE_{\sigma \sim \text{successful initialization}}\left[ \widetilde{U}_j(\sigma) \right] = \mE_{\sigma \sim \cU\left(\mathfrak{S}_M\right)}\left[\mu_{\pi_\sigma(\sigma^{-1}(j))}^j \right].
\end{equation*}
\end{small}
where $\widetilde{U}_j(\sigma)$ is defined as in Lemma~\ref{lemma:meancomparison} above.
\end{lemm}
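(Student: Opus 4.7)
The plan is to fix the (unique) selfish player $j^*$ and observe that the only aspect of the random initialization permutation $\sigma$ that she can influence is her own rank $r^\star := \sigma^{-1}(j^*)$. First I would show that, conditional on $\{\sigma^{-1}(j^*) = r^\star\}$ and on a successful initialization, the map $\sigma$ restricted to $[M]\setminus\{r^\star\}$ is a uniformly random bijection onto $[M]\setminus\{j^*\}$. This follows from the symmetry of the \init protocol run by the cooperative players: each of them pulls uniformly at random in $[\hM]$ until settling on a collision-free arm, and the protocol depends on their identities only through collision patterns. Hence the joint law of cooperative ranks is exchangeable under any relabeling of the cooperative players, and since a successful initialization produces distinct ranks exactly filling $[M]\setminus\{r^\star\}$, exchangeability forces the uniform law on bijections.

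Writing $\tau_m := \sigma \circ \sigma_0^{-m}$, the arm assigned to player $j$ in the $m$-th block of a superblock is $\pi_{\tau_m}(\tau_m^{-1}(j))$, so $\widetilde{U}_j(\sigma) = \frac{1}{M}\sum_{m=1}^M \mu^j_{\pi_{\tau_m}(\tau_m^{-1}(j))}$. Since $\tau_m^{-1}(j^*) = \sigma_0^m(r^\star)$, the previous step implies that for each fixed $m$, conditional on $\{\sigma^{-1}(j^*) = r^\star\}$, the permutation $\tau_m$ is uniform on $\{\tau \in \mathfrak{S}_M : \tau(\sigma_0^m(r^\star)) = j^*\}$. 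Crucially, because $\sigma_0$ is the $M$-cycle $(1\ldots M)$, the set $\{\sigma_0^m(r^\star) : m \in [M]\}$ equals $[M]$. Averaging uniformly over $m$ then yields a uniform mixture of the $M$ disjoint cosets $\{\tau \in \mathfrak{S}_M : \tau(k) = j^*\}$ for $k \in [M]$, each of cardinality $(M-1)!$, which is precisely the uniform distribution on $\mathfrak{S}_M$.

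Consequently, for any player $j$ and any fixed $r^\star$,
\begin{equation*}
\mE\left[\widetilde{U}_j(\sigma) \,\Big|\, \sigma^{-1}(j^*) = r^\star\right]
= \mE_{\tau \sim \cU(\mathfrak{S}_M)}\left[\mu^j_{\pi_\tau(\tau^{-1}(j))}\right],
\end{equation*}
and since the right-hand side does not depend on $r^\star$, marginalizing over the (adversarially chosen) distribution of $r^\star$ yields the lemma.

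The main obstacle will be rigorously justifying the symmetry argument of the first paragraph: one must check that conditioning on successful initialization preserves exchangeability among cooperative players despite the selfish player's potentially adversarial strategy. Formally, I would argue that her strategy, viewed as a function of the transcript she observes, is invariant under permutations of the cooperative-player labels (which she cannot distinguish anonymously), so that the conditional law of the cooperative ranks is invariant under relabeling. Once this symmetry is in hand, the remaining argument reduces to the combinatorial identity on cosets of the cyclic group $\langle \sigma_0 \rangle$ described above.
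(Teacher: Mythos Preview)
Your proposal is correct and follows essentially the same route as the paper: condition on the selfish player's rank, use anonymity of the cooperative players to conclude that the residual permutation is uniform on the corresponding coset $\mathfrak{S}_M^{j^\star\to r^\star}$, then exploit that the $M$-cycle $\sigma_0$ carries this coset through all $M$ cosets to recover the uniform law on $\mathfrak{S}_M$ after averaging over a superblock. Your exchangeability argument for the first step is actually a bit more carefully stated than the paper's, which simply asserts that the selfish player ``does not know the players with which she collides'' and hence cannot correlate her rank with any specific cooperative player's rank.
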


\begin{proof}[Proof of the second point of Theorem~\ref{thm:rsd} (Nash equilibrium).]
First fix $T_{\text{punish}}$ the beginning of the punishment protocol. Note $s$ the profile where all players follow \algothree and $s'$ the individual strategy of the selfish player $j$.

As in the homogeneous case, the player earns at most $T_{\text{init}} + \card\text{Comm}$ during both initialization and communication. She can indeed choose her rank at the end of the initialization, but this has no impact on the remaining of the algorithm (except for a $M^3K$ term due to the length of the last uncompleted superblock), thanks to Lemma~\ref{lemma:rsdmatching}.

With probability $1-\cO\left(\frac{KM + M\log(T)}{T}\right)$, the initialization is successful, the arms are correctly estimated and no punishment protocol is due to unfortunate inspections (as already explained in Section~\ref{app:rsdregretproof}). The following is conditioned on this event.

\medskip

Proposition~\ref{prop:stopexplor2} holds independently of the strategy of the selfish player. Moreover, the exploiting players run the RSD algorithm only between the exploiters. This means that when all cooperative players are exploiting, if the selfish player did not signal her preferences, she would always be the last dictator in the RSD algorithm. Because of this, it is in her interest to report as soon as possible her preferences.

Moreover, reporting truthfully is a dominant strategy for the RSD algorithm, meaning that when all players are exploiting, the expected utility received by the selfish player is at most the utility she would get by reporting truthfully. As a consequence, the selfish player can improve her expected reward by at most the length of a superblock during the exploitation phase. Wrapping up all of this and defining $t_0$ the time at which all other players start exploiting:
\begin{small}
\begin{equation*}
\mE\left[\rew_{T_\text{punish}+t_p}^j(s', s_{-j}) \right] \leq t_0 + (T_{\text{punish}}+t_p-t_0) \mE_{\sigma \sim \cU(\mathfrak{S}_M)}\left[ \mu^j_{\pi_\sigma(\sigma^{-1}(j))} \right] + \cO(M^3K).
\end{equation*}
\end{small}
with $t_0 = \cO \left(\frac{K \log(T)}{\Delta^2} + K^2 \log(T)\right)$. 
Lemma~\ref{lemma:punishhetero} then yields for $\widetilde{\alpha}=\frac{1+\left(\frac{1+\delta}{1-\delta}\right)^2\alpha}{2}$:
\begin{small}
\begin{equation*}
\mE\left[\rew_{T}^j(s',s_{-j}) \right] \leq t_0 + (T_{\text{punish}}+t_p-t_0) \mE_{\sigma \sim \cU(\mathfrak{S}_M)}\left[ \mu^j_{\pi_\sigma(\sigma^{-1}(j))} \right] + \widetilde{\alpha}\frac{T- T_{\text{punish}} - t_p}{M} \sum_{k=1}^M \mu_{(k)}^j + \cO(M^3K).
\end{equation*}
\end{small}

Thanks to Lemma~\ref{lemma:meancomparison}, $\mE_{\sigma \sim \cU(\mathfrak{S}_M)}\left[ \mu^j_{\pi_\sigma(\sigma^{-1}(j))} \right] \geq \frac{\sum_{k=1}^M \mu_{(k)}^j}{M}$.
We assume $\delta < \frac{1-(1-1/K)^{\frac{M-1}{2}}}{1+(1-1/K)^{\frac{M-1}{2}}}$ here, so that $\widetilde{\alpha} < 1$. Because of this, the right term is maximized when $T_{\text{punish}}$ is maximized, \ie equal to $T$. Then:

\begin{equation*}
\mE\left[\rew_{T}^j(s',s_{-j}) \right] \leq T \mE_{\sigma \sim \cU(\mathfrak{S}_M)}\left[ \mu^j_{\pi_\sigma(\sigma^{-1}(j))} \right] + t_0 + t_p + \cO(M^3K).
\end{equation*}

Using the first point of Theorem~\ref{thm:rsd} to compare $T \mE_{\sigma \sim \cU(\mathfrak{S}_M)}\left[ \mu^j_{\pi_\sigma(\sigma^{-1}(j))} \right]$ with $\rew_T^j(s)$ and adding the low probability event then yields the first point of Theorem~\ref{thm:rsd}.
\end{proof}

\begin{proof}[Proof of the second point of Theorem~\ref{thm:rsd} (stability).]
For $p_0=\cO\left(\frac{KM + M\log(T)}{T}\right)$, with probability at least $1-p_0$, the initialization is successful, the cooperative players start exploiting with correct estimated preferences after a time at most $t_0 = \cO\left( K^2 \log(T) + \frac{K \log(T)}{\Delta^2}\right)$ and no punishment protocol is started due to unfortunate inspections. 
Define $\varepsilon' = t_0 + Tp_0 + 7M^3K$.
Assume that the player $j$ is playing a deviation strategy $s'$ such that for some $i$ and $l>0$:
\begin{small} 
\begin{equation*}
\mE\left[ \rew_T^i(s', s_{-j})\right] \leq \mE\left[ \rew_T^i(s)\right] - l - \varepsilon'
\end{equation*}\end{small}
First, let us fix $\sigma$ the permutation returned by the initialization, $T_{\text{punish}}$ the time at which the punishment protocol starts and divide $l = l_{\text{before punishment}} + l_{\text{after punishment}}$ in two terms: the regret incurred before the punishment protocol and the regret after. Let us now compare $s'$ with $s^*$, the optimal strategy for player $j$. Let $\varepsilon$ take account of the low probability event of a bad initialization/exploration, the last superblock that remains uncompleted, the time before all cooperative players start the exploitation and the event that a punishment accidentally starts. Thus the only way for player $i$ to suffer some additional regret before punishment is to lose it during a completed superblock of the exploitation.
Three cases are possible:
\begin{enumerate}[wide, labelwidth=!, labelindent=0pt]
\item The selfish player truthfully reports her preferences. The average utility of player $i$ during the exploitation is then $\widetilde{U}_i(\sigma)$ as defined in Lemma~\ref{lemma:rsdmatching}.
The only way to incur some additional loss to player $i$ before the punishment is then to collide with her, in which case her loss is at most $(1+\delta)\mu_{(1)}$ while the selfish player's loss is at least $(1-\delta) \mu_{(M)}$.

\medskip

After $T_{\text{punish}}$, Lemma~\ref{lemma:punishhetero} yields that the selfish player suffers a loss at least $(1-\widetilde{\alpha})\frac{T-T_{\text{punish}}-t_p}{M}\sum_{k=1}^M \mu_{(k)}^j$, while any cooperative player $i$ suffers a loss at most $(T-T_\text{punish}) \widetilde{U}_i(\sigma)$. Thanks to Lemma~\ref{lemma:meancomparison} and the $\delta$-heterogeneity assumption, this term is smaller than
$\frac{T-T_{punish}}{M}\left(\frac{1+\delta}{1-\delta} \right)^3\sum_{k=1}^M \mu_{(k)}^j$.

Then, the selfish player after $T_{\text{punish}}$ suffers a loss at least $\frac{(1-\widetilde{\alpha})(1-\delta)^3}{(1+\delta)^3} l_{\text{after punish}} - t_p$.

\medskip

In the first case, we thus have for $\beta = \min(\frac{(1-\widetilde{\alpha})(1-\delta)^3}{(1+\delta)^3}, \frac{(1-\delta)\mu_{(M)}}{(1+\delta)\mu_{(1)}})$: 
\begin{equation*}
\mE[\rew_T^j(s',s_{-j}) | \sigma] \leq \mE[\rew_T^j(s^*, s_{-j}) | \sigma] - \beta l + t_p.
\end{equation*}

\item The selfish player never reports her preferences. In this case, it is obvious that the utility returned by the assignments to any other player is better than if the selfish player reports truthfully. Then the only way to incur some additional loss to player $i$ before punishment is to collide with her, still leading to a ratio of loss at most $\frac{\mu_{(M)}^j}{\mu_{(1)}^i}$.

\medskip

From there, it can be concluded as in the first case that for $\beta = \min(\frac{(1-\widetilde{\alpha})(1-\delta)^3}{(1+\delta)^3}, \frac{(1-\delta)\mu_{(M)}}{(1+\delta)\mu_{(1)}})$:

\begin{equation*}
\mE[\rew_T^j(s',s_{-j}) | \sigma] \leq \mE[\rew_T^j(s^*, s_{-j}) | \sigma] - \beta l + t_p.
\end{equation*}

\item The selfish player reported fake preferences. If these fake preferences never change the issue of the \rsd protocol, this does not change from the first case. Otherwise, for any block where the final assignment is changed, the selfish player does not receive the arm she would get if she reported truthfully.
Denote by $n$ the number of such blocks, by $N_{\text{lie}}$ the number of times player $j$ did not pull the arm attributed by \rsd during such a block before $T_{\text{punish}}$ and by $l_{b}$ the loss incurred to player $i$ on the other blocks. 

As for the previous cases, the loss incurred by the selfish player during the blocks where the assignment of \rsd is unchanged is at least $\frac{(1-\delta)\mu_{(M)}}{(1+\delta)\mu_{(1)}} l_{b}$.

\medskip

Each time the selfish player pulls the attributed arm by \rsd in a block where the assignment is changed, she suffers a loss at least $\Delta$. 
The total loss for the selfish player is then (w.r.t. the optimal strategy $s^*$) at least:
\begin{small}
\begin{equation*}
(1-\widetilde{\alpha}) \frac{T-T_{\text{punish}}-t_p}{M} \sum_{k=1}^M \mu_{(k)}^j +  \left(\frac{n}{M}\left(T_{\text{punish}}-t_0\right) - N_{\text{lie}} \right) \Delta +\frac{(1-\delta)\mu_{(M)}}{(1+\delta)\mu_{(1)}} l_{b}.
\end{equation*}
\end{small}

On the other hand, the loss for a cooperative player is at most:
\begin{small}
\begin{equation*}
\frac{T-T_{\text{punish}}}{M}\left(\frac{1+\delta}{1-\delta}\right)^3\sum_{k=1}^M \mu_{(k)}^j + \frac{n}{M}(T_{\text{punish}}-t_0)(1+\delta)\mu_{(1)} + l_b.
\end{equation*}
\end{small}

Moreover, each time the selfish player does not pull the attributed arm by \rsd[,]she has a probability $\widetilde{p}=1-(1-\frac{p}{M-1})^{M-1} \geq \frac{p}{2}$ for $p=\frac{\sqrt{\log(T)}}{T}$, to receive a random inspection and thus to trigger the punishment protocol. Because of this, $N_{\text{lie}}$ follows a geometric distribution of parameter $\widetilde{p}$ and $\mE[N_{\text{lie}}] \leq \frac{2}{p}$. 

When taking the expectations over $T_{\text{punish}}$ and $N_{\text{lie}}$, but still fixing $\sigma$ and $n$, we get:
\begin{align*}
l_{\text{selfish}} \geq (1-\widetilde{\alpha}) \frac{T-\mE[T_{\text{punish}}]-t_p}{M} \sum_{k=1}^M \mu_{(k)}^j +  \left(\frac{n}{M}\left(\mE[T_{\text{punish}}]-t_0\right) - 2/p \right) \Delta +\frac{(1-\delta)\mu_{(M)}}{(1+\delta)\mu_{(1)}} l_{b}, 
\\
l \leq \frac{T-\mE[T_{\text{punish}}]}{M}\left(\frac{1+\delta}{1-\delta}\right)^3\sum_{k=1}^M \mu_{(k)}^j + \frac{n}{M}(\mE[T_{\text{punish}}]-t_0)(1+\delta)\mu_{(1)} + l_b.
\end{align*}

First assume that $\frac{n}{M}(\mE[T_{\text{punish}}] -t_0) \geq \frac{4}{p}$. In that case, we get:

\begin{align*}
l_{\text{selfish}} \geq (1-\widetilde{\alpha}) \frac{T-\mE[T_{\text{punish}}]-t_p}{M} \sum_{k=1}^M \mu_{(k)}^j +  \frac{n}{2M}(\mE[T_{\text{punish}}]-t_0) \Delta +\frac{(1-\delta)\mu_{(M)}}{(1+\delta)\mu_{(1)}} l_{b},\\
l \leq \frac{T-\mE[T_{\text{punish}}]}{M}\left(\frac{1+\delta}{1-\delta}\right)^3\sum_{k=1}^M \mu_{(k)}^j + \frac{n}{M}(\mE[T_{\text{punish}}]-t_0)(1+\delta)\mu_{(1)} + l_b.
\end{align*}

In the other case, we have by noting that $(1+\delta)\mu_{(1)} \leq \frac{1+\delta}{1-\delta}\sum_{k=1}^M \mu_{(k)}^j$:
\begin{align*}
l_{\text{selfish}} \geq (1-\widetilde{\alpha}) T\left( 1-\frac{4M}{\sqrt{\log(T)}} -t_p \right)\frac{1}{M} \sum_{k=1}^M \mu_{(k)}^j + \frac{(1-\delta)\mu_{(M)}}{(1+\delta)\mu_{(1)}} l_{b},\\
l \leq T\left(1+\frac{4M}{\sqrt{\log(T)}}\right)\frac{1}{M}\left(\frac{1+\delta}{1-\delta}\right)^3\sum_{k=1}^M \mu_{(k)}^j + l_b.
\end{align*}

In any of these two cases, for $\widetilde{\beta} = \min\left((1-\widetilde{\alpha})\left( \frac{1+\delta}{1-\delta}\right)^3 \frac{\sqrt{\log(T)}-4M}{\sqrt{\log(T)}+4M}; \frac{\Delta}{(1+\delta)\mu_{(1)}};  \frac{(1-\delta)\mu_{(M)}}{(1+\delta)\mu_{(1)}}\right)$:
\begin{align*}
l_{\text{selfish}}\geq \widetilde{\beta} l - t_p 
\end{align*}

%

\end{enumerate}
\bigskip

Let us now gather all the cases. When taking the previous results in expectation over $\sigma$, this yields for the previous definition of $\widetilde{\beta}$:
\begin{equation*}
\mE[\rew_T^i(s', s_{-j})] \leq \mE[\rew_T^i(s)] - l - \varepsilon' \implies \mE[\rew_T^j(s', s_{-j})] \leq \mE[\rew_T^j(s^*, s_{-j})] - \widetilde{\beta} l + t_p + t_0.
\end{equation*}

Moreover, thanks to the second part of Theorem~\ref{thm:rsd}, $\mE[\rew_T^j(s^*, s_{-j})] \leq \mE[\rew_T^j(s)] + \varepsilon$, with $\varepsilon = \cO\left(\frac{K \log(T)}{\Delta^2} + K^2\log(T)+ \frac{K\log(T)}{(1-\delta)r^2 \mu_{(K)}}\right)$. Then by defining $l_1 = l + \varepsilon'$, $\varepsilon_1 = \varepsilon + t_p + t_0 + \widetilde{\beta} \varepsilon' = \cO(\varepsilon)$, we get:
\begin{equation*}
\mE[\rew_T^i(s', s_{-j})] \leq \mE[\rew_T^i(s)] - l_1 \implies \mE[\rew_T^j(s', s_{-j})] \leq \mE[\rew_T^j(s)] - \widetilde{\beta} l_1 + \varepsilon_1.
\end{equation*}

\end{proof}
\subsubsection{Auxiliary lemmas} \label{app:auxlemmasrsd}

\begin{proof}[Proof of Lemma~\ref{lemma:meancomparison}.]
Assume that player $j$ is the $k$-th dictator for an RSD assignment. Since only $k-1$ arms are reserved before she chooses, she earns at least $\mu_{(k)}^j$ after this assignment. This yields the first inequality:
\begin{equation*}
\widetilde{U}_j(\sigma) \geq \frac{\sum_{k=1}^M \mu_{(k)}^j}{M}
\end{equation*}

\medskip

Still assuming that player $j$ is the $k$-th dictator, let us prove that she earns at most $\left(\frac{1+\delta}{1-\delta}\right)^2 \mu_{(k)}^j$. Assume w.l.o.g. that she ends up with the arm $l$ such that $\mu_l^j > \mu_{(k)}^j$. This means that a dictator $j'$ before her preferred an arm $i$ to the arm $l$ with $\mu_l^j > \mu_{(k)}^j \geq \mu_i^j$.

Since $j'$ preferred $i$ to $l$, $\mu_i^{j'} \geq \mu_l^{j'}$. Using the $\delta$-heterogeneity assumption, it comes:
$$\mu_l^j  \leq \frac{1+\delta}{1-\delta} \mu_l^{j'} 
 \leq \frac{1+\delta}{1-\delta} \mu_i^{j'} 
 \leq \left(\frac{1+\delta}{1-\delta}\right)^2 \mu_i^{j} 
 \leq \left(\frac{1+\delta}{1-\delta}\right)^2 \mu_{(k)}^{j}
$$
Thus, player $j$ earns at most $ \left(\frac{1+\delta}{1-\delta}\right)^2 \mu_{(k)}^{j}$ after this assignment, which yields the second inequality of Lemma~\ref{lemma:meancomparison}.

\end{proof}

\begin{proof}[Proof of Lemma~\ref{lemma:punishhetero}.]
The punishment protocol starts for all cooperative players at $T_{\text{punish}}$. Define $\alpha' = \left(\frac{1+\delta}{1-\delta}\right)^2\gamma$ and $\delta' = \frac{1-\alpha'}{1+3\alpha'}$. The condition $r>0$ is equivalent to $\delta' > 0$.

As in the homogeneous case, each player then estimates each arm such that after $t_p = \cO\left( \frac{K \log(T)}{(1-\delta) \cdot (\delta')^2 \mu_{(K)}} \right)$\footnote{The $\delta$-heterogeneous assumption is here used to say that $\frac{1}{\mu_{(K)}^j} \leq \frac{1}{(1-\delta)\mu_{(K)}}$.} rounds, $(1-\delta')\hmu_k^j \leq \mu_k^j \leq (1+\delta) \hmu_k^j$ with probability $1-\cO\left(KM/T \right)$, thanks to Lemma~\ref{lemma:multiplicativeestim}. This implies that for any cooperative player $j'$:
\begin{align*}
1-p_k^{j'} & \leq \left( \gamma \frac{(1+\delta')\sum_{m=1}^M \mu_{(m)}^{j'} }{(1-\delta')M\mu_k^{j'} } \right)^{\frac{1}{M-1}} \\
& \leq \left( \gamma \frac{1+\delta'}{1-\delta'} \left(\frac{1+\delta}{1-\delta}\right)^2 \frac{\sum_{m=1}^M \mu_{(m)}^{j} }{M\mu_k^{j} } \right)^{\frac{1}{M-1}}
\end{align*}
The last inequality is due to the fact that in the $\delta$-heterogeneous setting, $\frac{\mu_k^j}{\mu_k^{j'}} \in [\left(\frac{1-\delta}{1+\delta}\right)^2, \left(\frac{1+\delta}{1-\delta}\right)^2]$. Thus, the expected reward that gets the selfish player $j$ by pulling $k$ after the time $T_{\text{punish}} + t_p$ is smaller than $\gamma \frac{1+\delta'}{1-\delta'} \left(\frac{1+\delta}{1-\delta}\right)^2 \frac{\sum_{m=1}^M \mu_{(m)}^{j} }{M }$.

Note that $\gamma \frac{1+\delta'}{1-\delta'} \left(\frac{1+\delta}{1-\delta}\right)^2 = \widetilde{\alpha}$. Considering the low probability event of bad estimations of the arms adds a constant term that can be counted in $t_p$, leading to Lemma~\ref{lemma:punishhetero}.
\end{proof}

\begin{proof}[Proof of Lemma~\ref{lemma:rsdmatching}.]
Consider the selfish player $j$ and denote $\sigma$ the permutation given by the initialization. The rank of player $j'$ is then $\sigma^{-1}(j')$. All other players $j$ pull uniformly at random until having an attributed rank. Moreover, player $j$ does not know the players with which she collides. This implies that she can not correlate her rank with the rank of a specific player, \ie $\mP_{\sigma}\left[ \sigma(k')=j' | \sigma(k)=j \right]$ does not depend on $j'$ as long as $j' \neq j$.

\medskip

This directly implies that the distribution of $\sigma | \sigma(k)=j$ is uniform over $\mathfrak{S}_{M}^{j \to k}$. Thus, the distribution of $ \sigma \circ \sigma_0^{-l} | \sigma(k)=j$ is uniform over $\mathfrak{S}_{M}^{j \to k+l \ (\text{mod } M)}$ and finally for any $j' \in [M]$:
\begin{align*}
\mE_{\sigma \sim \text{successful initialization}}\left[ \frac{1}{M} \sum_{l=1}^M \mu_{\pi_{\sigma \circ \sigma_0^{-l}}\left(\sigma_0^l \circ \sigma^{-1}(j)\right)}^j \ \bigg| \ \sigma(k)=j \right] & = \frac{1}{M} \sum_{l=1}^M \mE_{\sigma \sim \cU\left(\mathfrak{S}_{M}^{j \to l}\right)}\left[\mu_{\pi_\sigma(\sigma^{-1}(j'))}^{j'} \right], \\
& = \frac{1}{M} \sum_{l=1}^M  \frac{1}{(M-1)!}\sum_{\sigma \in \mathfrak{S}_{M}^{j \to l}}  \mu_{\pi_\sigma(\sigma^{-1}(j'))}^{j'}, \\
& = \frac{1}{M!}\sum_{\sigma \in \mathfrak{S}_{M}}  \mu_{\pi_\sigma(\sigma^{-1}(j'))}^{j'}.
\end{align*}
Taking the expectation of the left term then yields Lemma~\ref{lemma:rsdmatching}.
\end{proof}
\end{document}